\documentclass[accepted]{uai2025} 
                        

\usepackage[american]{babel}

\usepackage{natbib} 
    \bibliographystyle{plainnat}
    
\usepackage{mathtools} 
\usepackage{booktabs} 
\usepackage{amssymb}
\usepackage{wrapfig}
\usepackage{tikz} 



\usepackage{amsthm}
\usepackage{microtype}
\usepackage{graphicx}
\usepackage{subfigure}
\usepackage{booktabs} 

\usepackage{hyperref}


\usepackage{soul}
\usepackage{wrapfig}
\usepackage{multirow}
\usepackage{booktabs} 
\usepackage{mathrsfs}
\usepackage{colortbl}
\usepackage{wrapfig,lipsum,booktabs}
\usepackage{microtype}
\usepackage{graphicx}
\usepackage{bbm}
\usepackage{subfigure}
\usepackage{booktabs} 
\usepackage{verbatim}
\usepackage{amsmath}
\usepackage{amsfonts}
\usepackage{multirow}
\usepackage{booktabs}
\usepackage{makecell}
\usepackage{colortbl}
\usepackage{xcolor}
\usepackage{caption}
\usepackage[flushleft]{threeparttable}
\usepackage{thmtools}
\usepackage{thm-restate}
\usepackage{tcolorbox}

\usepackage{hyperref}
\usepackage{algorithm}
\usepackage{algorithmic}


\usepackage{xcolor}

\usepackage{xcolor}

\newcommand{\x}{\mathbf{x}}
\newcommand{\y}{\mathbf{y}}
\newcommand{\F}{\mathbf{F}}

\newcommand{\bd}{\mathbf{d}}
\newcommand{\bu}{\mathbf{u}}


\newtheorem{lem}{Lemma}
\newtheorem{assump}{Assumption}
\newtheorem{defn}{Definition}

\newcommand{\alg}{$\mathsf{STIMULUS}~$}
\newcommand{\algns}{$\mathsf{STIMULUS}$}
\newcommand{\algm}{$\mathsf{STIMULUS}$-$\mathsf{M}~$}
\newcommand{\algmns}{$\mathsf{STIMULUS}$-$\mathsf{M}$}
\newcommand{\algp}{$\mathsf{STIMULUS}^+~$}
\newcommand{\algpns}{$\mathsf{STIMULUS}^+~$}
\newcommand{\algmp}{$\mathsf{STIMULUS}$-$\mathsf{M}^+~$}
\newcommand{\algmpns}{$\mathsf{STIMULUS}$-$\mathsf{M}^+$}

\title{\algns:~Achieving Fast Convergence and Low Sample Complexity in Stochastic Multi-Objective Learning}

%
%

%

\author[4]{Zhuqing Liu$^{1}$, Chaosheng Dong$^{2}$, Michinari Momma$^{2}$, Simone Shao$^{2}$, \\ Shaoyuan Xu$^{2}$, Yan Gao$^{2}$, Haibo Yang$^{3}$, Jia Liu}

\affil[1]{%
    Computer Science and Engineering\\
    University of North Texas\\
    Denton, TX, USA
}
\affil[2]{%
    Amazon\\
    Seattle, WA, USA
}
\affil[3]{%
    Computing and Information Sciences\\
    Rochester Institute of Technology\\
    Rochester, NY, USA
  }
\affil[4]{%
    Electrical and Computer Engineering\\
    The Ohio State University\\
    Columbus, OH, USA
  }

  \begin{document}
  
\maketitle


\begin{abstract}
Recently, multi-objective optimization (MOO) has gained attention for its broad applications in ML, operations research, and engineering.
However, MOO algorithm design remains in its infancy and many existing MOO methods suffer from unsatisfactory convergence rate and sample complexity performance.
To address this challenge, in this paper, we propose an algorithm called \alg (\ul{st}ochastic path-\ul{i}ntegrated \ul{mul}ti-gradient rec\ul{u}rsive e\ul{s}timator), a new and robust approach for solving MOO problems. 
Different from the traditional methods,
\alg introduces a simple yet powerful recursive framework for updating stochastic gradient estimates to improve convergence performance with low sample complexity.
In addition, we introduce an enhanced version of \algns, termed \algmns, which incorporates a momentum term to further expedite convergence.
We establish $\mathcal{O}(1/T)$ convergence rates of the proposed methods for non-convex settings and $\mathcal{O}(\exp{-\mu T})$ for strongly convex settings, where  $T$ is the total number of iteration rounds. Additionally, we achieve the state-of-the-art $O\left(n+\sqrt{n}\epsilon^{-1}\right)$  sample complexities for non-convex settings and $\mathcal{O}\left(n+ \sqrt{n} \ln ({\mu/\epsilon})\right)$ for strongly convex settings, where $\epsilon>0$ is a desired stationarity error.
Moreover, to alleviate the periodic full gradient evaluation requirement in \alg and \algmns, we further propose enhanced versions with adaptive batching called \algpns/ \algmp and provide their theoretical analysis.
\end{abstract}


\section{Introduction} \label{sec: intro}

{\bf 1) Background of multi-objective learning:}
Machine learning (ML) has always heavily relied on optimization formulations and algorithms.
While traditional ML problems generally focus on minimizing a single loss function, many emergent complex-structured multi-task ML problems require balancing {\em multiple} objectives that are often conflicting (e.g., multi-agent reinforcement learning~\citep{parisi2014policy}, multi-task fashion representation learning~\citep{jiao2022fine,jiao2023learning}, multi-task recommendation system~\citep{chen2019co,zhou2023multi}, multi-model learning in video captioning~\citep{pasunuru-bansal-2017-multi}, and multi-label learning-to-rank~\citep{mlltr2023kdd,querymlltr2023kdd}).
Such ML applications necessitate solving {\em multi-objective} optimization (MOO) problems, which can be expressed as:
\begin{small}
    \begin{align} \label{eq: moo}
\min_{\x \in \mathcal{D}} \F(\x) := [f_1(\x), \cdots, f_S(\x) ],
\end{align}
\end{small}
where $\x \in \mathcal{D} \subseteq \mathbb{R}^d$ is the model parameters.
Here, each $f_s$ denotes the objective function of task $s \in [S]$,
$f_s(\x)= \frac{1}{n}\sum_{j=1}^n f_{sj} (\x ; \xi_{sj})$, where $n$ denotes the total number of samples, $\xi_{sj}$ denotes the $j$-th sample for task $s$.
%
%
However, unlike traditional single-objective optimization, there may not exist a common $\x$-solution in MOO that can simultaneously minimize all objective functions.
Instead, a more relevant optimality criterion in MOO is the notion of \textit{Pareto-optimal solutions}, where no objective can be further improved without sacrificing other objectives.
Moreover, in settings where the set of objective functions are non-convex, searching for Pareto-optimal solutions is intractable in general.
In such scenarios, the goal of MOO is usually weakened to finding a {\em Pareto-stationary solution}, where no improving direction exists for any objective without sacrificing other objectives.
%

%

\begin{table*}[t!]
\centering
\begin{scriptsize}
\begin{threeparttable}
\caption{Convergence comparisons between MOO algorithms, where $n$ is the size of dataset; $\epsilon$ is the convergence error. Our proposed algorithms are marked in a shaded background.}
\label{tab}
\renewcommand{\arraystretch}{1.2}
\begin{tabular}{cccccc}
\toprule
\multirow{2}{*}{Algorithm} & \multirow{2}{*}{Multi-gradient} &  \multicolumn{2}{c}{Non-convex case} & \multicolumn{2}{c}{Strongly-Convex case} \\
\cmidrule(r){3-4} \cmidrule(l){5-6}
& & Rate & Sample Complexity & Rate & Sample Complexity \\
\midrule
  MGD~\citep{fliege2019complexity} & Deterministic & $\mathcal{O}\left(T^{-1}\right)$ & $\mathcal{O}\left(n \epsilon^{-1}\right)$ & $\mathcal{O}(\exp(-\mu T))$ & $\mathcal{O}\left( n\ln ({\mu/\epsilon})\right)$ \\
 \midrule 
SMGD~\citep{yang2022pareto} & Stochastic&$\mathcal{O}\left({{T^{-{1/2}}}}\right)$ & $\mathcal{O}\left(\epsilon^{-2}\right)$ & $\mathcal{O}\left(T^{-1}\right)$ & $\mathcal{O}\left( \epsilon^{-1}\right)$ \\\midrule MoCo~\citep{fernando2022mitigating} &Stochastic & $\mathcal{O}\left({{T^{-{1/2}}}}\right)$ & $\mathcal{O}\left(\epsilon^{-2}\right)$ & $\mathcal{O}\left(T^{-1}\right)$ & $\mathcal{O}\left( \epsilon^{-1}\right)$ 
\\\midrule MoCo+~\citep{10446038} &Stochastic & $\mathcal{O}\left({{T^{-{2/3}}}}\right)$ & $\mathcal{O}\left(\epsilon^{-{1.5}}\right)$ & - & -\\\midrule
 CR-MOGM~\citep{zhou2022on} &Stochastic &$\mathcal{O}\left({{T^{-{1/2}}}}\right)$ & $\mathcal{O}\left(\epsilon^{-2}\right)$ & $\mathcal{O}\left(T^{-1}\right)$ & $\mathcal{O}\left( \epsilon^{-1}\right)$ \\
\midrule
\arrayrulecolor{gray!20}
\rowcolor{gray!20}\textbf{ \algns/ \algm } &  Stochastic & {$\mathcal{O}\left(T^{-1}\right)$ }& {$\mathcal{O}\left(n+\sqrt{n}\epsilon^{-1}\right)$} & {$\mathcal{O}(\exp(-\mu T))$} & {$\mathcal{O}\left(n+ \sqrt{n} \ln ({\mu/\epsilon})\right)$} \\ \midrule
\rowcolor{gray!20}  \textbf{\algpns/ \algmp }& Stochastic  & {$\mathcal{O}\left(T^{-1}\right)$ }& {$\mathcal{O}\left(n+\sqrt{n}\epsilon^{-1}\right)$} & {$\mathcal{O}(\exp(-\mu T))$} & {$\mathcal{O}\left(n+ \sqrt{n} \ln ({\mu/\epsilon})\right)$} \\
\arrayrulecolor{black}
\bottomrule
\end{tabular}
\end{threeparttable}
\end{scriptsize}

\end{table*}

{\bf 2) Motivating application: Multi-label learning to rank (MLLTR) problem.} 
Problem~\eqref{eq: moo} can be applied to a number of interesting real-world problems. Here, we provide one concrete example to further motivate its practical relevance:

The learning to Rank (LTR) method is a common technique used to rank information based on relevance, but it often struggles with ambiguity because of the noisy nature of human-generated data, like product ratings. To tackle this, Multi-Label Learning to Rank (MLLTR) offers a more refined approach. MLLTR addresses the inherent challenges of traditional LTR methods by integrating multiple relevance criteria into the ranking model. This allows for a more comprehensive representation of diverse crucial objectives.

\begin{list}{\labelitemi}{\leftmargin=0.5em \itemindent=-0.2em \itemsep=-0.2em}
	\item {\em Learning to Rank:} 
Let $A$ be the training set, consisting of pairs $(\mathbf{a}_i,{b}_i)$ where $\mathbf{a}_i \in \mathbb{R}^d$ representing features, and $\mathbf{b}$ is the corresponding list of relevance labels $b_i$, and $ i = 1, \ldots, n $. We note that the lists $\mathbf{a}$ within the training set may not all be of the same length. $\mathbf{x}$ is the model parameter.

The goal of the learning-to-rank problem is to find a scoring function $f$ that optimizes a chosen Information Retrieval (IR) metric, such as Normalized Discounted Cumulative Gain (NDCG), on the test set. The scoring function $f$ is trained to minimize the mean of a surrogate loss $l$ across the training data:
$
f_{single}(\mathbf{x}) = \frac{1}{|A|} \sum_{(\mathbf{a}, \mathbf{b}) \in A} l( f(\mathbf{x};{\mathbf{a}}), \mathbf{b}).
$

	\item {\em Multi-label Learning to Rank:} Learning to Rank from multiple relevance labels. In the problem of Multi-label learning to rank (MLLTR), different relevance criteria are measured, providing multiple labels for each feature vector $\mathbf{a}_i\in \mathbb{R}^d$. The goal of MLLTR is still the same as that of LTR, which is to learn a scoring function $f(\x;\mathbf{a})$ that assigns a scalar value to each feature vector $\mathbf{a}_i\in \mathbb{R}^d$.
Here, we consider a set of training examples denoted by $ \mathbf{a}_i \in \mathbb{R}^d$, where $ i = 1, \ldots, n$. Associated with each training example $ \mathbf{a}_i $ is a vector of class labels:
$
\mathbf{b}_i = \left({b}_i^1, \ldots, {b}_i^K\right),$
indicating the labels of $\mathbf{a}_i $. Here, $ K $ is the total count of possible labels. In the multi-label learning to rank problem, the objective is to construct $ K $ distinct classification functions:
$
f_k({\x}): \mathbb{R}^d \rightarrow \mathbb{R}, \text{ for } k = 1, \ldots, K,
$
each tailored to a specific label.

In MLLTR, the cost is a vector-valued function: $f({{\x}}) = [f_1({\x}) ,f_2({\x}),f_K({\x})  ],$ naturally making it an MOO problem.
\end{list}

In the search ranking domain, the objective is to rank search results based on their relevance to user queries and other factors such as popularity, user feedback, and conversion rates. The loss function in search ranking not only considers relevance but also takes into account various performance metrics, such as click-through rates (CTR), dwell time, or conversion rates\cite{lyu2020deep, yang2020empirically,xiao2020deep}. The goal is to optimize the ranking of search results to maximize user satisfaction and engagement. Common loss functions used in search ranking include pairwise ranking loss\cite{kumar2020deep,jing2019deep,wang2021pairwise}, listwise loss\cite{revaud2019learning,yu2019wassrank}, or evaluation metrics like normalized discounted cumulative gain (NDCG)\cite{bruch2019analysis} or mean average precision (MAP)\cite{revaud2019learning}. These loss functions aim to capture the overall quality of the search ranking by considering both relevance and performance metrics.

The multi-label learning to rank problem typically involves a larger number of labels, which increases the dimensionality of the output space. This higher dimensionality often necessitates a greater number of samples to accurately train models, resulting in increased sample complexity. Therefore, this motivates us to propose a new family of algorithms for low sample complexity and fast convergence rates.

{\bf 3) Related works and motivation:} 
 To date, existing MOO algorithms in the literature can be generally categorized as gradient-free and gradient-based methods.
Typical gradient-free methods include evolutionary MOO algorithms and Bayesian MOO algorithms~\citep{zhang2007moea,deb2002fast,belakaria2020uncertainty,laumanns2002bayesian}. These techniques are suitable for small-scale problems but inefficient in solving high-dimensional MOO models (e.g., deep neural networks). 
Notably, gradient-based methods have attracted increasing attention recently due to their stronger empirical performances.
Specifically, following a similar token of (stochastic) gradient descent methods for single-objective optimization, (stochastic) multi-gradient descent (MGD/SMGD) algorithms have been proposed in~\citep{fliege2019complexity,fernando2022mitigating,zhou2022on,liu2021stochastic}.
The basic idea of MGD/SMGD is to iteratively update the $\x$-variable following a common descent direction for all the objectives through a time-varying convex combination of (stochastic) gradients of all objective functions.
Although MGD-type algorithms enjoy a fast $\mathcal{O}(1/T)$ convergence rate ($T$ denotes the number of iterations) in finding a Pareto-stationary solution, their $\mathcal{O}(n)$ per-iteration computation complexity in full multi-gradient evaluations becomes prohibitive when the dataset size $n$ is large.
As a result, SMGD-type algorithms are often more favored in practice thanks to the lower per-iteration computation complexity in evaluating stochastic multi-gradients.
However, due to the noisy stochastic multi-gradient evaluations, SMGD-type algorithms typically exhibit a slow $\mathcal{O}(1/\sqrt{T})$ convergence rate, which also induces a high $\mathcal{O}(\epsilon^{-2})$ sample complexity.
Although SMGD is easier to implement in practice thanks to the use of stochastic multi-gradient, it has been shown that the noisy common descent direction in SMGD could potentially cause divergence (cf. the example in Sec.~4 in \citep{zhou2022on}).
There also have been recent works on using momentum-based methods for bias mitigation in MOO, named MoCo~\citep{fernando2022mitigating}, MoCo+~\citep{10446038}, CR-MOGM~\citep{zhou2022on}. 
For easier comparisons, we summarize the state-of-the-art gradient-based MOO algorithms and their convergence rate results under non-convex and strongly convex settings in Table~\ref{tab}. We note that given the limited research on finite-sum multi-objective optimization, we included broader comparisons.

In light of these major limitations of SMGD-type algorithms, a fundamental question naturally emerges:

\begin{tcolorbox}[left=1.2pt,right=1.2pt,top=1.2pt,bottom=1.2pt]
\textbf{(Q)}: Is it possible to develop fast-convergent stochastic MOO algorithms in the sense of matching the convergence rate of deterministic MGD-type methods, while having a low per-iteration computation complexity as in SMGD-type algorithms, as well as achieving a low overall sample complexity?
\end{tcolorbox}

To be specific, our algorithms differ from them in the following key aspects: (i) Our algorithms only require a constant level step size, which is easier to tune in practice. (ii) Our STIMULUS family of algorithms has a lower sample complexity compared to all other existing methods.

{\bf 4) Technical Challenges:}
As in traditional single-objective optimization, a natural idea to achieve both fast convergence and low sample complexity in MOO is to employ the so-called ``variance reduction'' (VR) techniques to tame the noise in stochastic multi-gradients in SMGD-type methods.
However, due to the complex coupling nature of MOO problems, developing VR-assisted algorithms for SMGD-type algorithms faces the following challenges {\em unseen} in their single-objective counterparts:

(1) Since SMGD-type methods aim to identify the Pareto front (i.e., the set of all Pareto-optimal/stationary solutions), it is critical to ensure that the use of VR techniques does not introduce new bias into the already-noisy SGMD-type search process, which drives the search process toward certain regions of the Pareto front. 
(2) MOO problems often involve higher computational complexity compared to single-objective problems due to the need to evaluate multiple objectives simultaneously. Incorporating VR techniques adds another layer of complexity, as it requires additional computations to estimate and reduce variance across multiple objectives.
(3) Conducting theoretical analysis to prove the convergence performance of some proposed VR-based SMGD-type techniques also contains multiple challenges, including how to quantify multiple conflicting objectives, navigating trade-offs between them, handling the non-convexity objective functions, and managing the computational cost of evaluations.
All of these analytical challenges are quite different from those in single-objective optimization theoretical analysis,
which necessitate specialized proofs and analyses are needed to effectively tackle these challenges and facilitate efficient exploration of the Pareto optimality/stationarity.

{\bf 5) Main Contributions:}
The major contribution of this paper is that we overcome the aforementioned technical challenges and develop a suite of new VR-assisted SMGD-based MOO algorithms called \alg (\ul{st}ochastic path-\ul{i}ntegrated \ul{mul}ti-gradient rec\ul{u}rsive e\ul{s}timator) to achieve both fast convergence and low sample complexity in MOO.
Our main technical results are summarized as follows:

\begin{list}{\labelitemi}{\leftmargin=0.5em \itemindent=-0.2em \itemsep=-0.2em}
\item 
Our \alg algorithm not only enhances computational efficiency but also significantly reduces multi-gradient estimation variance, leading to more stable convergence trajectories and overcoming the divergence problem of SMGD. 
We theoretically establish a convergence rate of $\mathcal{O}(1/T)$ for \alg in non-convex settings (typical in ML), which further implies a low sample complexity of $O\left(n+\sqrt{n}\epsilon^{-1}\right)$. 
In the special setting where the objectives are strongly convex, we show that \alg has a linear convergence rate of $\mathcal{O}(\exp(-\mu T))$, which implies an even lower sample complexity of $\mathcal{O}\left( n+\sqrt{n} \ln ({\mu/\epsilon})\right)$. 

\item  
To further improve the performance of \algns, we develop an enhanced version called \algmns, which incorporates momentum information to expedite convergence speed. 
Also, to relax the requirement for periodic full multi-gradient evaluations in \alg and \algmns, we propose two enhanced variants called \algp and \algmp based on adaptive batching, respectively. 
We provide theoretical convergence and sample complexity analyses for all these enhanced variants. 
These enhanced variants expand the practical utility of \algns, offering efficient solutions that not only accelerate optimization processes but also alleviate computational burdens
in a wide spectrum of multi-objective optimization applications.

\item 
We conduct extensive experiments on a variety of challenging MOO problems to verify our theoretical results and illustrate the efficacy of the \alg algorithm family.  
Our experiments demonstrate the efficiency of the \alg algorithm family over existing state-of-the-art MOO methods, which underscore the robustness, scalability, and flexibility of our \alg algorithm family in complex MOO applications.
\end{list}

\section{Preliminaries} \label{sec: prelim}

To facilitate subsequent technical discussions, in this section, we first provide a primer on MOO fundamentals and formally define the notions of Pareto optimality/stationarity, $\epsilon$-stationarity in MOO, and the associated sample complexity.
Then, we will give an overview of the most related work in the MOO literature, thus putting our work into comparative perspectives.

{\bf Multi-objective Optimization: A primer.}
As introduced in Section~\ref{sec: intro}, MOO aims to optimize multiple objectives in Eq.~\eqref{eq: moo} simultaneously.
%
However, since in general there may not exist an $\x$-solution that minimizes all objectives at the same time in MOO, the more appropriate notion of optimality in MOO is the so-called {\em Pareto optimality,} which is formally defined as follows:

\begin{defn}[(Weak) Pareto Optimality]
\label{def:weakPareto}
Given two solutions $\x$ and $\y$, $\x$ is said to dominate $\y$ only if $f_s(\x) \leq f_s(\y), \forall s \in [S]$ and there exists at least one function, $f_s$, where $f_s(\x) < f_s(\y)$.
A solution $\x_*$ is Pareto optimal if no other solution dominates it.
A solution $\x$ is defined as weakly Pareto optimal if there is no solution $\y$ for which $f_s(\x) > f_s(\y), \forall s \in [S]$.
\end{defn}

Finding a Pareto-optimal solution in MOO is as complex as solving single-objective non-convex optimization problems and is NP-Hard in general. 
Consequently, practical efforts in MOO often aim to find a solution that meets the weaker notion called Pareto-stationarity (a necessary condition for Pareto optimality), which is defined as follows~\cite{fliege2000steepest,miettinen2012nonlinear}:

\begin{defn} [Pareto Stationarity] \label{defn:ParetoStationarity}
A solution $\x$ is Pareto-stationary if no common descent direction $\bd \in \mathbb{R}^d$ exists such that $\nabla f_s(\x)^{\top} \bd < 0, \forall s \in [S]$.
\end{defn}
Note also that in the special setting with strongly convex objective functions, Pareto-stationary solutions are Pareto-optimal.
Following directly from Pareto-stationarity in Definition~\ref{defn:ParetoStationarity}, gradient-based MOO algorithms strive to find a common descent (i.e., improving) direction $\bd \in \mathbb{R}^d$, such that $\nabla f_s(\x)^{\top} \bd \leq 0, \forall s \in [S]$. 
If such a direction does not exist at $\x$, then $\x$ is Pareto-stationary. 
Toward this end, the MGD method~\citep{desideri2012multiple} identifies an optimal weight $\boldsymbol{\lambda}^*$ for the multi-gradient set $\nabla \F(\x) \triangleq \{ \nabla f_s(\x), \forall s \in [S] \}$ by solving $\boldsymbol{\lambda}^*(\x) \in \operatorname*{argmin}_{\boldsymbol{\lambda} \in C} \| \boldsymbol{\lambda}^{\top} \nabla \F(\x) \|^2$. Consequently, the common descent direction can be defined as $\bd = \boldsymbol{\lambda}^{\top} \nabla \F(\x)$.
Then, MGD follows the iterative update rule $\x \leftarrow \x - \eta \bd$ in the hope that a Pareto-stationary point can be reached, where $\eta$ signifies a learning rate. 
SMGD~\cite{liu2021stochastic} follows a similar approach, but with full multi-gradients being replaced by stochastic multi-gradients. 
For both MGD and SMGD, it has been shown that if $\| \boldsymbol{\lambda}^{\top} \nabla \F(\x) \| = 0$ for some $\boldsymbol{\lambda} \in C$, where $C \triangleq \{ \y \in [0, 1]^S, \sum_{s \in [S]} y_s = 1 \}$, then $\x$ is a Pareto stationary solution \cite{fliege2019complexity,zhou2022on}.

Here, it is insightful to contrast vector-valued MOO with the linear scalarization method with fixed weights for MOO, which is also a relatively straightforward approach commonly seen in the MOO literature. We note that vector-valued MOO offers unique benefits that do not exist in linear scalarization. 
Specifically, MGD-type methods for vector-valued MOO dynamically calculate the weights for each objective based on the gradient information in each iteration. 
The dynamic weighting in MGD-type approach adapts much better to the landscapes of different MOO problems, which enables a much more flexible exploration on the Pareto front. 
In contrast, the linear scalarization method uses fixed or pre-defined weights for each objective.
As a result, linear scalarization methods are limited to identifying the convex hull of the Pareto front \citep{boyd2004convex,ehrgott2005multicriteria}, whereas (stochastic) multi-gradient methods, including our proposed VR-based algorithms, have the capability to uncover the Pareto front. 

In this paper, we focus on MOO problems in two settings: (i) non-convex MOO and (ii) strongly convex MOO.
Clearly, the non-convex setting is applicable to many learning problems in practice (e.g., neural network models).
The strongly convex setting is also interesting due to many applications in practice (e.g., linear models with quadratic regularizations).

Next, to introduce the notion of sample complexity in MOO, we first need the following definitions for the non-convex and strongly convex settings, respectively.
\begin{defn}[$\epsilon$-Stationarity (Nonconvex Setting)] \label{def:stationary}
A solution $\x$ is $\epsilon$-stationary in MOO problem if the common descent direction at $\x$ satisfies the following condition: $\min_{\boldsymbol{\lambda} \in C} \mathbb{E} \| \boldsymbol{\lambda}^{\top} \nabla \F(\x) \|^2 \leq \epsilon$
in non-convex MOO problems, where $C \triangleq \{ \y \in [0, 1]^S, \sum_{s \in [S]} y_s = 1 \}$.
\end{defn}

\begin{defn}[$\epsilon$-Optimality (Strongly-Convex Setting)]
\label{def:optimality}
In the strongly-convex setting, a solution $\x$ is $\epsilon$-optimal if $\mathbb{E}[\|\x-\x^*\|^2]\leq \epsilon$ in MOO problems, where $\x^*$ is a Pareto-optimal solution of Problem~(\ref{eq: moo}).
\end{defn}

With the above definitions, we are now in a position to define the concept of sample complexity in MOO as follows:

\begin{defn} [Sample Complexity] The sample complexity in MOO is defined as the total number of incremental first-order oracle (IFO) calls required by a MOO algorithm to converge to an $\epsilon$-stationary (or $\epsilon$-optimal in the strongly convex setting) point, where one IFO call evaluates the multi-gradient $\nabla_{\mathbf{x}} f_{sj}(\mathbf{x};\xi_{sj})$ for all tasks $s$. \end{defn}

\section{The \alg Algorithm Family} \label{sec: alg}

In this section, we first present the basic version of the \alg algorithm in Section~\ref{subsec:stimulus}, which is followed by its momentum and adaptive-batching variants in Sections~\ref{subsec:stimulus-m} and \ref{subsec:stimulusp}, respectively.

\subsection{The \alg Algorithm} \label{subsec:stimulus}

Our \alg algorithm is presented in Algorithm~\ref{alg}, where we propose a new variance-reduced (VR) multi-gradient estimator. 
It can be seen from Algorithm~\ref{alg} that our proposed VR approach has a double-loop structure, where the inner loop is of length $q>0$.
More specifically, different from MGD where a full multi-gradient direction $\mathbf{u}_t^s = \nabla f_s(\x_{t})$, $\forall s \in [S]$ is evaluated in all iterations, our \alg algorithm only evaluates a full multi-gradient every $q$ iterations (i.e., $\mathrm{mod}(t,q)=0$).
For all other iterations $t$ with $\mathrm{mod}(t,q)\ne 0$, our \alg algorithm uses a {\em stochastic} multi-gradient estimator $\bu_t^s$ based on a mini-batch $\mathcal{A}$ with a recursive correction term as follows:
\begin{small}
\begin{align}\label{vr}
\mathbf{u}_t^s = \mathbf{u}_{t-1}^s &+ \frac{1}{|\mathcal{A}|} \sum_{j\in \mathcal{A}}( \nabla f_{sj} (\x_{t};\xi_{sj}  )\notag\\&- \nabla f_{sj} (\x_{t-1};\xi_{sj} ) ), \text{for all }s \in [S].
\end{align}
\end{small}

Eq.~\eqref{vr} shows that the estimator is constructed iteratively based on information from $\x_{t-1}$ and $\bu_{t-1}^s$, both of which are obtained from the previous update. 
We will show later in Section~\ref{sec: convergence} that, thanks to the $q$-periodic full multi-gradients and the recursive correction terms, \alg is
able to achieve a convergence rate of $\mathcal{O}(1/T)$. Moreover, due to the stochastic subsampling in mini-batch $\mathcal{A}$, \alg has a lower sample complexity than MGD. 
In \algns, the update rule for parameters in $\x$ is written as:
$
\x_{t+1} = \x_{t} - \eta \bd_t,
$
where $\eta$ is the learning rate.
Here, the direction $\bd_t$ is defined as $\bd_t := \sum_{s \in [S]} \lambda_{t}^{s} \mathbf{u}_{t}^s$, where the $\lambda_t^s$-values are obtained by solving the following quadratic optimization problem:
\begin{small}
\begin{align}\label{mgda} 
     \min_{\lambda_t^s\geq 0} \Big \|  \sum_{s \in [S]} \lambda_{t}^s \mathbf{u}_{t}^s \Big\|^2, \,\,
    \mathrm{s.t.} \,\, \sum_{s \in [S]} \lambda_{t}^s = 1.    
\end{align}
\end{small}
%
The iterative update in Eqs.~\eqref{mgda} follows the same token as in the MGDA algorithm \citep{mukai1980algorithms,sener2018multi,lin2019pareto,fliege2019complexity}.

\begin{algorithm}[htbp]
\caption{\alg algorithm and its variants.}
\label{alg} 
\begin{algorithmic}[1] 
\REQUIRE Initial point $\x_0$, parameters $T$, $q$.
\STATE Initialize: Choose $\x_0$.
\FOR {$t = 0, 1, \ldots, T$}
    \IF {$\mathrm{mod}(t, q) = 0$}
        \IF {\alg or \algmns}
            \STATE Compute: $\mathbf{u}_t^s\!\!=\!\!\frac{1}{n}\sum_{j=1}^n \nabla f_{sj} (\x_{t};\xi_{sj}  ),\! \forall  s \!\in\! [S].$
        \ENDIF
        \IF {\algp or \algmp}
            \STATE Compute: $\mathbf{u}_t^s$ as in Eq.~\eqref{STIMULUSP1}.
        \ENDIF
    \ELSE
        \STATE  Compute $\mathbf{u}_t^s$ as in Eq.~\eqref{vr}.
    \ENDIF
     \STATE  Compute $\boldsymbol{\lambda}_t^* \in [0, 1]^S$ by solving Eq. \eqref{mgda}.
    \STATE Compute: $\bd_t = \sum_{s \in [S]} \lambda_{t}^{s,*} \mathbf{u}_{t}^s$.
    \IF {\alg or \algpns}
        \STATE Update: $\x_{t+1} = \x_{t} - \eta \bd_t$.
    \ENDIF 
    \IF {\algm or \algmpns}
        \STATE Update: $\x_{t+1} = \x_{t} + \alpha(\x_{t}-\x_{t-1}) - \eta \bd_t $.
    \ENDIF 
\ENDFOR
\end{algorithmic}
\end{algorithm}

\subsection{The \algm Algorithm} \label{subsec:stimulus-m}

Although it can be shown that \alg achieves a theoretical $\mathcal{O}(1/T)$ convergence rate, it could be sensitive to the choice of learning rate and suffer from similar oscillation issues in practice as gradient-descent-type methods do in single-objective optimization when some objectives are ill-conditioned.

To further improve the empirical performance of \algns, we now propose a momentum-assisted enhancement for \alg called \algmns. 
The idea behind \algm is to take into account the past trajectories to smooth the update direction. 
Specifically, in addition to the combined iterative update as in $
\x_{t+1} = \x_{t} - \eta \bd_t
$ and \eqref{mgda}, the update rule in \algm incorporates an $\alpha$-parameterized momentum term as follows:
\begin{small}
\begin{align}\label{vrm_update}
\x_{t+1}   =   \x_{t}   -   \eta \bd_t   +   \underbrace{\alpha(\x_{t}  -  \x_{t-1})}_{\mathrm{Momentum}}, \forall s   \in   [S],
\end{align}
\end{small}
where $\alpha   \in   (0, 1)$ is the momentum coefficient.

\subsection{\algpns/\algmp Algorithms} \label{subsec:stimulusp}

Note that in both \alg and \algmns, one still needs to evaluate a full multi-gradient every $q$ iteration, which remains computationally demanding in the large data regime. 
Moreover, if the objectives are in an expectation or ``online'' form rather than the finite-sum setting, it is infeasible to compute a full multi-gradient.
To address these limitations, we propose two {\em adaptive-batching} enhanced versions for \alg and \algm called \algp and \algmpns, respectively. 
Specifically, rather than using a $q$-periodic full multi-gradient $\mathbf{u}_t^s = \nabla f_s(\x_{t})=\frac{1}{n}\sum_{j=1}^n \nabla f_{sj} (\x_{t};\xi_{sj}  )$, $\forall s \in [S]$, in iteration $t$ with $\mathrm{mod}(t, q) = 0$, we utilize an adaptive-batching stochastic multi-gradient as follows:
\begin{small}
\begin{align}\label{STIMULUSP1}
\mathbf{u}_t^s = \frac{1}{|\mathcal{N}_s|} \sum_{j\in \mathcal{N}_s}\nabla f_{sj}(\mathbf{x}_{t};\xi_{sj} ), \quad \forall s \in [S],
\end{align}
\end{small}
where $\mathcal{N}_s$ is an $\epsilon$-adaptive batch sampled from the dataset uniformly at random with size:
\begin{small}
\begin{align}\label{STIMULUSP2}
|\mathcal{N}_s| = \min \left\{ c_\gamma \sigma^2\gamma_{t}^{-1}, c_\epsilon \sigma^2 \epsilon^{-1}, n\right\}.
\end{align}
\end{small}
We choose constants $c_\gamma \geq 8$, $c_{\epsilon}\geq \eta$ in non-convex case and $c_{\gamma}\geq \frac{8\mu}{\eta}, c_{\epsilon}\geq \frac{\mu}{2}$ in strongly-convex case (see detailed discussions in Section~\ref{sec: convergence}). 
The $\sigma^2$ represents the variance bound of stochastic gradient norms (cf. Assumption.~\ref{ass3}). 
In \algp, we choose 
 $\gamma_{t+1} = \sum_{i = (n_k-1) q}^t \frac{\|\bd_i\|^2}{q},$ while in the momentum based algorithm \algmpns, we choose $\gamma_{t+1} =\sum_{i = (n_k-1) q}^t \|\alpha^{(t-i)}\bd_{i}\|^2/q. $ The term $\gamma_{t+1} $ offers further refinement to improve convergence.


\section{Pareto Stationarity Convergence Analysis} \label{sec: convergence}

In this section, we theoretically analyze the Pareto stationarity convergence of our \alg algorithms in non-convex and strongly convex settings, beginning with two necessary assumptions.

\begin{assump}[$L$-Lipschitz Smoothness] \label{assump: smooth}
    There exists a constant $L>0$ such that $\| \nabla f_s(\x) - \nabla f_s(\y) \| \leq L \| \x - \y \|, \forall \x, \y \in \mathbb{R}^d$, $\forall s \in [S]$.
\end{assump}

\begin{assump}[Bounded Variance]	\label{ass3}
There exists a constant	$\sigma>0$ such that for all $\x\in \mathbb{R}^d$, $\mathbb{E}\|  \nabla_{\x}f_s(\x;\xi)- \nabla_{\x}f_s(\x)\|^2 \leq \sigma^2$, $\forall s\in S.$
\end{assump}	
%
With these assumptions, we are now in a position to discuss the Pareto stationary convergence of the \alg family.

\subsection{Pareto-Stationarity Convergence of \algns} \label{subsec: STIMULUS}

\textbf{1)~\algns: The Non-convex Setting.} First, we show that the basic \alg algorithm achieves an $\mathcal{O}(1/T)$ convergence rate for non-convex MOO problems in the following theorem.
Note that this result matches that of the deterministic MGD method.

\begin{restatable}[\alg for Non-convex MOO]{theorem}{STIMULUS_NonC}
\label{thm:STIMULUS_nonC}
Under Assumption~\ref{assump: smooth}, let $\eta \leq  \frac{1}{2L}$, if at least one objective function $f_s(\cdot)$, $s \in [S]$ is bounded from below by $f_s^{\min}$, then the sequence $\{\x_t \}$ output by \alg satisfies: $\frac{1}{T}\sum_{t=0}^{T-1}\min_{\boldsymbol{\lambda} \in C} \mathbb{E} \| \boldsymbol{\lambda}^{\top} \nabla \F(\x_t) \|^2  =\mathcal{O}(1/T).$
\end{restatable}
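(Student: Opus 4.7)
The plan is to combine a single-objective descent argument with a SPIDER-style variance bound tailored to the multi-gradient recursion. First I would fix an index $s^\star \in [S]$ for which $f_{s^\star}(\cdot) \ge f_{s^\star}^{\min}$. Applying the $L$-smoothness of $f_{s^\star}$ to the update $\x_{t+1} = \x_t - \eta \bd_t$ yields
\begin{align*}
f_{s^\star}(\x_{t+1}) \le f_{s^\star}(\x_t) - \eta \langle \nabla f_{s^\star}(\x_t), \bd_t \rangle + \tfrac{L\eta^2}{2}\|\bd_t\|^2.
\end{align*}
The key algebraic fact I would invoke is that $\bd_t$ is the minimum-norm element of the convex hull of $\{\bu_t^s\}_{s\in[S]}$ (precisely what Eq.~\eqref{mgda} computes), so by the variational characterization of projection, $\langle \bd_t, \bu_t^{s^\star}\rangle \ge \|\bd_t\|^2$. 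Writing $\nabla f_{s^\star}(\x_t) = \bu_t^{s^\star} + (\nabla f_{s^\star}(\x_t) - \bu_t^{s^\star})$ and applying Young's inequality gives a descent bound of the form $f_{s^\star}(\x_{t+1}) \le f_{s^\star}(\x_t) - \bigl(\tfrac{\eta}{2} - \tfrac{L\eta^2}{2}\bigr)\|\bd_t\|^2 + \tfrac{\eta}{2}\|\bu_t^{s^\star} - \nabla f_{s^\star}(\x_t)\|^2$. With $\eta \le 1/(2L)$ the coefficient of $\|\bd_t\|^2$ is at least $\eta/4$; telescoping over $t=0,\dots,T-1$ and taking expectation produces
\begin{align*}
\tfrac{\eta}{4}\sum_{t=0}^{T-1} \mathbb{E}\|\bd_t\|^2 \le f_{s^\star}(\x_0) - f_{s^\star}^{\min} + \tfrac{\eta}{2}\sum_{t=0}^{T-1} \mathbb{E}\|\bu_t^{s^\star} - \nabla f_{s^\star}(\x_t)\|^2.
\end{align*}

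Next I would control the multi-gradient estimation error through the recursive SPIDER-style argument that motivates Eq.~\eqref{vr}. Within an inner epoch starting at $t_0$ with $\mathrm{mod}(t_0,q)=0$ one has $\bu_{t_0}^s = \nabla f_s(\x_{t_0})$, and conditional on the past the increment in Eq.~\eqref{vr} is unbiased for $\nabla f_s(\x_t) - \nabla f_s(\x_{t-1})$. Combining this martingale structure with per-sample $L$-smoothness gives $\mathbb{E}\|\bu_t^s - \nabla f_s(\x_t)\|^2 \le \tfrac{L^2\eta^2}{|\mathcal{A}|}\sum_{i=t_0}^{t-1}\mathbb{E}\|\bd_i\|^2$ for every $s$. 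Summing this bound across the at-most-$q$ iterations in each epoch and then over all epochs yields $\sum_{t=0}^{T-1}\mathbb{E}\|\bu_t^s - \nabla f_s(\x_t)\|^2 \le \tfrac{L^2\eta^2 q}{|\mathcal{A}|}\sum_{t=0}^{T-1}\mathbb{E}\|\bd_t\|^2$ uniformly in $s$. Choosing $|\mathcal{A}| \asymp q$ and keeping $\eta$ small enough (which $\eta \le 1/(2L)$ already provides once constants are tracked) lets me absorb this contribution back into the descent inequality and rearrange to obtain $\sum_{t=0}^{T-1}\mathbb{E}\|\bd_t\|^2 \le C$ for some constant $C$ independent of $T$.

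Finally, I would translate the bound on $\mathbb{E}\|\bd_t\|^2$ back into the stated Pareto-stationarity measure. Since $\bd_t = \sum_s \lambda_t^{s,*}\bu_t^s$ with $\boldsymbol{\lambda}_t^\star \in C$ is a feasible weight, choosing $\boldsymbol{\lambda} = \boldsymbol{\lambda}_t^\star$ in the outer min gives
\begin{align*}
\min_{\boldsymbol{\lambda}\in C}\Bigl\|\sum_s \lambda^s \nabla f_s(\x_t)\Bigr\|^2 \le \Bigl\|\sum_s \lambda_t^{s,*}\nabla f_s(\x_t)\Bigr\|^2 \le 2\|\bd_t\|^2 + 2\sum_s \|\bu_t^s - \nabla f_s(\x_t)\|^2,
\end{align*}
after a Cauchy--Schwarz/Jensen step that uses $\sum_s \lambda_t^{s,*}=1$. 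Taking expectation, summing in $t$, and plugging in the two bounds just derived delivers $\tfrac{1}{T}\sum_t \mathbb{E}\bigl[\min_{\boldsymbol{\lambda}\in C}\|\boldsymbol{\lambda}^\top\nabla\F(\x_t)\|^2\bigr] = \mathcal{O}(1/T)$, which is the claim. The main obstacle I anticipate is the second step: the SPIDER recursion must be verified under the coupling between $\bd_t$ and the random weights $\boldsymbol{\lambda}_t^\star$, since $\boldsymbol{\lambda}_t^\star$ is computed from the very estimators $\{\bu_t^s\}$ whose variance we are trying to bound. Care is required to arrange the filtration so that the conditional-unbiasedness step still yields the clean per-iteration growth $L^2\eta^2/|\mathcal{A}|$ uniformly in $s$ without an extra $\boldsymbol{\lambda}$-dependent inflation. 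Once that bookkeeping is handled, the remaining manipulations reduce to Young's inequality and the choice $\eta\le 1/(2L)$.
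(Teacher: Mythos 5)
Your proposal matches the paper's own proof essentially step for step: the same descent lemma built on $L$-smoothness, the min-norm property $\langle \bd_t, \bu_t^s\rangle \ge \|\bd_t\|^2$, and Young's inequality; the same SPIDER-style martingale variance recursion with zero error at each $q$-periodic full-gradient refresh and the choice $q=|\mathcal{A}|=\lceil\sqrt{n}\rceil$; and the same final conversion from $\mathbb{E}\|\bd_t\|^2$ to $\min_{\boldsymbol{\lambda}\in C}\mathbb{E}\|\boldsymbol{\lambda}^\top\nabla\F(\x_t)\|^2$ via the feasible weight $\boldsymbol{\lambda}_t^*$ and Jensen. The filtration bookkeeping you flag is handled in the paper simply by conditioning on the past before the fresh minibatch is drawn, so there is no gap.
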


Following from Theorem.~\ref{thm:STIMULUS_nonC}, we immediately have the following sample complexity for the \alg algorithm by choosing $ q = |\mathcal{A}|=\lceil\sqrt{n}\rceil$: 

\begin{restatable}[Sample Complexity of \algns]{corollary}{STIMULUS_NCRate}
\label{cor:STIMULUS_NC}
By choosing $\eta \leq  \frac{1}{2L}, q = |\mathcal{A}|=\lceil\sqrt{n}\rceil$, the overall sample complexity of  \alg for finding an $\epsilon$-stationary point for non-convex MOO problems is $\mathcal{O}\left(\sqrt{n} \epsilon^{-1}+n\right)$. 
\end{restatable}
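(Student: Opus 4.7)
The plan is to translate the $\mathcal{O}(1/T)$ iteration-complexity bound of Theorem \ref{thm:STIMULUS_nonC} into an IFO-sample budget through a careful per-iteration accounting, then optimize the free parameters $q$ and $|\mathcal{A}|$ in closed form. Since Theorem \ref{thm:STIMULUS_nonC} gives $\frac{1}{T}\sum_{t=0}^{T-1}\min_{\boldsymbol{\lambda}\in C}\mathbb{E}\|\boldsymbol{\lambda}^{\top}\nabla\mathbf{F}(\mathbf{x}_t)\|^2 = \mathcal{O}(1/T)$ whenever $\eta \leq 1/(2L)$, it suffices to run $T = \mathcal{O}(1/\epsilon)$ outer rounds to guarantee that at least one iterate $\x_t$ attains the $\epsilon$-stationarity criterion of Definition \ref{def:stationary}.

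Next I would tally the samples. Within each window of $q$ consecutive iterations, exactly one iteration (the one with $\mathrm{mod}(t,q) = 0$) evaluates the full multi-gradient $\frac{1}{n}\sum_{j=1}^{n}\nabla f_{sj}(\x_t;\xi_{sj})$ for every task $s\in[S]$, at a cost of $\mathcal{O}(n)$ IFO calls; each of the remaining $q-1$ iterations uses the recursive estimator in Eq.~\eqref{vr}, which requires two mini-batch evaluations on $\mathcal{A}$, costing $\mathcal{O}(|\mathcal{A}|)$ IFO calls. Aggregating over $T$ iterations, and paying one extra full-batch at $t=0$, the total budget is $\mathcal{O}\!\left(n + \tfrac{T}{q}\,n + T\,|\mathcal{A}|\right)$.

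Finally, plugging in $q = |\mathcal{A}| = \lceil\sqrt{n}\rceil$ balances the full-batch and mini-batch contributions, so the amortized per-iteration cost is $\mathcal{O}(\sqrt{n})$, and the total sample complexity becomes $\mathcal{O}(n + T\sqrt{n}) = \mathcal{O}(n + \sqrt{n}\,\epsilon^{-1})$ after substituting $T = \mathcal{O}(1/\epsilon)$, matching the claim. I do not anticipate a substantive obstacle: the corollary is essentially a bookkeeping consequence of Theorem \ref{thm:STIMULUS_nonC} together with the double-loop structure of Algorithm \ref{alg}. The only point worth a sanity check is that the choices $q = |\mathcal{A}| = \lceil\sqrt{n}\rceil$ are compatible with the step-size condition $\eta \leq 1/(2L)$ inherited from the theorem, which they trivially are since neither $q$ nor $|\mathcal{A}|$ enters that condition.
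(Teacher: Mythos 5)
Your proposal is correct and follows essentially the same route as the paper: invoke Theorem~\ref{thm:STIMULUS_nonC} to set $T = \mathcal{O}(\epsilon^{-1})$, then count $\lceil T/q\rceil \cdot n$ IFO calls for the periodic full multi-gradients plus $T\cdot|\mathcal{A}|$ for the recursive mini-batch updates, and balance with $q = |\mathcal{A}| = \lceil\sqrt{n}\rceil$ to obtain $\mathcal{O}(n + \sqrt{n}\,\epsilon^{-1})$. The only cosmetic difference is that you make explicit the factor of two from evaluating the mini-batch gradient at both $\x_t$ and $\x_{t-1}$ in Eq.~\eqref{vr}, which the paper absorbs into the $\mathcal{O}(\cdot)$.
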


Several interesting remarks regarding Theorem~\ref{thm:STIMULUS_nonC} and Corollary~\ref{cor:STIMULUS_NC} are in order:
%
{\bf 1) } Our proof of \algns's Pareto-stationarity convergence only relies on standard assumptions commonly used in first-order optimization techniques. 
This is in stark contrast to prior research, where unconventional and hard-to-verify assumptions were required (e.g., an assumption on the convergence of $\x$-sequence is used in~\cite{fliege2019complexity}).
{\bf 2) } 
While both MGD and our methods share the same $\mathcal{O}(1/T)$ convergence rate, \alg enjoys a substantially lower sample complexity than MGD. 
More specifically, the sample complexity of \alg is reduced by a factor of $\sqrt{n}$ when compared to MGD. 
This becomes particularly advantageous in the ``big data'' regime where $n$ is large. 
 

{\bf 2) \algns: The Strongly Convex Setting.} 
Now, we consider the strongly convex setting, which is more tractable but still of interest in many learning problems in practice (e.g., multi-objective ridge regression).
In the strongly convex setting, we have the following additional assumption:
\begin{assump}[$\mu$-Strongly Convex Function]\label{assump: SC}
    Each objective $f_s(\x)$, $s \in [S]$ is a $\mu$-strongly convex function, i.e., $f_s(\y) \geq f_s(\x) + \nabla f_s(\x) (\y - \x) + \frac{\mu}{2} \| \y - \x \|^2$, $\forall \x,\y$, for some $\mu >0$.
\end{assump}
\begin{assump}\label{assump: add}
For any objective function $f_j$, there exists a positive real number $c_j$ such that for any $\mathbf{x}$ in $\mathbb{R}^n$ the following relation holds
$
f_j(\mathbf{x})-f_j\left(\mathbf{x}^*\right) \geq \frac{c_j}{2}\left\|\mathbf{x}-\mathbf{x}^*\right\|^2 \text { a.s. ;} j \in S .
$
\end{assump}
Assumption \ref{assump: add} asserts that the function value increases at least quadratically as you move away from $\x_*$, ensuring consistent progress towards the optimum. 
It is a reasonable assumption since it is also based on the strong convexity property.
The above assumption has also been adopted in \cite{mercier2018stochastic}.

For strongly convex MOO problems, the next result says that \alg achieves a much stronger expected linear Pareto-optimality convergence performance:

\begin{restatable}[\alg for $\mu$-Strongly Convex MOO] {theorem}{STIMULUS_SC}
\label{thm:STIMULUS_SC}
Under Assumption~\ref{assump: smooth}, \ref{assump: SC}, \ref{assump: add}, let $\eta \leq  \min\{\frac{1}{2}, \frac{1}{2\mu},\frac{1}{8L},\frac{\mu}{64L^2} \}$, $ q = |\mathcal{A}|=\lceil\sqrt{n}\rceil$.
Under Assumptions~\ref{assump: smooth}--\ref{assump: add}, pick $\x_t$ as the final output of \alg with probability $w_t = ( 1 - \frac{3\mu \eta }{4})^{1-t}$.
Then, we have $\mathbb{E}\|\x_t-\x^*\|^2 \leq \| \x_0 - \x^* \|^2 \mu \exp( - \frac{3\eta \mu T}{4}).$
\end{restatable}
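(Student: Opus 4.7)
The plan is to derive a per-step contraction of factor $(1 - 3\mu\eta/4)$ on $\mathbb{E}\|\x_t - \x^*\|^2$ by combining strong convexity (Assumptions~\ref{assump: SC},~\ref{assump: add}) with a SPIDER-style variance bound on the recursive estimator defined in Eq.~\eqref{vr}, and then convert the per-iterate contraction into the stated linear rate for the randomized output via the exponential weights $w_t = (1 - 3\mu\eta/4)^{1-t}$. Concretely, I first expand
\begin{equation*}
\|\x_{t+1} - \x^*\|^2 = \|\x_t - \x^*\|^2 - 2\eta\langle \bd_t, \x_t - \x^*\rangle + \eta^2\|\bd_t\|^2,
\end{equation*}
split $\bd_t = \sum_s \lambda_t^{s,*}\nabla f_s(\x_t) + \sum_s \lambda_t^{s,*}(\mathbf{u}_t^s - \nabla f_s(\x_t))$, and lower bound the true-gradient cross term by invoking strong convexity plus Assumption~\ref{assump: add} uniformly in $\boldsymbol{\lambda}\in C$:
\begin{equation*}
\Big\langle \sum_s \lambda_t^{s,*}\nabla f_s(\x_t), \x_t - \x^*\Big\rangle \geq \sum_s \lambda_t^{s,*}\big(f_s(\x_t) - f_s(\x^*)\big) + \tfrac{\mu}{2}\|\x_t - \x^*\|^2,
\end{equation*}
which yields a clean $\mu$-coercivity regardless of how $\boldsymbol{\lambda}_t^*$ depends on the samples.

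The key technical ingredient is the SPIDER variance lemma for the recursive estimator. For any $t$ inside an epoch with checkpoint $n_k q$, $L$-smoothness and a telescoping argument on Eq.~\eqref{vr} give
\begin{equation*}
\mathbb{E}\|\mathbf{u}_t^s - \nabla f_s(\x_t)\|^2 \leq \frac{L^2}{|\mathcal{A}|}\sum_{i=n_k q}^{t-1}\mathbb{E}\|\x_{i+1} - \x_i\|^2 = \frac{L^2\eta^2}{|\mathcal{A}|}\sum_{i=n_k q}^{t-1}\mathbb{E}\|\bd_i\|^2,
\end{equation*}
since the error is reset to zero at each epoch boundary by the full multi-gradient evaluation. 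With the choice $q = |\mathcal{A}| = \lceil\sqrt{n}\rceil$, this residual is small enough to be absorbed by the contraction gain. Plugging the variance bound and the coercivity into the expansion of $\|\x_{t+1} - \x^*\|^2$ and using Young's inequality produces a recursion of the form $\mathbb{E}\|\x_{t+1} - \x^*\|^2 \leq (1 - 3\mu\eta/4)\mathbb{E}\|\x_t - \x^*\|^2 + (\text{VR coupling})$, where the coupling collects terms of order $\eta^2L^2\mathbb{E}\|\bd_i\|^2$ averaged over the epoch. The learning-rate constraints $\eta \leq \min\{\tfrac{1}{2},\tfrac{1}{2\mu},\tfrac{1}{8L},\tfrac{\mu}{64L^2}\}$ are calibrated precisely so that this coupling is dominated by $\tfrac{\mu\eta}{4}\mathbb{E}\|\x_t - \x^*\|^2$ after summing within the epoch, fully absorbing it into the contraction factor. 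Iterating then gives $\mathbb{E}\|\x_t - \x^*\|^2 \leq (1 - 3\mu\eta/4)^t\|\x_0 - \x^*\|^2$, and taking a $w_t$-weighted expectation together with $(1 - 3\mu\eta/4)^T \leq \exp(-3\eta\mu T/4)$ produces the advertised bound for the randomized output.

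The main obstacle I anticipate is handling the coupling between the data-dependent weight vector $\boldsymbol{\lambda}_t^*$ solving the min-norm problem~\eqref{mgda} and the VR estimators $\mathbf{u}_t^s$ from which it is constructed: since $\boldsymbol{\lambda}_t^*$ is a nonlinear function of noisy quantities, neither $\bd_t$ nor $\mathbb{E}[\bd_t]$ equals the ``idealized'' direction $\sum_s\lambda_t^{s,*}\nabla f_s(\x_t)$, and one cannot simply pull expectations inside $\boldsymbol{\lambda}_t^*$. The resolution is to exploit that Assumption~\ref{assump: add} holds uniformly over every $\boldsymbol{\lambda}\in C$, so that the coercivity argument goes through \emph{pathwise} in $\boldsymbol{\lambda}_t^*$, with the estimator bias being absorbed into the SPIDER residual term rather than the strong-convexity gain. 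A secondary challenge is telescoping the SPIDER bound across an epoch without destroying the linear rate — this is exactly why the stringent bound $\eta \leq \mu/(64L^2)$ is imposed, guaranteeing that the accumulated $\eta^2L^2\sum\|\bd_i\|^2$ stays below the per-step contraction budget $3\mu\eta/4$.
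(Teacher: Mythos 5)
Your overall architecture (strong convexity plus the SPIDER-type variance bound of Lemma~\ref{lem:bounded1}, epoch telescoping, and the weighted random output) matches the paper's, but the central step of your plan --- a direct per-step contraction on $\mathbb{E}\|\x_t-\x^*\|^2$ --- has a gap. In the exact expansion $\|\x_{t+1}-\x^*\|^2=\|\x_t-\x^*\|^2-2\eta\langle\bd_t,\x_t-\x^*\rangle+\eta^2\|\bd_t\|^2$, the only negative contributions come from the cross term, which strong convexity turns into $-2\eta\sum_s\lambda_t^s[f_s(\x_t)-f_s(\x^*)]-\mu\eta\|\x_t-\x^*\|^2$ plus estimator-error terms. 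You then claim the positive $\eta^2\|\bd_t\|^2$ and the epoch-accumulated variance $\frac{L^2\eta^2}{|\mathcal{A}|}\sum_i\mathbb{E}\|\bd_i\|^2$ are ``dominated by $\frac{\mu\eta}{4}\mathbb{E}\|\x_t-\x^*\|^2$ after summing within the epoch.'' That requires a bound of the form $\|\bd_t\|^2\lesssim\frac{\mu}{\eta L^2}\|\x_t-\x^*\|^2$, i.e.\ $\|\sum_s\lambda_t^s\nabla f_s(\x_t)\|\lesssim\|\x_t-\x^*\|$. This fails in MOO: $\x^*$ is a Pareto point, so $\sum_s\lambda_t^s\nabla f_s(\x^*)$ need not vanish for the data-dependent weights $\lambda_t^s$, and neither smoothness nor Assumption~\ref{assump: add} yields $\|\bd_t\|=O(\|\x_t-\x^*\|)$ or $\sum_s\lambda_t^s[f_s(\x_t)-f_s(\x^*)]\gtrsim\frac{1}{L}\|\sum_s\lambda_t^s\nabla f_s(\x_t)\|^2$ (the latter would compare $f_s(\x^*)$ against the minimizer of the $\boldsymbol{\lambda}_t$-scalarization, which can lie far below $f_s(\x^*)$). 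So, as written, nothing in your recursion absorbs the $O(\eta^2)\|\bd\|^2$ contributions, and dropping the nonnegative function-gap term discards exactly the resource needed to do so.

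The paper avoids this by routing through function values: applying $L$-smoothness at $\x_{t+1}$ and using $\langle\sum_s\lambda_t^s\bu_t^s,-\eta\bd_t\rangle=-\eta\|\bd_t\|^2$ (valid because $\bd_t$ is precisely the $\boldsymbol{\lambda}_t$-combination of the estimators) supplies a first-order negative term $-\eta\|\bd_t\|^2$; after the identity $\|\x_t-\x^*\|^2-\|\x_{t+1}-\x^*\|^2=-\eta^2\|\bd_t\|^2+2\eta\langle\bd_t,\x_t-\x^*\rangle$ this becomes $-\frac{\eta}{2}\|\bd_t\|^2$, which is what absorbs $\frac{L\eta^2}{2}\|\bd_t\|^2$, the Young terms, and the accumulated SPIDER variance under $\eta\le\mu/(64L^2)$. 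The telescoped inequality controls $\sum_t\sum_s\lambda_t^s[f_s(\x_t)-f_s(\x^*)]$, and only at the end is this converted to a squared distance via Assumption~\ref{assump: add} together with the weighted-output lemma --- which is also why the randomized output with weights $w_t$ is needed at all; if your per-iterate contraction held, the final iterate would already satisfy the bound and the weighting would be superfluous. To repair your argument, insert the descent step $\sum_s\lambda_t^s[f_s(\x_t)-f_s(\x_{t+1})]\ge\eta(1-\tfrac{L\eta}{2})\|\bd_t\|^2-(\text{error terms})$ before discarding the function-gap term; this recovers the missing $-\Omega(\eta)\|\bd_t\|^2$ and essentially collapses your proof into the paper's.
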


Further, Theorem~\ref{thm:STIMULUS_SC} immediately implies following with logarithmic sample complexity (in terms of $\epsilon$) \alg with a proper choice of learning rate and $q = |\mathcal{A}|=\lceil\sqrt{n}\rceil$. 
\begin{restatable}[Sample Complexity of \algns]{corollary}{vr_mooSCRate}
\label{cor:STIMULUS_SC}
By choosing $\eta \leq  \min\{\frac{1}{2}, \frac{1}{2\mu},\frac{1}{8L},\frac{\mu}{64L^2}\}, q = |\mathcal{A}|=\lceil\sqrt{n}\rceil\}$, the overall sample complexity of \alg for solving strongly convex MOO is $\mathcal{O}\left(n+ \sqrt{n} \ln ({\mu/\epsilon})\right)$.
\end{restatable}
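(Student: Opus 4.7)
The plan is to derive Corollary~\ref{cor:STIMULUS_SC} directly from Theorem~\ref{thm:STIMULUS_SC} by two routine steps: (i) invert the linear contraction bound to solve for the number of outer iterations $T$ needed to reach $\epsilon$-optimality, and then (ii) multiply by the per-iteration IFO budget, exploiting the choice $q=|\mathcal{A}|=\lceil\sqrt{n}\rceil$ that balances the cost of the periodic full snapshots against the cost of the stochastic correction steps.

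First, I would start from the linear rate $\mathbb{E}\|\x_t-\x^*\|^2 \leq \mu\|\x_0-\x^*\|^2 \exp(-3\eta\mu T/4)$ given by Theorem~\ref{thm:STIMULUS_SC}. To force the right-hand side below $\epsilon$, I would take logarithms and solve, yielding $T \geq \tfrac{4}{3\eta\mu}\ln\!\bigl(\mu\|\x_0-\x^*\|^2/\epsilon\bigr)$. With $\eta$ chosen at its prescribed upper bound $\min\{\tfrac{1}{2},\tfrac{1}{2\mu},\tfrac{1}{8L},\tfrac{\mu}{64L^2}\}$, the prefactor $1/(\eta\mu)$ becomes a condition-number-dependent constant independent of $n$ and $\epsilon$, so the iteration complexity reduces to $T=\mathcal{O}(\ln(\mu/\epsilon))$ after absorbing the initialization term $\|\x_0-\x^*\|^2$ into the constant.

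Next, I would translate this into IFO calls. Inspecting Algorithm~\ref{alg}, each length-$q$ inner epoch uses one full multi-gradient ($n$ IFO calls) plus up to $q-1$ recursive updates based on the mini-batch $\mathcal{A}$ ($(q-1)|\mathcal{A}|$ IFO calls), for a cost of $n+(q-1)|\mathcal{A}|$ per epoch. Substituting $q=|\mathcal{A}|=\lceil\sqrt{n}\rceil$ makes this $\mathcal{O}(n)$, and since $T$ iterations span $\lceil T/q\rceil$ epochs, the total IFO cost is
\begin{equation*}
\Bigl\lceil \tfrac{T}{\lceil\sqrt{n}\rceil} \Bigr\rceil \cdot \mathcal{O}(n) \;=\; \mathcal{O}(n) \;+\; \mathcal{O}\!\bigl(\sqrt{n}\cdot T\bigr) \;=\; \mathcal{O}\!\bigl(n+\sqrt{n}\,\ln(\mu/\epsilon)\bigr),
\end{equation*}
where the additive $\mathcal{O}(n)$ absorbs the unavoidable first snapshot even when $T$ is small, exactly matching the claimed bound.

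The only mildly subtle point, which is really a design observation rather than an obstacle, is the optimality of $q=|\mathcal{A}|=\lceil\sqrt{n}\rceil$: treating the total cost as $(n/q+|\mathcal{A}|)\cdot T + n$ and minimizing over $q,|\mathcal{A}|$ (subject to $q|\mathcal{A}|$ being fixed by the variance-reduction analysis underlying Theorem~\ref{thm:STIMULUS_SC}) gives the balance $q=|\mathcal{A}|\asymp\sqrt{n}$. Other than this balancing and a bookkeeping check that the step-size cap in Theorem~\ref{thm:STIMULUS_SC} keeps $\eta\mu$ bounded away from zero in the stated problem constants, there is no hard analytical step: the corollary is a clean substitution into the already-proved linear rate.
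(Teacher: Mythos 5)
Your proposal is correct and follows essentially the same route as the paper: the paper's own argument (appended to the proof of Theorem~\ref{thm:STIMULUS_SC}) inverts the linear rate to get $T=\mathcal{O}(\ln(\mu/\epsilon))$ and then counts IFO calls as $\lceil T/q\rceil n + T|\mathcal{A}| \leq \frac{T+q}{q}n + T\sqrt{n} = \mathcal{O}(n+\sqrt{n}\ln(\mu/\epsilon))$ under $q=|\mathcal{A}|=\lceil\sqrt{n}\rceil$, which is exactly your epoch-cost accounting. Your additional remarks on the condition-number prefactor and the balancing that makes $q=|\mathcal{A}|\asymp\sqrt{n}$ optimal are consistent with, and slightly more explicit than, what the paper writes.
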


There are also several interesting insights from Theorem~\ref{thm:STIMULUS_SC} and Corollary~\ref{cor:STIMULUS_SC} regarding \algns's performance for solving strongly convex MOO problems: 
{\bf 1)} \alg achieves an expected linear convergence rate of $\mathcal{O}(\mu \exp(-\mu T))$. 
Interestingly, this convergence rate matches that of MGD for strongly convex MOO problems as well as gradient descent for strongly convex single-objective optimization. 
%
{\bf 2)} Another interesting feature of \alg for strongly convex MOO stems from its use of randomly selected outputs $\x_t$ along with associated weights $w_t$ from the trajectory of $\x_t$, which is inspired by the similar idea for stochastic gradient descent (SGD)~\citep{ghadimi2013stochastic}. 
Note that, for implementation in practice, one does not need to store all $\x_t$-values.
Instead, the algorithm can be implemented by using a random clock for stopping~\citep{ghadimi2013stochastic}.

\subsection{Pareto Stationarity Convergence of \algmns} \label{subsec: STIMULUS_m}

Next, we turn our attention to the Pareto stationarity convergence of the \algm algorithm.
Again, we analyze \algm in non-convex and strongly convex settings:

\begin{restatable}[\algm for Non-convex MOO] {theorem}{STIMULUS_M_NonC}
\label{STIMULUSM_NonC}
Let $\eta_t = \eta \leq \min\{ \frac{1}{2L}, \frac{1}{2}\}, q = |\mathcal{A}|=\lceil\sqrt{n}\rceil$.
Under Assumptions~\ref{assump: smooth}, if at least one objective function $f_s(\cdot)$, $s \in [S]$, is bounded from below by $f_s^{\min}$, then the sequence $\{\x_t \}$ output by \algm satisfies $\frac{1}{T}\sum_{t=0}^{T-1}\min_{\boldsymbol{\lambda} \in C} \mathbb{E} \| \boldsymbol{\lambda}^{\top} \nabla \F(\x_t) \|^2  =\mathcal{O}(\frac{1}{T}).$
\end{restatable}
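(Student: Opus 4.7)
\textbf{Proof Proposal for Theorem \ref{STIMULUSM_NonC}.} The plan is to mimic the descent-plus-SPIDER-variance template that worked for \alg in Theorem~\ref{thm:STIMULUS_nonC}, but wrapped inside a heavy-ball-style Lyapunov function that absorbs the momentum cross terms. Concretely, I would single out one objective $f_s$ whose lower bound $f_s^{\min}$ is available, and work with the potential $\Phi_t := f_s(\x_t) - f_s^{\min} + \tfrac{c}{2}\|\x_t-\x_{t-1}\|^2$ for a constant $c>0$ to be fixed later (here $c$ plays the standard role of cancelling the $\alpha(\x_t-\x_{t-1})$ inner-product term that appears after applying the $L$-smooth descent lemma to the update $\x_{t+1}=\x_t-\eta\bd_t+\alpha(\x_t-\x_{t-1})$). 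Using Assumption~\ref{assump: smooth}, expanding, and completing the square gives an inequality of the form $\Phi_{t+1}\le \Phi_t - A\eta\,\langle \nabla f_s(\x_t),\bd_t\rangle + B\eta^2\|\bd_t\|^2 + C\|\x_t-\x_{t-1}\|^2 + \text{(noise cross-terms)}$ with $A,B,C$ controlled by $\eta,L,\alpha,c$.

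Next, I would control the noise by establishing the SPIDER-type recursion for the recursive estimator \eqref{vr}: within each inner block $t\in[n_k q,(n_k+1)q)$, $\mathbb{E}\|\bu_t^s-\nabla f_s(\x_t)\|^2 \le \mathbb{E}\|\bu_{t-1}^s-\nabla f_s(\x_{t-1})\|^2 + (L^2/|\mathcal{A}|)\,\mathbb{E}\|\x_t-\x_{t-1}\|^2$, which telescopes to a bound proportional to $(L^2/|\mathcal{A}|)\sum_{i=n_k q}^{t-1}\mathbb{E}\|\x_{i+1}-\x_i\|^2$. With $q=|\mathcal{A}|=\lceil\sqrt{n}\rceil$, this variance is at most a constant times the average squared step. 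Because the momentum update gives $\|\x_{t+1}-\x_t\|^2\le 2\eta^2\|\bd_t\|^2+2\alpha^2\|\x_t-\x_{t-1}\|^2$, I can push this variance bound all the way to a bound involving only $\sum_i \|\bd_i\|^2$ and $\|\x_i-\x_{i-1}\|^2$, both of which are already present in the Lyapunov descent inequality and can be absorbed on the left-hand side once $\eta\le\min\{1/(2L),1/2\}$ and $c$ are chosen appropriately.

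To convert the resulting estimate into a stationarity bound, I would use the MGDA optimality of $\boldsymbol{\lambda}_t^*$ from \eqref{mgda}: for the true-gradient simplex point $\boldsymbol{\lambda}_t^*$ we have
\begin{align*}
\min_{\boldsymbol{\lambda}\in C}\|\boldsymbol{\lambda}^\top\nabla\F(\x_t)\|^2 &\le \Big\|\sum_s \lambda_t^{s,*}\nabla f_s(\x_t)\Big\|^2 \\ &\le 2\|\bd_t\|^2 + 2\Big\|\sum_s \lambda_t^{s,*}(\nabla f_s(\x_t)-\bu_t^s)\Big\|^2,
\end{align*}
where the last term is bounded by $\sum_s \mathbb{E}\|\bu_t^s-\nabla f_s(\x_t)\|^2$ and hence by the SPIDER variance bound above. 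Telescoping the Lyapunov descent inequality from $t=0$ to $T-1$, using $\Phi_0-\Phi_T\le f_s(\x_0)-f_s^{\min}$ and that the absorbed $\|\x_t-\x_{t-1}\|^2$ terms are nonnegative, yields $\sum_{t=0}^{T-1}\mathbb{E}\|\bd_t\|^2=\mathcal{O}(1)$, and combining with the stationarity inequality above gives $\frac{1}{T}\sum_{t=0}^{T-1}\min_{\boldsymbol{\lambda}\in C}\mathbb{E}\|\boldsymbol{\lambda}^\top\nabla\F(\x_t)\|^2=\mathcal{O}(1/T)$.

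The main obstacle I foresee is the coupling between the momentum drift and the SPIDER variance: the variance bound is driven by $\sum_i\|\x_{i+1}-\x_i\|^2$, yet the momentum recursion $\|\x_{t+1}-\x_t\|^2\le 2\eta^2\|\bd_t\|^2+2\alpha^2\|\x_t-\x_{t-1}\|^2$ propagates these terms across iterations. The delicate part will be verifying that with $\alpha\in(0,1)$, $\eta\le\min\{1/(2L),1/2\}$, and $q=|\mathcal{A}|=\lceil\sqrt{n}\rceil$, one can pick $c$ so that the aggregated $\|\x_t-\x_{t-1}\|^2$ contributions on the right of the descent inequality are strictly dominated by those captured in $\Phi_t$, leaving a clean negative multiple of $\|\bd_t\|^2$ on the right. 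Once this constant-chase is settled, the rest of the argument follows the single-loop SPIDER/MOO template used in Theorem~\ref{thm:STIMULUS_nonC}.
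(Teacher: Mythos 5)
Your proposal is sound and reaches the right conclusion, but it takes a genuinely different route from the paper. The paper does not use a heavy-ball Lyapunov function: instead it unrolls the momentum recursion explicitly, writing $\x_{t+1}-\x_t=-\eta\sum_{i\le t}\alpha^{t-i}\bd_i$, and then carries the geometrically weighted sums $\sum_{i}\alpha^{t-i}\|\bd_i\|^2$ and $\sum_i\alpha^{t-i}\|\nabla f_s(\x_i)-\bu_i^s\|^2$ through the entire analysis (its Lemma~\ref{lem:update} is exactly the momentum analogue of Lemma~\ref{lem:update2} with these weighted sums in place of single terms), before combining with the same SPIDER recursion of Lemma~\ref{lem:bounded1}, telescoping over blocks, and converting $\frac{1}{T}\sum_t\mathbb{E}\|\bd_t\|^2=\mathcal{O}(1/T)$ into the stationarity measure via the same triangle-inequality decomposition you use. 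Your potential $\Phi_t=f_s(\x_t)-f_s^{\min}+\tfrac{c}{2}\|\x_t-\x_{t-1}\|^2$ buys a single-step descent inequality and avoids manipulating nested weighted sums, at the cost of the constant-chase you correctly flag: note that absorbing the $\|\x_t-\x_{t-1}\|^2$ terms requires a Young-type split $\|\x_{t+1}-\x_t\|^2\le(1+\beta)\alpha^2\|\x_t-\x_{t-1}\|^2+(1+1/\beta)\eta^2\|\bd_t\|^2$ with $(1+\beta)\alpha^2<1$, so your constants degrade as $\alpha\to1$ (the paper's weighted-sum bookkeeping hides the same degradation inside factors of $\tfrac{1}{1-\alpha}$). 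One step you should make explicit, since it is the only MOO-specific ingredient: converting $-\langle\nabla f_s(\x_t),\bd_t\rangle$ into $-\|\bd_t\|^2$ plus a variance term relies on the MGDA property $\langle\bu_t^s,\bd_t\rangle\ge\|\bd_t\|^2$ for the common direction $\bd_t$ in the convex hull of $\{\bu_t^s\}$; with that inserted, your template closes.
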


Similar to the basic \alg algorithm, by choosing the appropriate learning rate and inner loop length parameters, we immediately have the following sample complexity result for \algm for solving non-convex MOO problems:

\begin{restatable}[Sample Complexity of \algmns]{corollary}{STIMULUS_M_NCRate}
\label{STIMULUS_M_NCRate}
By choosing $\eta_t = \eta \leq \min\{ \frac{1}{2L}, \frac{1}{2}\}, q = |\mathcal{A}|=\lceil\sqrt{n}\rceil$. The overall sample complexity of  \algm under non-convex objective functions is $\mathcal{O}\left(\sqrt{n} \epsilon^{-1}+n\right)$.
\end{restatable}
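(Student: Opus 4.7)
The plan is to convert the $\mathcal{O}(1/T)$ iteration-complexity result of Theorem~\ref{STIMULUSM_NonC} into an IFO sample-complexity bound by carefully tallying the per-iteration oracle cost under the choice $q=|\mathcal{A}|=\lceil\sqrt{n}\rceil$. This mirrors exactly the argument used to derive Corollary~\ref{cor:STIMULUS_NC} from Theorem~\ref{thm:STIMULUS_nonC}; since the momentum term in \algm changes only the update rule applied to $\x_t$ after the direction $\bd_t$ has been assembled, and does not touch the multi-gradient estimator, the oracle-call profile is identical to that of plain \algns.

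First I would invoke Theorem~\ref{STIMULUSM_NonC}, which gives $\frac{1}{T}\sum_{t=0}^{T-1}\min_{\boldsymbol{\lambda}\in C}\mathbb{E}\|\boldsymbol{\lambda}^{\top}\nabla\F(\x_t)\|^2 = \mathcal{O}(1/T)$, so forcing the left-hand side below $\epsilon$ requires $T=\mathcal{O}(\epsilon^{-1})$ outer iterations to produce at least one $\epsilon$-stationary iterate in the sense of Definition~\ref{def:stationary} (and since the statement bounds a running average, such an iterate can be surfaced by uniform random stopping, without any change to the oracle accounting). Next I would count IFO calls: at each iteration with $\mathrm{mod}(t,q)=0$ the algorithm queries the full multi-gradient at a cost of $n$ IFO calls, while every other iteration uses the recursive estimator in Eq.~\eqref{vr}, which needs $\mathcal{O}(|\mathcal{A}|)$ IFO calls for the pair of minibatch gradient evaluations. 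Summing over $T$ iterations yields roughly $\lceil T/q\rceil n + T|\mathcal{A}|$ IFO calls; plugging in $q=|\mathcal{A}|=\lceil\sqrt{n}\rceil$ balances both contributions at $\mathcal{O}(T\sqrt{n})$, and adding the unavoidable $\mathcal{O}(n)$ cost of the initial full multi-gradient (which dominates when $T\sqrt{n}<n$) gives a total of $\mathcal{O}(n+T\sqrt{n})$. Substituting $T=\mathcal{O}(\epsilon^{-1})$ delivers the claimed $\mathcal{O}(n+\sqrt{n}\epsilon^{-1})$.

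The main obstacle here is not mathematical but rather a matter of bookkeeping: one has to verify that the ceiling in $q=\lceil\sqrt{n}\rceil$ perturbs only constants, and that the balancing identity $\lceil T/q\rceil\cdot n \asymp T|\mathcal{A}|$ (which is precisely why $\sqrt{n}$ is the optimal period choice) remains valid in both regimes, i.e.\ whether $n$ is small enough that the startup term $n$ dominates or large enough that the variance-reduction term $\sqrt{n}\epsilon^{-1}$ dominates. All of the substantive analytical work has already been absorbed into Theorem~\ref{STIMULUSM_NonC}; the corollary merely reads off the dependence on $n$ and $\epsilon$ once the per-iteration oracle cost has been made explicit and the parameters $q,|\mathcal{A}|$ have been set to their balancing values.
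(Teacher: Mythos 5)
Your proposal is correct and follows the same route as the paper: the paper likewise invokes the $\mathcal{O}(1/T)$ rate from Theorem~\ref{STIMULUSM_NonC} to get $T=\mathcal{O}(\epsilon^{-1})$ and then tallies the oracle cost as $\lceil T/q\rceil n + T|\mathcal{A}| \leq T\sqrt{n}+n+T\sqrt{n}$ under $q=|\mathcal{A}|=\lceil\sqrt{n}\rceil$, yielding $\mathcal{O}(n+\sqrt{n}\epsilon^{-1})$. Your observation that the momentum term leaves the multi-gradient estimator, and hence the IFO accounting, unchanged is exactly why the paper reuses the same counting as in Corollary~\ref{cor:STIMULUS_NC}.
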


The next two results state the Pareto optimality and sample complexity results for \algmns:
\begin{restatable}[\algm for $\mu$-Strongly Convex MOO] {theorem}{STIMULUSm_SC}
\label{thm:STIMULUSm_SC}
Let $\eta \leq  \min\{\frac{1}{2},\frac{1}{2\mu},\frac{1}{8L},\frac{\mu}{64L^2} \}, q = |\mathcal{A}|=\lceil\sqrt{n}\rceil$.
Under Assumption~\ref{assump: smooth}, \ref{assump: SC}, \ref{assump: add}, pick $\x_t$ as the final output of \algm with probability $w_t = ( 1 - \frac{3\mu \eta }{4})^{1-t}$.
Then, we have
$\mathbb{E}\|\x_t-\x^*\|^2 \leq \| \x_0 - \x_* \|^2 \mu \exp( - \frac{3\eta \mu T}{4}).$
\end{restatable}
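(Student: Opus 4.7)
The plan is to mirror the analysis used for Theorem \ref{thm:STIMULUS_SC} (basic \algns\ in the strongly convex setting), but wrap everything in a Lyapunov function that can absorb the additional $\alpha(\x_t-\x_{t-1})$ term coming from the momentum update. Since the restricted strong convexity condition (Assumption \ref{assump: add}) and the variance-reduced multi-gradient estimator $\bu_t^s$ behave exactly as they do in \algns, the heavy lifting is in showing that momentum does not destroy the linear contraction, and in choosing the Lyapunov weights so that the contraction factor remains $1-\tfrac{3}{4}\mu\eta$.

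First I would start from $\|\x_{t+1}-\x^*\|^2$ and expand the \algm\ update $\x_{t+1}=\x_t+\alpha(\x_t-\x_{t-1})-\eta\bd_t$, producing the three cross terms $-2\eta\langle\bd_t,\x_t-\x^*\rangle$, $2\alpha\langle\x_t-\x_{t-1},\x_t-\x^*\rangle$, and $-2\eta\alpha\langle\bd_t,\x_t-\x_{t-1}\rangle$, plus the quadratic remainders. The first is handled exactly as in the proof of Theorem \ref{thm:STIMULUS_SC}: decompose $\bd_t$ as $\sum_s\lambda_t^{s,*}\nabla f_s(\x_t)$ plus the VR error $\sum_s\lambda_t^{s,*}(\bu_t^s-\nabla f_s(\x_t))$, invoke $\mu$-strong convexity of $\sum_s\lambda_t^{s,*}f_s$ together with Assumption \ref{assump: add} to bound $-2\eta\langle\sum_s\lambda_t^{s,*}\nabla f_s(\x_t),\x_t-\x^*\rangle\leq -2\eta\mu\|\x_t-\x^*\|^2$ (after a standard convex-combination step), and use $L$-smoothness together with the SPIDER-style variance bound to control the VR error by a telescoping sum $L^2\eta\sum_{i=(n_k-1)q}^{t-1}\|\x_{i+1}-\x_i\|^2/|\mathcal{A}|$, which is precisely where the choice $q=|\mathcal{A}|=\lceil\sqrt{n}\rceil$ kicks in.

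The main obstacle is the momentum cross term. To close the recursion I would introduce the Lyapunov function
\begin{align*}
V_t \;=\; \|\x_t-\x^*\|^2 \;+\; \beta\,\|\x_t-\x_{t-1}\|^2,
\end{align*}
with a constant $\beta>0$ to be fixed at the end of the analysis. Applying Young's inequality to the two momentum cross terms and to the $L^2\eta\sum\|\x_{i+1}-\x_i\|^2$ residual from the VR bound, the $\|\x_t-\x_{t-1}\|^2$ contributions can be offset against $\beta\|\x_t-\x_{t-1}\|^2$ in $V_t$, provided $\alpha<1$, $\eta\leq\min\{\tfrac{1}{2},\tfrac{1}{2\mu},\tfrac{1}{8L},\tfrac{\mu}{64L^2}\}$, and $\beta$ is chosen of order $\alpha/(1-\alpha)$. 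After collecting terms, this yields the per-step inequality
\begin{align*}
\mathbb{E}[V_{t+1}] \;\leq\; \bigl(1-\tfrac{3\mu\eta}{4}\bigr)\,\mathbb{E}[V_t],
\end{align*}
valid on every inner-loop step, where the periodic full-gradient refresh at $\mathrm{mod}(t,q)=0$ zeroes out the VR error and therefore does not disturb the recursion.

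Finally, telescoping this one-step contraction from $t=0$ to $T-1$ gives $\mathbb{E}[V_T]\leq(1-\tfrac{3\mu\eta}{4})^T V_0\leq \exp(-\tfrac{3\mu\eta T}{4})V_0$, and since $V_0=\|\x_0-\x^*\|^2$ (because $\x_{-1}:=\x_0$), dropping the nonnegative $\beta\|\x_t-\x_{t-1}\|^2$ term on the left yields $\mathbb{E}\|\x_T-\x^*\|^2\leq\|\x_0-\x^*\|^2\exp(-\tfrac{3\mu\eta T}{4})$. The randomized output rule with weights $w_t=(1-\tfrac{3\mu\eta}{4})^{1-t}$ converts this into the stated bound on $\mathbb{E}\|\x_t-\x^*\|^2$ for the sampled iterate; the factor $\mu$ in the statement arises from the normalization constant of the weights, exactly as in Theorem \ref{thm:STIMULUS_SC}. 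The only genuinely new element compared to the \algns\ proof is the $\beta\|\x_t-\x_{t-1}\|^2$ bookkeeping, and the delicate part is verifying that the step-size ceiling $\eta\leq\mu/(64L^2)$ is still sufficient once the Young's inequality constants for the momentum terms are added in.
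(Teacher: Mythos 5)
Your plan diverges from the paper's proof in two structural ways, and one of your key steps does not go through as written.

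On the difference of route: the paper never forms a potential of the form $\|\x_t-\x^*\|^2+\beta\|\x_t-\x_{t-1}\|^2$. Instead it unrolls the momentum recursion into a single aggregated step, $\x_{t+1}-\x_t=-\eta\sum_{i\le t}\alpha^{(t-i)}\bd_i$, and then repeats the \algns{} strongly convex argument verbatim with $\bd_t$ replaced by this aggregated direction. Moreover, the paper's contraction is not obtained on $\|\x_t-\x^*\|^2$ directly: it derives a per-epoch telescoped inequality on the weighted objective gap $\sum_s\lambda_t^s[f_s(\x_t)-f_s(\x_*)]$, lower-bounds that gap via Assumption~\ref{assump: add}, and only then converts to $\mathbb{E}\|\x_t-\x^*\|^2$ through the weighted-averaging lemma of \cite{Karimireddy2020SCAFFOLD} with the weights $w_t$. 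Your direct iterate-distance Lyapunov argument, if completed, would actually give a cleaner last-iterate statement, which is a genuine advantage; but it is not the paper's argument.

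The concrete gap: your claimed one-step contraction $\mathbb{E}[V_{t+1}]\le(1-\tfrac{3\mu\eta}{4})\mathbb{E}[V_t]$ with $V_t=\|\x_t-\x^*\|^2+\beta\|\x_t-\x_{t-1}\|^2$ cannot hold on every inner-loop step, because the SPIDER-type bound (Lemma~\ref{lem:bounded1}) controls the estimator error at step $t$ by
\begin{equation*}
\frac{L^2}{|\mathcal{A}|}\sum_{i=(n_t-1)q}^{t-1}\mathbb{E}\|\x_{i+1}-\x_i\|^2,
\end{equation*}
i.e.\ by the \emph{accumulated} increments since the last full-gradient refresh, not only by $\|\x_t-\x_{t-1}\|^2$. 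A potential containing only the single most recent increment has nothing against which to offset the terms $\|\x_{i+1}-\x_i\|^2$ for $i<t-1$, so the per-step recursion fails as stated. You must either enlarge $V_t$ to include all within-epoch increments (with geometrically decaying coefficients) or, as the paper does, telescope the descent inequality over the whole inner loop of length $q$ before extracting a contraction, using $q=|\mathcal{A}|=\lceil\sqrt{n}\rceil$ to make the aggregated drift term subdominant. A secondary, smaller issue is that your absorption of the momentum cross term $2\alpha\langle\x_t-\x_{t-1},\x_t-\x^*\rangle$ while preserving the exact factor $1-\tfrac{3\mu\eta}{4}$ requires the Young parameter to scale like $\mu\eta/\alpha$, which inflates $\beta$ to order $\alpha^2/(\mu\eta(1-\alpha))$ rather than $\alpha/(1-\alpha)$; you would need to verify that the $\|\x_{t+1}-\x_t\|^2$ recursion still contracts with that $\beta$ under the stated step-size ceiling, which is not automatic for $\alpha$ close to $1$.
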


\begin{restatable}[Sample Complexity of \algmns]{corollary}{vrm_mooSCRate}
\label{cor:STIMULUSm_SC}
By choosing $\eta \leq  \min\{\frac{1}{2},\frac{1}{2\mu},\frac{1}{8L},\frac{\mu}{64L^2} \}, q = |\mathcal{A}|=\lceil\sqrt{n}\rceil$, the overall sample complexity of \algm for solving strongly convex MOO is $\mathcal{O}\left(n+ \sqrt{n} \ln ({\mu/\epsilon})\right)$.
\end{restatable}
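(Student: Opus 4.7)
The plan is to derive the sample complexity directly from the linear convergence rate in Theorem~\ref{thm:STIMULUSm_SC} by coupling an iteration count with the per-iteration IFO cost implied by the double-loop structure of Algorithm~\ref{alg}.

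First, I would use Theorem~\ref{thm:STIMULUSm_SC}: with the prescribed step size $\eta \leq \min\{\tfrac{1}{2},\tfrac{1}{2\mu},\tfrac{1}{8L},\tfrac{\mu}{64L^2}\}$ and $q = |\mathcal{A}| = \lceil\sqrt{n}\rceil$, the randomly returned iterate satisfies $\mathbb{E}\|\x_T-\x^*\|^2 \leq \|\x_0-\x^*\|^2\,\mu\exp(-3\eta\mu T/4)$. Solving $\|\x_0-\x^*\|^2\mu\exp(-3\eta\mu T/4)\leq \epsilon$ for $T$ gives
\[
T \;\geq\; \frac{4}{3\eta\mu}\ln\!\Big(\frac{\mu\|\x_0-\x^*\|^2}{\epsilon}\Big) \;=\; \mathcal{O}\!\Big(\ln(\mu/\epsilon)\Big),
\]
since $\eta$ and $\mu$ are absorbed into the constant. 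Hence $T = \mathcal{O}(\ln(\mu/\epsilon))$ iterations suffice to reach $\epsilon$-optimality in the sense of Definition~\ref{def:optimality}.

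Next, I would tally the IFO calls per iteration. Algorithm~\ref{alg} divides iterations into epochs of length $q$. At each ``snapshot'' iteration ($\mathrm{mod}(t,q)=0$), \algm computes a full multi-gradient costing $n$ IFO calls. For each of the remaining $q-1$ iterations in an epoch, the recursive estimator in Eq.~\eqref{vr} costs $2|\mathcal{A}|$ IFO calls (two multi-gradient evaluations on the mini-batch $\mathcal{A}$). With $q=|\mathcal{A}|=\lceil\sqrt{n}\rceil$, the per-epoch cost is $n + 2(q-1)|\mathcal{A}| = n + \mathcal{O}(n) = \mathcal{O}(n)$, so the amortized per-iteration cost is $\mathcal{O}(n/q) = \mathcal{O}(\sqrt{n})$.

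Combining, the total sample complexity is $n + \mathcal{O}(\sqrt{n})\cdot T = \mathcal{O}(n + \sqrt{n}\ln(\mu/\epsilon))$, where the leading $n$ accounts for the initial snapshot at $t=0$ (which is unavoidable even if fewer than $q$ iterations are ultimately required). The main obstacle I anticipate is purely bookkeeping: one must be careful that the random stopping distribution $w_t = (1-3\mu\eta/4)^{1-t}$ from Theorem~\ref{thm:STIMULUSm_SC} does not inflate the IFO count in expectation. This is handled by observing that the algorithm runs a deterministic schedule of gradient evaluations up to the stopping time, so the cost bound above applies conditionally on any realization of the stopping time $t \leq T$, and taking expectation preserves the $\mathcal{O}(n+\sqrt{n}\ln(\mu/\epsilon))$ bound. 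Everything else reduces to plugging in $\eta$, $q$, and $|\mathcal{A}|$, which gives the stated complexity.
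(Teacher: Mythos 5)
Your proposal is correct and follows essentially the same route as the paper: invert the linear rate from Theorem~\ref{thm:STIMULUSm_SC} to get $T=\mathcal{O}(\ln(\mu/\epsilon))$, then count $\lceil T/q\rceil\, n$ IFO calls for the periodic full multi-gradients plus $\mathcal{O}(|\mathcal{A}|)$ per inner iteration, which with $q=|\mathcal{A}|=\lceil\sqrt{n}\rceil$ gives $\mathcal{O}(n+\sqrt{n}\ln(\mu/\epsilon))$. Your factor of $2|\mathcal{A}|$ for the recursive estimator and your remark about the random output index are harmless refinements that do not change the bound.
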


We remark that the convergence rate upper bound of \algm is the same as that in Theorem~\ref{thm:STIMULUS_SC}, which suggests a potentially loose convergence upper bound in Theorem~\ref{thm:STIMULUSm_SC} due to the technicality and intricacies in analyzing momentum-based stochastic multi-gradient algorithms for solving non-convex MOO problems. 
Yet, we note that even this potentially loose convergence rate upper bound in Theorem~\ref{thm:STIMULUSm_SC} already suffices to establish a linear convergence rate for \algm in solving strongly convex MOO problems.
Moreover, we will show later in Section~\ref{sec:exp} that this momentum-assisted method significantly accelerates the empirical convergence speed performance.
It is also worth noting that there are two key differences in the proofs of Theorem~\ref{STIMULUSM_NonC} and \ref{cor:STIMULUSm_SC} compared to those of the momentum-based stochastic gradient algorithm for single-objective non-convex optimization: 1) our proof exploits the martingale structure of the $\bu_t^s$. 
This enables us to tightly bound the mean-square error term $\mathbb{E}\left\|\nabla f_s\left(\x_t\right)-\bu_t^s\right\|^2$ under the momentum scheme. 
In contrast, in the traditional analysis of stochastic algorithms with momentum, this error term corresponds to the variance of the stochastic estimator and is typically assumed to be bounded by a universal constant. 
2) Our proof requires careful manipulation of the bounding strategy to effectively handle the accumulation of the mean-square error $\mathbb{E}\left\|\nabla f_s\left(\x_k\right)-\bu_t^s\right\|^2$ over the entire optimization trajectory in non-convex MOO. 

\begin{figure*}[t!]
    \centering
    \subfigure[Training loss convergence in terms of iterations.]{
        \includegraphics[width=0.24\textwidth]{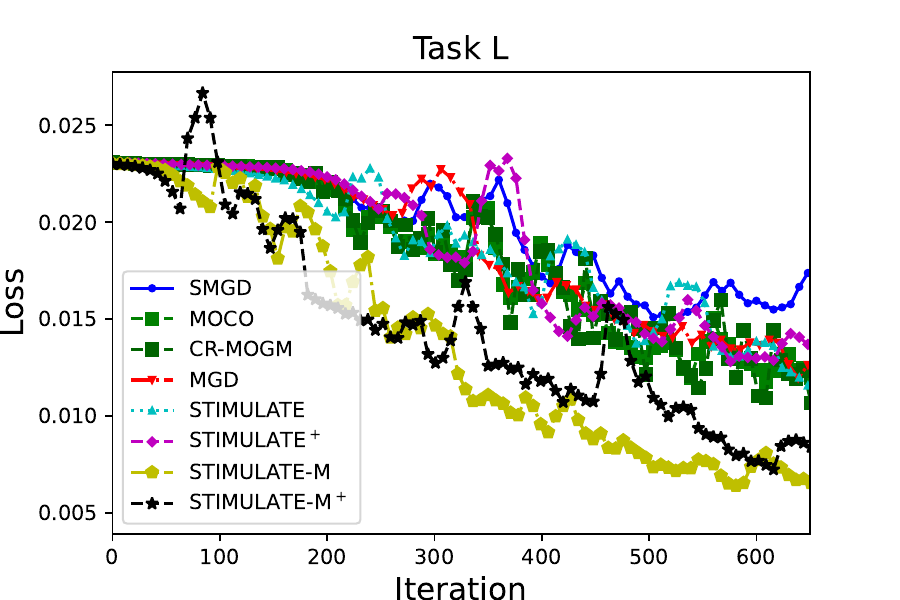}
        \includegraphics[width=0.24\textwidth]{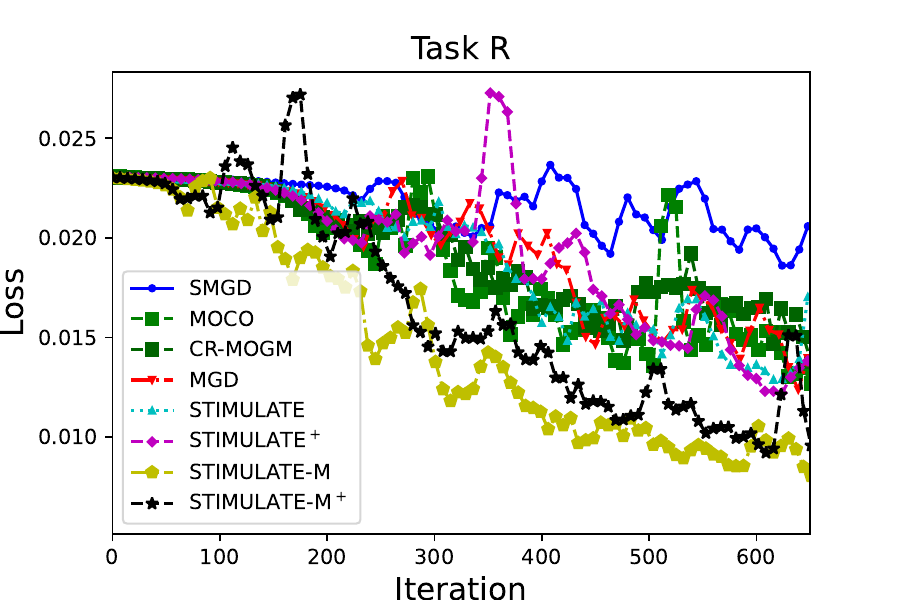}
    }
    \hfill
    \subfigure[Training loss convergence in terms of samples.]{
        \includegraphics[width=0.24\textwidth]{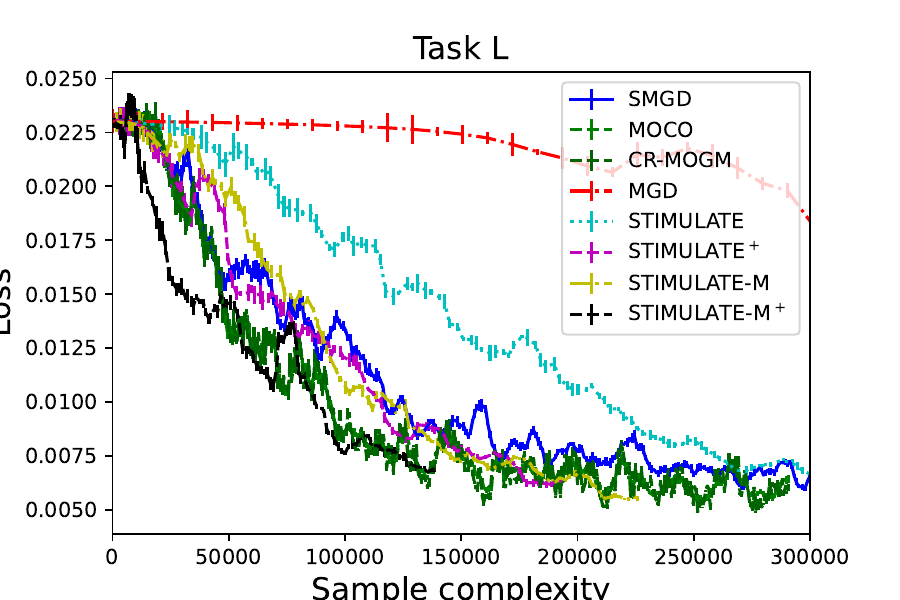}
        \includegraphics[width=0.24\textwidth]{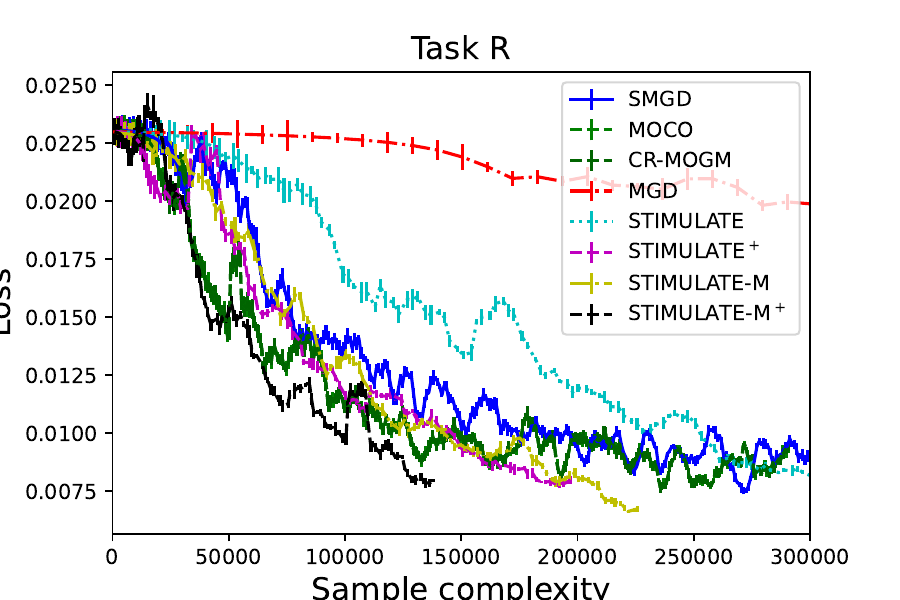}
    }
    \caption{Training loss convergence comparisons between different MOO algorithms.}
    \label{fig:compare_mnist}
\end{figure*}

\subsection{Pareto Stationarity Convergence Results of \algp and \algmpns} \label{subsec: STIMULUSp_m}


Next, we present the Pareto stationarity convergence and the associated sample complexity results of the \algpns/\algmp algorithms for non-convex MOO as follows:

\begin{restatable}[\algpns/\algmpns]{theorem}{STIMULUSm_P_NonC}
\label{thm:STIMULUSmp_nonC}
Let $\eta \leq \min\{ \frac{1}{4L}, \frac{1}{2}\},q = |\mathcal{A}|=\lceil\sqrt{n}\rceil$. 
By choosing $c_\gamma$ and $c_\epsilon$ as such that $c_\gamma \geq 8$, and $c_{\epsilon}\geq \eta$,
under Assumptions~\ref{assump: smooth} and \ref{ass3}, if at least one function $f_s(\cdot)$, $s \in [S]$ is bounded from below by $f_s^{\min}$, then the sequence $\{\x_t \}$ output by \algpns/\algmp satisfies:
$\frac{1}{T}\sum_{t=0}^{T-1}\min_{\boldsymbol{\lambda} \in C} \mathbb{E} \| \boldsymbol{\lambda}^{\top} \nabla \F(\x_t) \|^2   =\mathcal{O}(\frac{1}{T}).$
\end{restatable}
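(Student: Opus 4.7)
The plan is to adapt the convergence argument used for Theorems~\ref{thm:STIMULUS_nonC} and \ref{STIMULUSM_NonC} to account for the additional variance introduced when the checkpoint iteration replaces a full multi-gradient by the adaptive mini-batch in Eq.~\eqref{STIMULUSP1}. The central quantity to control is the tracking error
$
e_t^s \triangleq \mathbb{E}\bigl\|\nabla f_s(\x_t) - \bu_t^s\bigr\|^2,
$
together with the descent quantity $\mathbb{E}\|\bd_t\|^2$. For \algp the $\x$-update is the same as in \algns, and for \algmp the $\x$-update carries an extra momentum term, so the two proofs share the same skeleton with only the momentum-accumulation bookkeeping differing.

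\textbf{Step 1 (Checkpoint variance).} At any $t$ with $\mathrm{mod}(t,q)=0$ we use Assumption~\ref{ass3} and the independence of samples inside $\mathcal{N}_s$ to obtain
\begin{equation*}
e_t^s \,\leq\, \frac{\sigma^2}{|\mathcal{N}_s|} \,\leq\, \max\!\left\{ \frac{\gamma_t}{c_\gamma},\; \frac{\epsilon}{c_\epsilon},\; \frac{\sigma^2}{n} \right\},
\end{equation*}
which is the direct consequence of the definition of $|\mathcal{N}_s|$ in Eq.~\eqref{STIMULUSP2}. The choice $c_\gamma \geq 8$ is what ultimately lets the first term be absorbed by the descent term; the choice $c_\epsilon \geq \eta$ makes the second term contribute only $\mathcal{O}(\epsilon)$ after averaging; and the third is already the tight variance of a full-batch surrogate.

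\textbf{Step 2 (Inner-loop error propagation).} For $\mathrm{mod}(t,q)\ne 0$, using the recursive form in Eq.~\eqref{vr}, Assumption~\ref{assump: smooth}, and the martingale structure of the mini-batch correction (exactly as in the proofs of Theorems~\ref{thm:STIMULUS_nonC} and \ref{STIMULUSM_NonC}), I would derive
\begin{equation*}
e_t^s \,\leq\, e_{t-1}^s + \frac{L^2}{|\mathcal{A}|}\,\mathbb{E}\|\x_t-\x_{t-1}\|^2.
\end{equation*}
Telescoping from the last checkpoint $t_k = (n_k-1)q$ to $t$ and substituting $\x_t - \x_{t-1}$ (which is $-\eta\bd_{t-1}$ for \algp and $-\eta\bd_{t-1}+\alpha(\x_{t-1}-\x_{t-2})$ for \algmp) yields a bound of the form
\begin{equation*}
e_t^s \,\leq\, e_{t_k}^s + \frac{L^2\eta^2}{|\mathcal{A}|}\sum_{i=t_k}^{t-1} C_i\,\mathbb{E}\|\bd_i\|^2,
\end{equation*}
where $C_i=1$ for \algp and $C_i$ depends on the geometric decay of $\alpha^{t-i}$ for \algmp.

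\textbf{Step 3 (Coupling with the descent lemma and closing the loop).} Following Theorems~\ref{thm:STIMULUS_nonC} and \ref{STIMULUSM_NonC}, pick any $s$ whose $f_s$ is bounded below by $f_s^{\min}$ and write the $L$-smoothness descent inequality for that coordinate; the cross term $\langle \nabla f_s(\x_t),\bd_t\rangle$ is handled by the standard MOO identity $\sum_s \lambda_t^s \nabla f_s(\x_t)^\top \bd_t \geq \|\bd_t\|^2 - \sum_s \lambda_t^s e_t^s$ obtained from the KKT conditions of Eq.~\eqref{mgda}. Summing over $t=0,\ldots,T-1$, plugging in the Step 2 telescoped bound, and using $q=|\mathcal{A}|=\lceil\sqrt{n}\rceil$ so that $L^2\eta^2 q/|\mathcal{A}|=\mathcal{O}(1)$ (as in the basic analyses), the accumulated tracking error becomes a constant multiple of $\sum_t \mathbb{E}\|\bd_t\|^2$. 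Substituting the definition $\gamma_{t+1}=\frac{1}{q}\sum_{i=t_k}^t \|\bd_i\|^2$ (respectively the momentum-weighted version) into the Step 1 bound shows that $\gamma_t/c_\gamma \leq \tfrac{1}{8q}\sum_i \mathbb{E}\|\bd_i\|^2$, which can be absorbed into the $\|\bd_t\|^2$ descent gain for $\eta$ small enough. What is left on the right-hand side is $\mathcal{O}(\epsilon/c_\epsilon) + \mathcal{O}(\sigma^2/n)$ plus the telescoping gap $\mathbb{E}[f_s(\x_0)]-f_s^{\min}$, which, divided by $T$, yields $\frac{1}{T}\sum_t \min_{\boldsymbol{\lambda}\in C}\mathbb{E}\|\boldsymbol{\lambda}^\top\nabla\F(\x_t)\|^2 = \mathcal{O}(1/T)$.

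\textbf{Main obstacle.} The hard part is closing the self-referential loop introduced by $\gamma_t$: the adaptive batch size depends on past $\|\bd_i\|^2$, which themselves appear inside the tracking-error telescoping. The condition $c_\gamma \geq 8$ must be tight enough to make the checkpoint variance absorbable by the descent gain after also accounting for the inner-loop error accumulation scaled by $L^2\eta^2 q/|\mathcal{A}|$. For \algmpns, the momentum-weighted $\gamma_t$ with weights $\alpha^{t-i}$ must be matched to the analogous geometric-series bound that arises when expanding $\x_t-\x_{t-1}$ recursively; keeping the constants consistent there is the most delicate bookkeeping step, and it is the reason the theorem requires the extra constraint $\eta\leq 1/(4L)$ rather than the $1/(2L)$ used in the full-gradient variant.
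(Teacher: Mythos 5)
Your proposal follows essentially the same route as the paper's proof: bound the checkpoint-iteration variance by $\gamma_t/c_\gamma + \epsilon/c_\epsilon$ via the definition of $|\mathcal{N}_s|$, propagate the tracking error through the inner loop with the recursive-estimator lemma, absorb the $\gamma_t$ contribution into the $\|\bd_t\|^2$ descent gain using $c_\gamma\geq 8$ and $q=|\mathcal{A}|=\lceil\sqrt{n}\rceil$, and telescope. One small correction: your Step 1 retains a third term $\sigma^2/n$, which after summing over $T$ iterations leaves a constant residual that does not vanish as $T\to\infty$; the paper avoids this by multiplying the variance bound by the indicator $I_{(\mathcal{N}_s<n)}$ — when the adaptive batch saturates at $n$ the finite-sum estimator is the exact full gradient and the checkpoint variance is identically zero — so you should drop that term rather than carry it as an $\mathcal{O}(\sigma^2/n)$ residual.
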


\begin{restatable}[Sample Complexity]{corollary}{STIMULUS_P_NCRate}
\label{STIMULUS_Mp_NCRate}
By choosing $\eta \leq \min\{ \frac{1}{4L}, \frac{1}{2}\},q = |\mathcal{A}|=\lceil\sqrt{n}\rceil$, $c_\gamma \geq 8$, and $c_{\epsilon}\geq \eta$. The overall sample complexity of  \algpns/ \algmp under non-convex objective functions is $\mathcal{O}\left(\sqrt{n} \epsilon^{-1}+n\right)$.
\end{restatable}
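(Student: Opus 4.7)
The plan is to combine the iteration complexity implied by Theorem~\ref{thm:STIMULUSmp_nonC} with a per-iteration IFO accounting that separates the cost of the $q$-periodic ``outer'' gradient refresh from the cost of the recursive ``inner'' updates. First, Theorem~\ref{thm:STIMULUSmp_nonC} gives $\frac{1}{T}\sum_{t=0}^{T-1}\min_{\boldsymbol{\lambda}\in C}\mathbb{E}\|\boldsymbol{\lambda}^{\top}\nabla\F(\x_t)\|^2 = \mathcal{O}(1/T)$ under the stipulated $\eta \le \min\{1/(4L),1/2\}$, $q=\lceil\sqrt{n}\rceil$, $c_\gamma\ge 8$, $c_\epsilon \ge \eta$. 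Therefore, to guarantee an $\epsilon$-stationary iterate in the sense of Definition~\ref{def:stationary}, it suffices to take $T = \mathcal{O}(\epsilon^{-1})$.

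Next I would count IFO calls by splitting the iterations into two disjoint types. For an \emph{outer} iteration, i.e.\ one with $\mathrm{mod}(t,q)=0$, the estimator in Eq.~\eqref{STIMULUSP1} draws $|\mathcal{N}_s|$ samples per task, and the $\min\{\cdot,\cdot,n\}$ in Eq.~\eqref{STIMULUSP2} yields the deterministic bound $|\mathcal{N}_s|\le n$. The number of outer iterations is at most $\lceil T/q\rceil+1$, so the cumulative outer cost is $\mathcal{O}\bigl(n(\lceil T/q\rceil+1)\bigr) = \mathcal{O}\bigl(n + n\cdot \epsilon^{-1}/\sqrt{n}\bigr) = \mathcal{O}(n+\sqrt{n}\epsilon^{-1})$. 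For an \emph{inner} iteration, the recursive correction in Eq.~\eqref{vr} requires two multi-gradient evaluations per sample in $\mathcal{A}$, so each inner iteration costs $\mathcal{O}(|\mathcal{A}|)=\mathcal{O}(\sqrt{n})$ IFO calls; summed over the at most $T=\mathcal{O}(\epsilon^{-1})$ iterations this gives $\mathcal{O}(\sqrt{n}\epsilon^{-1})$. Adding the two contributions yields $\mathcal{O}(n+\sqrt{n}\epsilon^{-1})$, which is the claimed bound.

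The same bookkeeping transfers verbatim to \algmp because the momentum update $\x_{t+1}=\x_t+\alpha(\x_t-\x_{t-1})-\eta\bd_t$ differs from the \algp update only in the $\x$-recursion and does not change the number of stochastic multi-gradient queries per iteration; and Theorem~\ref{thm:STIMULUSmp_nonC} already covers both variants with the same $T=\mathcal{O}(\epsilon^{-1})$ iteration bound. The only subtlety, and the point I expect to need the most care, is the adaptive term $c_\gamma\sigma^2/\gamma_t$ inside the $\min$ of Eq.~\eqref{STIMULUSP2}: this quantity depends on the running direction norms and could blow up when $\gamma_t$ is small, so the argument must use the trivial cap $|\mathcal{N}_s|\le n$ from the same $\min$ to obtain the worst-case complexity. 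A tighter, problem-dependent bound could be obtained by exploiting $\gamma_t$ together with the descent lemma used to prove Theorem~\ref{thm:STIMULUSmp_nonC}, but this is not required for the stated $\mathcal{O}(n+\sqrt{n}\epsilon^{-1})$ rate.
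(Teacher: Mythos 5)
Your proposal is correct and follows essentially the same route as the paper: the paper likewise invokes the $\mathcal{O}(1/T)$ rate of Theorem~\ref{thm:STIMULUSmp_nonC} to get $T=\mathcal{O}(\epsilon^{-1})$ and then counts $\lceil T/q\rceil\, n + T|\mathcal{A}| = \mathcal{O}(n+\sqrt{n}\epsilon^{-1})$ with $q=|\mathcal{A}|=\lceil\sqrt{n}\rceil$, implicitly using the cap $|\mathcal{N}_s|\le n$ from Eq.~\eqref{STIMULUSP2} that you make explicit. Your added remark about the adaptive term $c_\gamma\sigma^2/\gamma_t$ being handled by the $\min\{\cdot,\cdot,n\}$ truncation is a correct and slightly more careful articulation of what the paper's one-line count assumes.
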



\begin{restatable}[\algpns/\algmpns] {theorem}{STIMULUSP_SC}
\label{thm:STIMULUSP_SC}
Let $\eta \leq \min\{\frac{1}{2},\frac{1}{2\mu},\frac{1}{8L},\frac{\mu}{64L^2} \},c_{\gamma}\geq \frac{8\mu}{\eta}, c_{\epsilon}\geq \frac{\mu}{2}, q = |\mathcal{A}|=\lceil\sqrt{n}\rceil$.
Under Assumptions~\ref{assump: smooth}-~\ref{assump: add}, pick $\x_t$ as the final output of the \algpns/\algmp algorithm with weights $w_t = ( 1 - \frac{3\mu \eta }{4})^{1-t}$.
Then, it holds that $\mathbb{E}\|\x_t-\x^*\|^2 \leq \| \x_0 - \x_* \|^2 \mu \exp( - \frac{3\eta \mu T}{4}).$

\end{restatable}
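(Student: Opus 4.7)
The plan is to mirror the strategy used for Theorem~\ref{thm:STIMULUS_SC} / Theorem~\ref{thm:STIMULUSm_SC}, but replace the ``zero-variance at restart'' property (which \algns{} enjoys because it uses a full multi-gradient whenever $\mathrm{mod}(t,q)=0$) by an adaptive-batch variance bound controlled by the quantity $\gamma_t$ and by $\epsilon$. For \algpns, I would first establish a one-step descent-type recursion of the form
\begin{equation*}
\mathbb{E}\|\x_{t+1}-\x^*\|^2 \le \left(1-\tfrac{3\mu\eta}{4}\right)\mathbb{E}\|\x_t-\x^*\|^2 - c_1\eta\,\mathbb{E}\|\bd_t\|^2 + c_2\eta^2\sum_{s\in[S]}\mathbb{E}\|\bu_t^s-\nabla f_s(\x_t)\|^2,
\end{equation*}
using the update $\x_{t+1}=\x_t-\eta\bd_t$, Assumption~\ref{assump: smooth}, the $\mu$-strong convexity in Assumption~\ref{assump: SC}, Assumption~\ref{assump: add} applied to the active objective selected by the optimal $\boldsymbol{\lambda}_t^\ast$, together with Cauchy--Schwarz to split cross terms. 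The restrictions $\eta\le \min\{\tfrac12,\tfrac{1}{2\mu},\tfrac{1}{8L},\tfrac{\mu}{64L^2}\}$ are precisely what is needed to absorb the smoothness terms and produce the contraction factor $(1-3\mu\eta/4)$.

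The next step is to bound the per-task variance $V_t^s:=\mathbb{E}\|\bu_t^s-\nabla f_s(\x_t)\|^2$. For $t$ inside an inner loop, the SARAH-style recursion~\eqref{vr} combined with $L$-smoothness and $|\mathcal{A}|=\lceil\sqrt{n}\rceil$ yields the standard telescoping bound
\begin{equation*}
V_t^s \le V_{t_0}^s + \frac{L^2}{|\mathcal{A}|}\sum_{i=t_0}^{t-1}\mathbb{E}\|\x_{i+1}-\x_i\|^2 \le V_{t_0}^s + \frac{L^2\eta^2}{|\mathcal{A}|}\sum_{i=t_0}^{t-1}\mathbb{E}\|\bd_i\|^2,
\end{equation*}
where $t_0$ is the last restart. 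At the restart itself, instead of $V_{t_0}^s=0$ (as in \algns), independence of the sample $\mathcal{N}_s$ gives $V_{t_0}^s \le \sigma^2/|\mathcal{N}_s|$, which by~\eqref{STIMULUSP2} is at most $\min\{\gamma_{t_0}/c_\gamma,\ \epsilon/c_\epsilon\}$. With the definition $\gamma_{t_0}=q^{-1}\sum_{i=t_0-q}^{t_0-1}\|\bd_i\|^2$, this ties $V_{t_0}^s$ directly to the descent terms produced in the previous inner loop, which is exactly what is needed for a clean telescoping argument.

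Plugging the variance bound back into the descent recursion and summing over one inner loop of length $q=\lceil\sqrt n\rceil$, the choice $c_\gamma\ge 8\mu/\eta$ ensures that $c_2\eta^2\sum_s V_t^s$ can be absorbed into the negative $c_1\eta\mathbb{E}\|\bd_t\|^2$ terms, while $c_\epsilon\ge\mu/2$ controls the residual $\epsilon$-floor. This leaves a clean one-loop contraction
\begin{equation*}
\mathbb{E}\|\x_{t+1}-\x^*\|^2 \le \left(1-\tfrac{3\mu\eta}{4}\right)\mathbb{E}\|\x_t-\x^*\|^2,
\end{equation*}
valid on average across the loop. Unrolling and using the probability weights $w_t=(1-3\mu\eta/4)^{1-t}$ to select the output exactly as in the proof of Theorem~\ref{thm:STIMULUS_SC} gives $\mathbb{E}\|\x_t-\x^*\|^2 \le \|\x_0-\x^*\|^2\mu\exp(-3\eta\mu T/4)$. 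For \algmpns, I would augment the analysis with a Lyapunov function $\Phi_t=\mathbb{E}\|\x_t-\x^*\|^2+\beta\mathbb{E}\|\x_t-\x_{t-1}\|^2$ for a small $\beta>0$, so that the momentum term $\alpha(\x_t-\x_{t-1})$ in~\eqref{vrm_update} is absorbed; the same variance/telescoping machinery then applies with $\gamma_{t+1}=q^{-1}\sum_i\|\alpha^{(t-i)}\bd_i\|^2$ producing the geometric weighting that matches the momentum update.

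\textbf{Main obstacle.} The delicate step is the simultaneous calibration of the adaptive batch size and the inner-loop length. Because $V_{t_0}^s$ is itself a random quantity (it depends on the realized $\gamma_{t_0}$), the absorption of $c_2\eta^2 V_t^s$ by the descent terms $-c_1\eta\|\bd_i\|^2$ of the \emph{previous} loop must be done in expectation, requiring careful coupling across consecutive inner loops and a tower-property argument. Verifying that the chosen constants $c_\gamma\ge 8\mu/\eta$ and $c_\epsilon\ge \mu/2$ are large enough, yet compatible with the $\eta$-bounds, is the technical heart of the proof; once this calibration closes, the linear rate follows from the same randomized-output trick used in Theorem~\ref{thm:STIMULUS_SC}.
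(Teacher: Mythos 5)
Your proposal matches the paper's proof in all essential respects: the same smoothness-plus-strong-convexity descent inequality with Young's inequality at $\delta=\mu/8$ to produce the $(1-\tfrac{3\mu\eta}{4})$ contraction, the same SARAH-style variance recursion from Lemma~\ref{lem:bounded1} with the restart variance bounded by $\sigma^2/|\mathcal{N}_s|\le \gamma_{t_0}/c_\gamma+\epsilon/c_\epsilon$, the same absorption of the $\gamma$-term into the accumulated $-\|\bd_i\|^2$ descent via $c_\gamma\ge 8\mu/\eta$ and of the $\epsilon$-floor via $c_\epsilon\ge\mu/2$, the same use of Assumption~\ref{assump: add} to pass from weighted function-value gaps to squared distances, and the same weighted-output selection with $w_t=(1-\tfrac{3\mu\eta}{4})^{1-t}$. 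The only genuine deviation is your treatment of the momentum variant: you propose a Lyapunov function $\Phi_t=\mathbb{E}\|\x_t-\x^*\|^2+\beta\,\mathbb{E}\|\x_t-\x_{t-1}\|^2$, whereas the paper unrolls the momentum update into a geometric sum $\x_{t+1}=\x_t-\eta\sum_i\alpha^{(t-i)}\bd_i$ and carries that sum through the same inequalities; both are standard and would work, with the paper's version reusing the non-momentum algebra more directly and yours localizing the momentum bookkeeping into a single potential term.
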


\begin{restatable}[Sample Complexity]{corollary}{vr_moomSCRate}
\label{cor:STIMULUSmp_SC}
By choosing $\eta \leq \min\{\frac{1}{2},\frac{1}{2\mu},\frac{1}{8L},\frac{\mu}{64L^2} \},c_{\gamma}\geq \frac{8\mu}{\eta}, c_{\epsilon}\geq \frac{\mu}{2}, q = |\mathcal{A}|=\lceil\sqrt{n}\rceil$,
the overall sample complexity of  \algpns/ \algmp for solving strongly-convex MOO is $\mathcal{O}\left(n+ \sqrt{n} \ln ({\mu/\epsilon})\right)$.
\end{restatable}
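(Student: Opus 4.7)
The plan is to derive this sample complexity directly from the linear convergence bound in Theorem~\ref{thm:STIMULUSP_SC} by (i) inverting the exponential rate to obtain the iteration count $T$ required to reach $\epsilon$-optimality, and (ii) carefully tallying the per-iteration gradient queries under the adaptive batching rule \eqref{STIMULUSP2} combined with the inner-loop mini-batch of size $|\mathcal{A}|=\lceil\sqrt n\rceil$. The structure mirrors the proof of Corollary~\ref{cor:STIMULUS_SC}, with the extra subtlety that the outer-iteration batch is now the stochastic $|\mathcal{N}_s|$ rather than the full $n$.

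First, I would set the RHS of the bound $\mathbb{E}\|\x_T-\x^*\|^2\le \|\x_0-\x^*\|^2\mu\exp(-3\eta\mu T/4)$ equal to $\epsilon$ and solve for $T$. This yields
\begin{equation*}
T \;=\; \Big\lceil \tfrac{4}{3\eta\mu}\,\ln\!\big(\mu\|\x_0-\x^*\|^2/\epsilon\big)\Big\rceil \;=\; \mathcal{O}\!\big(\tfrac{1}{\eta\mu}\ln(\mu/\epsilon)\big),
\end{equation*}
which, with $\eta$ fixed by the stepsize choice $\eta\le\min\{\tfrac12,\tfrac{1}{2\mu},\tfrac{1}{8L},\tfrac{\mu}{64L^2}\}$, reduces to $T=\mathcal{O}(\ln(\mu/\epsilon))$ up to problem-dependent constants.

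Next, I would account for the samples used across one \emph{epoch} of $q$ consecutive iterations. In the single outer step (the $\mathrm{mod}(t,q)=0$ iteration) the adaptive batch rule \eqref{STIMULUSP2} with $c_\gamma\ge 8\mu/\eta$ and $c_\epsilon\ge\mu/2$ gives
\begin{equation*}
|\mathcal{N}_s|\;=\;\min\{c_\gamma\sigma^2\gamma_t^{-1},\;c_\epsilon\sigma^2\epsilon^{-1},\;n\}\;\le\;n,
\end{equation*}
so the outer step costs at most $n$ samples per task. The remaining $q-1$ inner iterations each use the mini-batch $\mathcal{A}$ of size $\lceil\sqrt n\rceil$ (twice, because of the difference in \eqref{vr}), contributing $\mathcal{O}(q\sqrt n)=\mathcal{O}(n)$ samples per epoch. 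Thus the per-epoch cost is $\mathcal{O}(n)$. With $T=\mathcal{O}(\ln(\mu/\epsilon))$ total iterations and $q=\lceil\sqrt n\rceil$, the number of epochs is $\lceil T/q\rceil=\mathcal{O}(\ln(\mu/\epsilon)/\sqrt n)$, and multiplying gives a running total of $\mathcal{O}(\sqrt n\,\ln(\mu/\epsilon))$ stochastic gradient evaluations. Adding the additive $\mathcal{O}(n)$ cost from the initial epoch (or the worst case $T<q$) produces the claimed $\mathcal{O}(n+\sqrt n\,\ln(\mu/\epsilon))$.

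The main obstacle I anticipate is the bookkeeping for the adaptive batch: although the $c_\gamma\sigma^2\gamma_t^{-1}$ and $c_\epsilon\sigma^2\epsilon^{-1}$ branches can be large, the $n$-cap is what saves the argument and it must be invoked uniformly across outer steps. A secondary subtlety is that in the momentum variant \algmpns, $\gamma_{t+1}$ involves the decayed sequence $\|\alpha^{(t-i)}\bd_i\|^2/q$; one has to check that using the same $c_\gamma,c_\epsilon$ thresholds still yields the $|\mathcal{N}_s|\le n$ bound that drives the per-epoch count, which it does because the $\min$ with $n$ is taken pointwise regardless of the first two arguments. Once these are verified, multiplying the per-epoch sample cost by the number of epochs and absorbing lower-order terms completes the proof.
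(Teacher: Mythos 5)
Your proposal is correct and follows essentially the same route as the paper: invert the linear rate to get $T=\mathcal{O}(\ln(\mu/\epsilon))$, bound each outer ($\mathrm{mod}(t,q)=0$) step by the $n$-cap in the adaptive batch rule, and tally $\lceil T/q\rceil n + T|\mathcal{A}| = \mathcal{O}(n+\sqrt{n}\ln(\mu/\epsilon))$ with $q=|\mathcal{A}|=\lceil\sqrt{n}\rceil$. The paper performs exactly this accounting at the end of the proof of Theorem~\ref{thm:STIMULUSP_SC}, so no further comparison is needed.
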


We note that, although the theoretical sample complexity bounds of \algpns/ \algmp are the same as those of \algns/ \algmns, respectively, the fact that \algp and \algmp do not need full multi-gradient evaluations implies that \algns/ \algm use significantly fewer samples than \algns/ \algm in the large dataset regime. 
Our experimental results in the next section will also empirically confirm this.


\section{Experimental Results} \label{sec:exp}

In this section, we conduct numerical experiments to validate our \alg algorithm family, focusing on non-convex MOO problems, while results for strongly convex and 8-objective MOO experiments are in the appendix.


\textbf{1) Two-Objective Experiments on the MultiMNIST Dataset:}
First, we test the convergence performance of our \alg using the ``MultiMNIST'' dataset~\citep{sabour2017dynamic}, which is a multi-task learning version of the MNIST dataset \citep{lecun2010mnist} from LIBSVM repository. 
Specifically, MultiMNIST converts the hand-written classification problem in MNIST into a two-task problem, where the two tasks are task ``L'' (to categorize the top-left digit) and task ``R'' (to classify the bottom-right digit).
The goal is to classify the images of different tasks. 
We compare our \alg algorithms with MGD, SMGD, CR-MOGM, and MOCO.
All algorithms use the same randomly generated initial point. 
The learning rates are chosen as $\eta=0.3,\alpha=0.5$,  constant $c=c_{\gamma}=c_{\epsilon} = 32$ and solution accuracy $\epsilon = 10^{-3}$.
The batch-size for MOCO, CR-MOGM and SMGD is $96$.
The full batch size for MGD is $1024$, and the inner loop batch-size $|\mathcal{N}_s|$ for \algns, \algmns, \algpns, \algmpns is $96$. 
As shown in Fig.~\ref{fig:compare_mnist}(a), SMGD exhibits the slowest convergence speed, while MOCO has a slightly faster convergence.
MGD and our \alg algorithms have comparable performances. 
The \algm/\algmp algorithms converge faster than MGD, \alg, and \algp, primarily due to the use of momentum. 
Fig.~\ref{fig:compare_mnist}(b) highlights differences in sample complexity. 
MGD suffers the highest sample complexity, while \algp and \algmp demonstrate a more efficient utilization of samples in comparison to \alg and \algmns.
These results are consistent with our theoretical analyses as outlined in Theorems~\ref{thm:STIMULUS_nonC}, \ref{STIMULUSM_NonC}, and \ref{thm:STIMULUSmp_nonC}.

\textbf{2) 40-Objective Experiments with the CelebA Dataset:}

Lastly, we conduct large-scale 40-objective experiments with the CelebA dataset \citep{liu2015deep}, which contains 200K facial images annotated with 40 attributes. 
Each attribute corresponds to a binary classification task, resulting in a 40-objective  problem.

\begin{wrapfigure}{r}{0.27\textwidth}
  \includegraphics[width=0.28\textwidth]{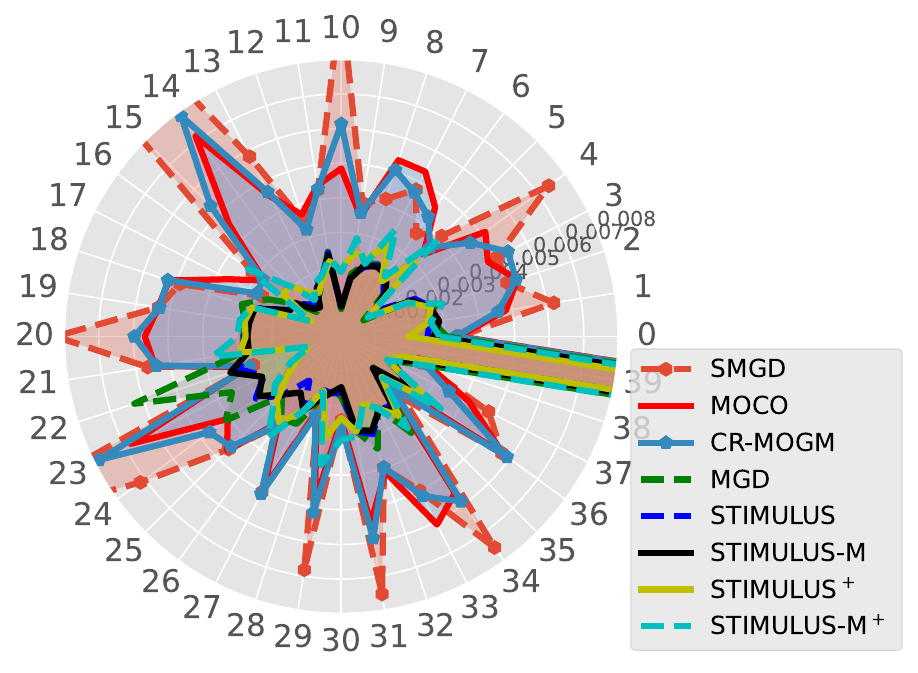}
\caption{Training loss convergence comparison (40-task).}
\label{fig_compare_40tasks}
\end{wrapfigure}

We use a ResNet-18 \cite{he2016deep} model without the final layer for each attribute, and we attach a linear layer to each attribute for classification. 
In this experiment, we set $\eta=0.0005,\alpha=0.01$, the full batch size for MGD is $1024$, and the batch size for SMGD, CR-MOGM and MOCO and the inner loop batch size $|\mathcal{N}_s|$ for \algns, \algmns, \algpns, \algmpns is $32$. 
As shown in Fig.~\ref{fig_compare_40tasks}, MGD, \algns, \algmns, \algpns, and \algmpns significantly outperform SMGD, CR-MOGM and MOCO in terms of training loss. 
Also, we would like to note that \algp and \algmp consume fewer sample (approximately 11,000) samples compared to \alg and \algm, which consume approximately 13,120 samples, and MGD, which consumes roughly 102,400 samples. 
These results are consistent with our theoretical results in Theorems~\ref{thm:STIMULUS_nonC}, \ref{STIMULUSM_NonC}, and \ref{thm:STIMULUSmp_nonC}.

\section{Conclusion} 
\label{sec: conclusion}

In this paper, we proposed \algns, a new variance-reduction-based stochastic multi-gradient-based algorithm to achieve fast convergence and low sample complexity multi-objective optimization (MOO). 
We analyze its Pareto stationarity convergence and sample complexity under non-convex and strongly convex settings. To enhance empirical convergence, we propose \algm, which incorporates momentum. To reduce the periodic full multi-gradient evaluation in \alg and \algmns, we introduce adaptive batching versions, \algpns/\algmpns, with theoretical performance analysis. Overall, our \alg algorithm family advances MOO algorithm design and analysis.

\section{Acknowledgement}
This work is supported in part by NSF grants CAREER CNS-2110259, CNS-2112471, IIS-2324052, DARPA YFA D24AP00265, DARPA HR0011-25-2-0019, ONR grant N00014-24-1-2729, and AFRL grant PGSC-SC-111374-19s.

\bibliography{MOO, DNN}

\appendix
\allowdisplaybreaks


\newpage
 \onecolumn
\section{Proof of convergence of \algns} \label{appdx:VR_MOO_nonconvex}

 \begin{table}[htbp]
     \centering
    \caption{List of key notation.}
    \label{tab:list of notations}
    {
    \begin{tabular}{l|l}
        \hline
        Notation      & Definition                               \\ \hline
        $ n $         & Total number of samples per task         \\ \hline
        $ s $         & Objective/task index          \\ \hline
        $ S $         & Total number of objectives/tasks       \\ \hline   
        $ t $         & Iteration number index    \\ \hline
        $ T $         & Total number of iterations     \\ \hline
        $ \x \in \mathbb{R}^d $         & Model parameters in Problem~\eqref{eq: moo}       \\ \hline
        $ \x_* \in \mathbb{R}^d $         & A pareto optimal solution of Problem~\eqref{eq: moo}       \\ \hline
        $ \eta $         & The learning rate     \\ \hline
        $ \alpha$         & The momentum constant   \\ \hline   
$\epsilon$ & 
      Stationarity error in  Def. \ref{def:stationary}\\ \hline   
      $\mu$ & 
      Strongly-convex constant in  Assumption \ref{assump: SC}\\ \hline  
    \end{tabular}
    }
\end{table}

For clarity of notation, we drop $*$ for $\lambda$, that is, we use $\lambda_t^s$ to represent the solution of quadratic problem for task $s$ in the $t$-th round.

\begin{lem} \label{lem:bounded1}
Let Assumption 1 hold. The gradient estimator $\bu_t^s$ satisfies for all $(n_t -1)q + 1 \leq t \leq  n_t q - 1$:
    \begin{align}
 \mathbb{E}_t\| \nabla f_s(\x_t) - \bu_t^s \|^2 \leq \frac{L^2}{|\mathcal{A}|} \sum_{i=\left(n_t-1\right) q}^t \mathbb{E}\|\x_{i+1}-\x_{i}\|^2 +\mathbb{E}_t\|\nabla f_s(\x_{\left(n_t-1\right) q}) - \bu_{\left(n_t-1\right) q}^s \|^2 .
    \end{align}
\end{lem}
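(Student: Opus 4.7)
The plan is to expand the one-step recursion for $\bu_t^s$ and bound the error $\|\nabla f_s(\x_t) - \bu_t^s\|^2$ by exploiting the zero-mean (martingale-difference) structure of the mini-batch correction term, then iterate back to the last full-gradient checkpoint at iteration $(n_t-1)q$.

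The first step is to rewrite the error at iteration $t$ as
\begin{align*}
\nabla f_s(\x_t) - \bu_t^s = \bigl(\nabla f_s(\x_{t-1}) - \bu_{t-1}^s\bigr) + \frac{1}{|\mathcal{A}|}\sum_{j\in\mathcal{A}} Z_j^s,
\end{align*}
where $Z_j^s := \bigl(\nabla f_s(\x_t) - \nabla f_s(\x_{t-1})\bigr) - \bigl(\nabla f_{sj}(\x_t;\xi_{sj}) - \nabla f_{sj}(\x_{t-1};\xi_{sj})\bigr)$. Conditioning on the sigma-algebra generated by the history up to time $t-1$ (which fixes both $\x_{t-1}$ and $\x_t$, since $\x_t$ is determined by the previous step), the variables $\{Z_j^s\}_{j\in\mathcal{A}}$ are i.i.d., mean zero, and independent of $\nabla f_s(\x_{t-1}) - \bu_{t-1}^s$. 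Hence the cross term vanishes after taking $\mathbb{E}_t[\cdot]$, giving
\begin{align*}
\mathbb{E}_t\|\nabla f_s(\x_t) - \bu_t^s\|^2 = \mathbb{E}_t\|\nabla f_s(\x_{t-1}) - \bu_{t-1}^s\|^2 + \mathbb{E}_t\Bigl\|\frac{1}{|\mathcal{A}|}\sum_{j\in\mathcal{A}} Z_j^s\Bigr\|^2.
\end{align*}

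The second step is to control the variance term. By i.i.d.\ sampling, $\mathbb{E}_t\|\frac{1}{|\mathcal{A}|}\sum_{j\in\mathcal{A}} Z_j^s\|^2 = \frac{1}{|\mathcal{A}|}\mathbb{E}_t\|Z_1^s\|^2$. Since the variance is upper bounded by the second moment of the centered increment, we get $\mathbb{E}_t\|Z_1^s\|^2 \le \mathbb{E}_t\|\nabla f_{s1}(\x_t;\xi_{s1}) - \nabla f_{s1}(\x_{t-1};\xi_{s1})\|^2 \le L^2 \mathbb{E}_t\|\x_t - \x_{t-1}\|^2$, where the last inequality applies Assumption~\ref{assump: smooth} to each per-sample function (which inherits $L$-smoothness from $f_s$; if stated only for $f_s$, this step is the one place where one must invoke the standard per-sample smoothness implicit throughout the paper). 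This yields the per-step recursion
\begin{align*}
\mathbb{E}_t\|\nabla f_s(\x_t) - \bu_t^s\|^2 \le \mathbb{E}_t\|\nabla f_s(\x_{t-1}) - \bu_{t-1}^s\|^2 + \frac{L^2}{|\mathcal{A}|}\mathbb{E}_t\|\x_t - \x_{t-1}\|^2.
\end{align*}

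The third and final step is simply to unroll this recursion from the current index $t$ down to the most recent full-gradient iteration $(n_t-1)q$, at which point $\bu^s$ is reset. Summing the per-step increments telescopes the first term back to $\mathbb{E}_t\|\nabla f_s(\x_{(n_t-1)q}) - \bu^s_{(n_t-1)q}\|^2$ and accumulates the drift terms into $\frac{L^2}{|\mathcal{A}|}\sum_{i=(n_t-1)q}^{t}\mathbb{E}\|\x_{i+1}-\x_i\|^2$ (after a standard reindexing), which is exactly the claimed bound. The only subtle point will be the bookkeeping of nested conditional expectations through the tower property when unrolling, and invoking per-sample $L$-smoothness; both are routine once the martingale-difference decomposition in the first step is in hand.
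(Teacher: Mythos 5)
Your proposal is correct and follows essentially the same route as the paper: the paper invokes Lemma~1/Proposition~1 of Fang et al.\ (SPIDER) to get exactly your martingale-difference equality (cross term vanishing under $\mathbb{E}_t$), bounds the variance term by $\tfrac{L^2}{|\mathcal{A}|}\mathbb{E}\|\x_t-\x_{t-1}\|^2$ via per-sample Lipschitz smoothness, and telescopes back to the checkpoint $(n_t-1)q$. Your write-up is merely more self-contained than the paper's (which delegates steps (a) and (b) to the SPIDER citation), and your remark that per-sample $L$-smoothness is the version actually needed, rather than smoothness of $f_s$ alone as literally stated in Assumption~\ref{assump: smooth}, is a fair observation about an implicit assumption shared by both arguments.
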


\textbf{Proof of Lemma. \ref{lem:bounded1}.}
\begin{proof}
From Lemma 1 in \cite{fang2018spider}, we have
 \begin{align}
 \mathbb{E}_t\| \nabla f_s(\x_t)  &- \bu_t^s \|^2 \stackrel{(a)}{=}   \mathbb{E}_t\| \nabla f_s(\mathbf{x}_{t-1}) - \mathbf{u}_{t-1}^s \|^2 \notag\\+& \mathbb{E}_t \| \frac{1}{|\mathcal{A}|} \sum_{j\in \mathcal{A}}\left( \nabla f_{sj} (\mathbf{x}_{t};\xi_{sj}  ) - \nabla f_{sj} (\mathbf{x}_{t-1};\xi_{sj} )  +\nabla f_s(\mathbf{x}_{t-1})-\nabla f_s(\mathbf{x}_{t}) \right) \|^2\notag\\
\stackrel{(b)}{\le} & \mathbb{E}_t\| \nabla f_s(\mathbf{x}_{\left(n_t-1\right) q}) - \mathbf{u}_{\left(n_t-1\right) q}^s \|^2 +  L^2 \sum_{i=\left(n_t-1\right) q}^t\frac{1}{|\mathcal{A}|} \mathbb{E}\|\mathbf{x}_{i+1}-\mathbf{x}_{i}\|^2.
\end{align}

$(a)$ stems from Proposition 1 in \cite{fang2018spider}, where the expectation of the gradient difference is broken down.
$(b)$ leverages Eq. (2.3) from \cite{fang2018spider}, applying a bound based on the Lipschitz continuity of the gradient.

Telescoping over from $\left(n_t-1\right) q+1 \text { to } t \text {, where } t \leq n_t q-1$, we obtain that 
 \begin{align}
 &\mathbb{E}_t\| \nabla f_s(\x_t) - \bu_t^s \|^2  \leq \mathbb{E}_t\| \nabla f_s(\x_{\left(n_t-1\right) q}) - \bu_{\left(n_t-1\right) q}^s \|^2 +  L^2 \sum_{i=\left(n_t-1\right) q}^t\frac{1}{|\mathcal{A}|} \mathbb{E}\|\x_{i+1}-\x_{i}\|^2
     \end{align}

Then, we have
 \begin{align}
\mathbb{E}_t \| \nabla f_s(\x_t) - \bu_t^s \|^2 \leq \frac{L^2}{|\mathcal{A}|} \sum_{i=\left(n_t-1\right) q}^t \mathbb{E}\|\x_{i+1}-\x_{i}\|^2 +\mathbb{E}_t\|\nabla f_s(\x_{\left(n_t-1\right) q}) - \bu_{\left(n_t-1\right) q}^s \|^2 .
    \end{align}
    
\end{proof}

\begin{lem} \label{lem:update2}
For general $L$-smooth functions $\{ f_s, s \in [S] \}$, choose the learning rate $\eta$ s.t. $\eta \leq  \frac{1}{2L}$, the update $\bd_t$ of the algorithm satisfies:
    \begin{align}
   f_s(\x_{t+1})   &\leq  f_s(\x_t) + \frac{\eta}{2}  \| \nabla f_s(\x_t) - \bu_t^s \|^2 -\frac{\eta}{4} \| \bd_t \|^2.
\end{align}
\end{lem}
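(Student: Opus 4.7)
The plan is to start from the descent inequality implied by $L$-smoothness of $f_s$, namely
\begin{align*}
f_s(\x_{t+1}) \leq f_s(\x_t) + \langle \nabla f_s(\x_t), \x_{t+1}-\x_t\rangle + \frac{L}{2}\|\x_{t+1}-\x_t\|^2,
\end{align*}
and substitute the \algns update $\x_{t+1}-\x_t = -\eta \bd_t$ to obtain $f_s(\x_{t+1}) \leq f_s(\x_t) - \eta\langle \nabla f_s(\x_t),\bd_t\rangle + \frac{L\eta^2}{2}\|\bd_t\|^2$. I would then split the gradient as $\nabla f_s(\x_t) = \bu_t^s + (\nabla f_s(\x_t)-\bu_t^s)$, giving a "signal" term $-\eta\langle \bu_t^s,\bd_t\rangle$ and an "error" cross term $-\eta\langle \nabla f_s(\x_t)-\bu_t^s,\bd_t\rangle$.

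The main obstacle is producing a per-task lower bound on $\langle \bu_t^s,\bd_t\rangle$ despite the fact that $\bd_t = \sum_{s'}\lambda_t^{s'}\bu_t^{s'}$ is only the minimum-norm convex combination. I would resolve this by invoking the KKT conditions of the quadratic program \eqref{mgda}: denoting the simplex multiplier by $\nu$ and slack variables by $\alpha_s\geq 0$, stationarity gives $\langle \bu_t^s,\bd_t\rangle = -\nu/2$ whenever $\lambda_t^s>0$ and $\langle \bu_t^s,\bd_t\rangle \geq -\nu/2$ when $\lambda_t^s=0$; multiplying by $\lambda_t^s$ and summing yields $\|\bd_t\|^2 = -\nu/2$. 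Hence $\langle \bu_t^s,\bd_t\rangle \geq \|\bd_t\|^2$ holds for \emph{every} $s\in[S]$, which is exactly the per-task handle needed.

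For the error cross term, Young's inequality gives $-\eta\langle \nabla f_s(\x_t)-\bu_t^s,\bd_t\rangle \leq \tfrac{\eta}{2}\|\nabla f_s(\x_t)-\bu_t^s\|^2 + \tfrac{\eta}{2}\|\bd_t\|^2$. Combining with the bound from the previous paragraph yields
\begin{align*}
f_s(\x_{t+1}) \leq f_s(\x_t) + \frac{\eta}{2}\|\nabla f_s(\x_t)-\bu_t^s\|^2 + \Bigl(\frac{L\eta^2}{2}-\frac{\eta}{2}\Bigr)\|\bd_t\|^2.
\end{align*}
The final step is the routine step-size calibration: since $\eta\leq \tfrac{1}{2L}$ implies $L\eta\leq \tfrac{1}{2}$, the coefficient of $\|\bd_t\|^2$ satisfies $\tfrac{L\eta^2}{2}-\tfrac{\eta}{2} = \tfrac{\eta}{2}(L\eta-1) \leq -\tfrac{\eta}{4}$, which yields the stated inequality. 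No randomness or recursive estimator structure is needed here, so Lemma~\ref{lem:bounded1} is not invoked at this stage; the lemma is a purely deterministic one-step descent bound, and its only non-routine ingredient is the MGDA KKT argument above.
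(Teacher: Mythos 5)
Your proof is correct and follows essentially the same route as the paper's: $L$-smoothness, the split $\nabla f_s(\x_t) = \bu_t^s + (\nabla f_s(\x_t)-\bu_t^s)$, the bound $\langle \bu_t^s,\bd_t\rangle \geq \|\bd_t\|^2$, Young's inequality on the cross term, and the step-size calibration $\eta \leq \tfrac{1}{2L}$. The only difference is that you derive the key inequality $\langle \bu_t^s,\bd_t\rangle \geq \|\bd_t\|^2$ directly from the KKT conditions of the quadratic program, whereas the paper simply cites Lemma~2.1 of \cite{desideri2012multiple} for the same fact.
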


\textbf{Proof of Lemma. \ref{lem:update2}.}
\begin{proof}

\begin{align}
    f_s(\x_{t+1}) &\stackrel{(a)}{\le}  f_s(\x_t) + \left< \nabla f_s(\x_t), -\eta \bd_t \right> + \frac{1}{2}L \| \eta \bd_t \|^2 \notag\\
    &=  f_s(\x_t) - \eta\left< \nabla f_s(\x_t) - \bu_t^s, \bd_t \right> - \eta \left< \bu_t^s, \bd_t \right> + \frac{1}{2}L \| \eta \bd_t \|^2 \notag\\
    &\stackrel{(b)}{\le} f_s(\x_t) -\eta\left< \nabla f_s(\x_t) - \bu_t^s,  \bd_t \right> - \eta \| \bd_t \|^2 + \frac{1}{2}L \| \eta \bd_t \|^2 \notag\\
    &\stackrel{(c)}{\le}  f_s(\x_t) + \frac{\eta}{2} \notag\| \nabla f_s(\x_t) - \bu_t^s \|^2 + \frac{1}{2} \eta \| \bd_t \|^2 - \eta \| \bd_t \|^2 + \frac{1}{2}L \eta^2 \| \bd_t \|^2 \notag\\
    &= f_s(\x_t) + \frac{\eta}{2}  \| \nabla f_s(\x_t) - \bu_t^s \|^2 - \eta \left( \frac{1}{2}- \frac{1}{2}L \eta \right) \| \bd_t \|^2.
\end{align}
(a) follows from  the objective function $f_s$ is $L$-smooth.  $(b)$ follows from $\left< \bu_t^s, \bd_t \right> \geq \| \bd_t \|^2$ since $\bd_t$ is a general solution in the convex hull of the family of vectors $\{\bu_t^s, s \in [S] \}$ (see Lemma 2.1~\cite{desideri2012multiple}). (c) follows from the triangle inequality.

By setting $\left( \frac{1}{2} - \frac{L}{2} \eta \right) \geq \frac{1}{4}$, that is, $\eta \leq  \frac{1}{2L}$,  we have 
\begin{align}
   f_s(\x_{t+1})   &\leq  f_s(\x_t) + \frac{\eta}{2}  \| \nabla f_s(\x_t) - \bu_t^s \|^2 -\frac{\eta}{4} \| \bd_t \|^2.
\end{align}
\end{proof}

\textbf{Proof of Theorem.~\ref{thm:STIMULUS_nonC}}

\begin{proof}
Taking expectation on both sides of the inequality in Lemma. \ref{lem:update2}, we have

\begin{align}
  &  \mathbb{E}[f_s(\x_{t+1})] \stackrel{(a)}{\le}  \mathbb{E}[f_s(\x_t)] + \frac{\eta}{2}   \mathbb{E}\| \nabla f_s(\x_t) - \bu_t^s \|^2 -\frac{\eta}{4}  \mathbb{E}\| \bd_t \|^2\notag\\&
 \stackrel{(b)}{\le}  \mathbb{E}[f_s(\x_t)] -\frac{\eta}{4}  \mathbb{E}\| \bd_t \|^2+ \mathbb{E}\frac{\eta}{2}  [\frac{L^2}{|\mathcal{A}|} \sum_{i=\left(n_t-1\right) q}^t \mathbb{E}\|\x_{i+1}-\x_{i}\|^2 +\mathbb{E}\|\nabla f_s(\x_{\left(n_t-1\right) q}) - \bu_{\left(n_t-1\right) q}^s \|^2] \notag\\&
\stackrel{(c)}{=}  \mathbb{E}[f_s(\x_t)] -\frac{\eta}{4} \mathbb{E} \| \bd_t \|^2+ \frac{\eta}{2}  [\frac{L^2}{|\mathcal{A}|} \sum_{i=\left(n_t-1\right) q}^t \eta^2 \mathbb{E}\|\bd_i\|^2 ].
\end{align}
$(a)$ follows from Lemma. \ref{lem:update2}. $(b)$ follows from the Lemma. \ref{lem:bounded1}. $(c)$ follows from the update rule of $\x$ as shown in Eq. \eqref{STIMULUSP1} and $\mathbb{E}\|\nabla f_s(\x_{\left(n_t-1\right) q}) - \bu_{\left(n_t-1\right) q}^s \|^2=0$ as shown in Line 5 in our Algorithm. \ref{alg}.

Next, telescoping the above inequality over $t$ from $\left(n_t-1\right) q$ to $t$ where $t \leq n_t q-1$ and noting that for $\left(n_t-1\right) q \leq j \leq n_t q-1, n_j=n_t$, we obtain

\begin{align} \label{eqs16}
  &   \mathbb{E}[f_s(\x_{t+1})]  \notag\\&\leq   \mathbb{E}[f_s(\x_{\left(n_t-1\right) q})] -\frac{\eta}{4} \sum_{j=\left(n_t-1\right) q}^t  \mathbb{E}\| \bd_j \|^2+ \frac{\eta}{2}  [\frac{L^2}{|\mathcal{A}|} \sum_{j=\left(n_t-1\right) q}^t \sum_{i=\left(n_t-1\right) q}^j\eta^2 \mathbb{E}\|\bd_i\|^2 ]  \notag\\& \stackrel{(a)}{\le}   \mathbb{E}[f_s(\x_{\left(n_t-1\right) q})] -\frac{\eta}{4} \sum_{j=\left(n_t-1\right) q}^t  \mathbb{E}\| \bd_j \|^2+ \frac{\eta}{2}  [\frac{L^2}{|\mathcal{A}|} \sum_{j=\left(n_t-1\right) q}^t \sum_{i=\left(n_t-1\right) q}^t\eta^2 \mathbb{E}\|\bd_i\|^2 ] \notag\\& \stackrel{(b)}{\le}   \mathbb{E} [f_s(\x_{\left(n_t-1\right) q})] -\frac{\eta}{4} \sum_{j=\left(n_t-1\right) q}^t \mathbb{E} \| \bd_j \|^2+ \frac{\eta^3 q}{2}  [\frac{L^2}{|\mathcal{A}|} \sum_{j=\left(n_t-1\right) q}^t  \mathbb{E}\|\bd_j\|^2 ] \notag\\&= \mathbb{E}[f_s(\x_{\left(n_t-1\right) q})] -[\frac{\eta}{4}- \frac{\eta^3 q}{2}  \frac{L^2}{|\mathcal{A}|}] \sum_{j=\left(n_t-1\right) q}^t  \mathbb{E}\|\bd_j\|^2.
\end{align}

where $(a)$ extends the summation of the third term from $j$ to $t$, $(b)$ follows from the fact that
$t \leq n_t q - 1$.

We continue the proof by further driving
\begin{align}
  &   \mathbb{E}[f_s(\x_{T})] -   \mathbb{E}[f_s(\x_{0})]  \notag\\&
  =(   \mathbb{E}[f_s(\x_{q})] -   \mathbb{E}[f_s(\x_{0})] ) +  (   \mathbb{E}[f_s(\x_{2q})] -   \mathbb{E}[f_s(\x_{q})] ) +\cdot +  (   \mathbb{E}[f_s(\x_{T})] -   \mathbb{E}[f_s(\x_{(n_T-1)q})] ) 
  \notag\\&\leq  -[\frac{\eta}{4}- \frac{\eta^3 q}{2}  \frac{L^2}{|\mathcal{A}|}] \sum_{t=0}^{T-1}  \mathbb{E}\|\bd_t\|^2
\end{align}

Note that $\mathbb{E} [ f_s\left(\x_{T+1}\right) ]\geq f_s^* \triangleq \inf _{\x \in \mathbb{R}^d} f_s(\x)$. Hence, we have
\begin{align}
  &
 [\frac{\eta}{4}- \frac{\eta^3 q}{2}  \frac{L^2}{|\mathcal{A}|}] \sum_{t=0}^{T-1}  \mathbb{E}\|\bd_t\|^2\leq  [  [f_s(\x_{0})] -  [f_s(\x_{T})]  ]\leq  [  [f_s(\x_{0})] - f_s^*  ].
\end{align}

Based on the parameter setting $q=|\mathcal{A}|=\lceil\sqrt{n}\rceil$, we have 
\begin{align}
  &
 [\frac{\eta}{4}- \frac{\eta^3 L^2 }{2}  ] \sum_{t=0}^{T-1}  \|\bd_t\|^2 \leq   [  [f_s(\x_{0})] -f_s^* ].
\end{align}
Thus, we have
\begin{align}\label{eqs20}
  &
 \frac{1}{T} \sum_{t=0}^{T-1}  \mathbb{E}\|\bd_t\|^2 \leq   \frac{[  [f_s(\x_{0})] -f_s^* ]}{ [\frac{\eta}{4}- \frac{\eta^3 L^2 }{2}  ] T}.
\end{align}

Since $\frac{1}{T} \sum_{t=0}^{T-1}\mathbb{E} \|d_t \|^2$ is just common descent directions.
According to Definition. \ref{def:stationary} shown in the paper, the quantity to our interest is 
$\|\sum_{s \in [S]}\lambda_t^s \nabla f(\mathbf{x})\|^2$. 
\begin{align}\label{eqs21}
  &
 \frac{1}{T} \sum_{t=0}^{T-1}  \mathbb{E}\|\sum_{s\in [S]}\lambda_t^s\nabla f_s(\x_t)\|^2 \notag\\
\stackrel{(a)}{\le}   & \frac{1}{T} \sum_{t=0}^{T-1}  2\mathbb{E}\|\sum_{s\in [S]}\lambda_t^s\nabla f_s(\x_t)-\sum_{s\in [S]}\lambda_t^s\bu_t^s\|^2+  \frac{1}{T} \sum_{t=0}^{T-1}  2\mathbb{E}\|\sum_{s\in [S]}\lambda_t^s \bu_t^s\|^2 \notag\\
\stackrel{(b)}{=} & \frac{1}{T} \sum_{t=0}^{T-1}  2\mathbb{E}\|\sum_{s\in [S]}\lambda_t^s(\nabla f_s(\x_t)-\bu_t^s)\|^2+  \frac{1}{T} \sum_{t=0}^{T-1}  2\mathbb{E}\|\bd_t\|^2 \notag\\
\stackrel{(c)}{\le}  & \frac{1}{T} \sum_{t=0}^{T-1} 2S \sum_{s\in [S]} (\lambda_t^s)^2\mathbb{E}\|(\nabla f_s(\x_t)-\bu_t^s)\|^2+  \frac{1}{T} \sum_{t=0}^{T-1}  2\mathbb{E}\|\bd_t\|^2 \notag\\
\stackrel{(d)}{\le}  & \frac{1}{T} \sum_{t=0}^{T-1} 2S \sum_{s\in [S]} (\lambda_t^s)^2[ \mathbb{E}_t\| \nabla f_s(\x_{\left(n_t-1\right) q}) - \bu_{\left(n_t-1\right) q}^s \|^2 +  L^2 \sum_{i=\left(n_t-1\right) q}^t\frac{1}{|\mathcal{A}|} \mathbb{E}\|\x_{i+1}-\x_{i}\|^2]+  \frac{1}{T} \sum_{t=0}^{T-1}  2\mathbb{E}\|\bd_t\|^2 \notag\\
=& \frac{1}{T} \sum_{t=0}^{T-1} 2S \sum_{s\in [S]} (\lambda_t^s)^2[ \mathbb{E}_t\| \nabla f_s(\x_{\left(n_t-1\right) q}) - \bu_{\left(n_t-1\right) q}^s \|^2]\notag\\& +   2S L^2 \frac{1}{T} \sum_{t=0}^{T-1} \sum_{i=\left(n_t-1\right) q}^t\frac{1}{|\mathcal{A}|} \mathbb{E}\|\x_{i+1}-\x_{i}\|^2 +  \frac{1}{T} \sum_{t=0}^{T-1}  2\mathbb{E}\|\bd_t\|^2  \notag\\
\stackrel{(e)}{\le}& \frac{1}{T} \sum_{t=0}^{T-1} 2S \sum_{s\in [S]} (\lambda_t^s)^2[ \mathbb{E}_t\| \nabla f_s(\x_{\left(n_t-1\right) q}) - \bu_{\left(n_t-1\right) q}^s \|^2]\notag\\& +   2S L^2 \frac{1}{T} \sum_{t=0}^{T-1} \sum_{i=\left(n_t-1\right) q}^{n_t q -1}\frac{1}{|\mathcal{A}|} \mathbb{E}\|\x_{t+1}-\x_{t}\|^2 +  \frac{1}{T} \sum_{t=0}^{T-1}  2\mathbb{E}\|\bd_t\|^2 \notag\\
= &   2S L^2 \frac{1}{T} \sum_{t=0}^{T-1} \frac{q}{|\mathcal{A}|} \mathbb{E}\|\x_{t+1}-\x_{t}\|^2 +  \frac{1}{T} \sum_{t=0}^{T-1}  2\mathbb{E}\|\bd_t\|^2  \notag\\
\stackrel{(f)}{=} &   2S L^2 \eta ^2\frac{1}{T} \sum_{t=0}^{T-1} \mathbb{E}\|\bd_t\|^2 +  \frac{1}{T} \sum_{t=0}^{T-1}  2\mathbb{E}\|\bd_t\|^2 \notag\\
= &  ( 2S L^2 \eta ^2 +2)  \frac{1}{T} \sum_{t=0}^{T-1}  \mathbb{E}\|\bd_t\|^2 
\end{align}
where $(a)$  and $(c)$ hold from the triangle inequality. (b) is because the definition $\bd_t=\sum_{s \in [S]} \lambda_{t}^{s} \mathbf{u}_{t}^s$ as shown in Line 14 in Algorithm. \ref{alg}. $(d)$ follows from the Lemma. \ref{lem:bounded1}. (e) is because $t\leq n_t q -1$. $(f)$ is because we have $q=|\mathcal{A}|=\lceil\sqrt{n}\rceil$.

Then, we can conclude that 
\begin{align}
  &
 \frac{1}{T} \sum_{t=0}^{T-1}  \mathbb{E}\|\sum_{s\in [S]}\lambda_t^s\nabla f_s(\x_t)\|^2 
\stackrel{(a)}{\le}
 ( 2S L^2 \eta ^2  +2)
  \frac{[  [f_s(\x_{0})] -f_s^* ]}{ [\frac{\eta}{4}- \frac{\eta^3 L^2 }{2}  ] T},
\end{align}
where $(a)$ follows from Eqs. \eqref{eqs21} and Eqs. \eqref{eqs20}.

Let $\eta\leq \frac{1}{2L}$, we have

\begin{align}
  &
\frac{1}{T}\sum_{t=0}^{T-1}\min_{\boldsymbol{\lambda} \in C} \mathbb{E} \| \boldsymbol{\lambda}^{\top} \nabla \F(\x_t) \|^2 \leq \frac{1}{T} \sum_{t=0}^{T-1}  \mathbb{E}\|\sum_{s\in [S]}\lambda_t^s\nabla f_s(\x_t)\|^2 \notag\\ \leq&   \frac{ ( 2S L^2 \eta ^2\frac{1}{T}  +2)[  [f_s(\x_{0})] -f_s^*  ]}{ [\frac{\eta}{8} ] T}=\frac{ ( 2S L^2 \eta ^2 +2) \frac{8}{\eta}  [  [f_s(\x_{0})] -f_s^*  ]}{  T} =\mathcal{O}(\frac{1}{T}).
\end{align}

Lastly, to show the sample complexity, the number of samples with $mod(t,q)=0$ can be calculated as: $\lceil \frac{T}{q} \rceil \cdot M$.
	Also, the number of samples with $mod(t,q)\neq0$ can be calculated as $T\cdot|\mathcal{A}|$.
	Hence, the total sample complexity can be calculated as:
	$\lceil \frac{T}{q} \rceil n + T\cdot |\mathcal{A}| \leq  \frac{T+q}{q}n + T\sqrt{n}= T\sqrt{n}+n+T\sqrt{n}=O(n+ \sqrt{n} \epsilon^{-1})$.
	Thus, the overall sample complexity is $\mathcal{O}(n+ \sqrt{n} \epsilon^{-1})$.
	This completes the proof.

\end{proof}

\subsection{Proof of Theorem.~\ref{thm:STIMULUS_SC}}
\begin{proof}

\begin{align}\label{eq24}
    &f_s(\x_{t+1}) \notag\\\leq& f_s(\x_t) + \left< \nabla f_s(\x_t), -\eta \bd_t \right> + \frac{1}{2}L \| \eta \bd_t \|^2 \notag\\
  \stackrel{(a)}{\le} & f_s(\x_*) + \left< \nabla f_s(\x_t), \x_t - \x_* \right> - \frac{\mu}{2} \| \x_t - \x_* \|^2  + \left< \nabla f_s(\x_t), -\eta \bd_t \right> + \frac{1}{2}L \| \eta \bd_t \|^2\notag\\
    = &f_s(\x_*) + \left< \nabla f_s(\x_t), \x_t - \x_* -\eta \bd_t \right> - \frac{\mu}{2} \| \x_t - \x_* \|^2 + \frac{1}{2}L \| \eta \bd_t \|^2\notag\\
    \stackrel{(b)}{\le} & f_s(\x_*) + \left< \nabla f_s(\x_t)-\bu_t^s, \x_t - \x_* -\eta \bd_t \right>+ \left< \bu_t^s, \x_t - \x_* -\eta \bd_t \right> \notag\\&- \frac{\mu}{2} \| \x_t - \x_* \|^2 + \frac{1}{2}L \| \eta \bd_t \|^2\notag\\
    \stackrel{(c)}{\le}  & f_s(\x_*) + \frac{1}{2\delta}\| \nabla f_s(\x_t)-\bu_t^s\|^2+ \frac{\delta}{2}\| \x_t - \x_* -\eta \bd_t\|^2 + \left< \bu_t^s, \x_t - \x_* -\eta \bd_t \right> \notag\\&- \frac{\mu}{2} \| \x_t - \x_* \|^2 + \frac{1}{2}L \| \eta \bd_t \|^2\notag\\
   \stackrel{(d)}{\le}  & f_s(\x_*) + \frac{1}{2\delta}\| \nabla f_s(\x_t)-\bu_t^s\|^2+ \delta\| \x_t - \x_* \|^2+\delta\|\eta \bd_t\|^2 \notag\\&+ \left< \bu_t^s, \x_t - \x_* -\eta \bd_t \right> - \frac{\mu}{2} \| \x_t - \x_* \|^2 + \frac{1}{2}L \| \eta \bd_t \|^2,
\end{align}
the first inequality is due to $L$-smoothness, the second inequality follows from $\mu$-strongly convex. The last two inequality follows from the triangle inequality. 

According to Definition. \ref{def:stationary} shown in the paper, the quantity to our interest is 
$ \sum_{s \in [S]} \lambda_t^{s} \left[ f_s(\x_{t+1}) - f_s(\x_*) \right]  $, then we have

\begin{align} \label{eqs23}
    & \sum_{s \in [S]} \lambda_t^{s} \left[ f_s(\x_{t+1}) - f_s(\x_*) \right]  \notag\\
    \stackrel{(a)}{\le}& \frac{1}{2\delta}  \sum_{s \in [S]} \lambda_t^{s} \| \nabla f_s(\x_t)-\bu_t^s\|^2+ \delta\| \x_t - \x_* \|^2+\delta\|\eta \bd_t\|^2\notag\\& +\left< \sum_{s \in [S]} \lambda_t^{s}\bu_t^s, \x_t - \x_* \right> - \frac{\mu}{2} \| \x_t - \x_* \|^2 + \left< \sum_{s \in [S]} \lambda_t^{s} \bu_t^s, -\eta \bd_t \right> + \frac{1}{2}L \| \eta \bd_t \|^2 \notag\\
    =&\frac{1}{2\delta}  \sum_{s \in [S]} \lambda_t^{s} \| \nabla f_s(\x_t)-\bu_t^s\|^2+ \delta\| \x_t - \x_* \|^2+\delta\|\eta \bd_t\|^2\notag\\& +\left< \sum_{s \in [S]} \lambda_t^{s}\bu_t^s, \x_t - \x_* -\eta \bd_t \right> - \frac{\mu}{2} \| \x_t - \x_* \|^2 + \frac{1}{2}L \| \eta \bd_t \|^2 \notag\\
     \stackrel{(b)}{\le}&\frac{1}{2\delta}  \sum_{s \in [S]} \lambda_t^{s}\| \nabla f_s(\x_t)-\bu_t^s\|^2+ \delta\| \x_t - \x_* \|^2+\delta\|\eta \bd_t\|^2\notag\\& +\left< \bd_t , \x_t - \x_* -\eta \bd_t \right> - \frac{\mu}{2} \| \x_t - \x_* \|^2 + \frac{1}{2}L \| \eta \bd_t \|^2 \notag\\
   = & \left< \bd_t , \x_t - \x_* \right> - \eta \| \bd_t \|^2 - \frac{\mu}{2} \| \x_t - \x_* \|^2 + \frac{1}{2}L \eta^2 \| \bd_t \|^2 \notag\\&+ \frac{1}{2\delta}  \sum_{s \in [S]} \lambda_t^{s}\| \nabla f_s(\x_t)-\bu_t^s\|^2+ \delta\| \x_t - \x_* \|^2+\delta\|\eta \bd_t\|^2\notag\\
    \stackrel{(c)}{\le}&\frac{1}{2 \eta} \left( \| \x_t - \x_* \|^2 - \| \x_{t+1} - \x_* \|^2 \right) - \frac{1}{2} \eta \| \bd_t \|^2 - \frac{\mu}{2} \| \x_t - \x_* \|^2 + \frac{1}{2}L \eta^2 \|\bd_t \|^2 \notag\\
    &+ \frac{4}{\mu}  \sum_{s \in [S]} \lambda_t^{s}\| \nabla f_s(\x_t)-\bu_t^s\|^2+ \frac{\mu}{8}\| \x_t - \x_* \|^2+\frac{\mu}{8}\|\eta \bd_t\|^2\notag\\
     \stackrel{(d)}{\le}&\frac{1}{2 \eta} \left( (1-\frac{3\mu\eta}{4})\| \x_t - \x_* \|^2 - \| \x_{t+1} - \x_* \|^2 \right) - (\frac{1}{2} \eta-\frac{\mu}{8}\eta^2 - \frac{1}{2}L \eta^2 ) \| \bd_t \|^2 \notag\\&
    + \frac{4}{\mu} \sum_{s \in [S]} \lambda_t^{s}\| \nabla f_s(\x_t)-\bu_t^s\|^2\notag\\
    \stackrel{(e)}{\le}&\frac{1}{2 \eta} \left( (1-\frac{3\mu\eta}{4})\| \x_t - \x_* \|^2 - \| \x_{t+1} - \x_* \|^2 \right) - (\frac{1}{2} \eta-\frac{\mu}{8}\eta^2 - \frac{1}{2}L \eta^2 ) \| \bd_t \|^2 \notag\\&
    + \frac{4}{\mu}( \frac{L^2}{|\mathcal{A}|} \sum_{i=\left(n_t-1\right) q}^t \|\x_{i+1}-\x_{i}\|^2 + \sum_{s \in [S]} \lambda_t^{s}\|\nabla f_s(\x_{\left(n_t-1\right) q}) - \bu_{\left(n_t-1\right) q}^s \|^2)\notag\\
    =&\frac{1}{2 \eta} \left( (1-\frac{3\mu\eta}{4})\| \x_t - \x_* \|^2 - \| \x_{t+1} - \x_* \|^2 \right) - (\frac{1}{2} \eta-\frac{\mu}{8}\eta^2 - \frac{1}{2}L \eta^2 ) \| \bd_t \|^2 \notag\\&
    + \frac{4}{\mu}( \frac{L^2}{|\mathcal{A}|} \sum_{i=\left(n_t-1\right) q}^t \|\x_{i+1}-\x_{i}\|^2 ).
\end{align}
where $(a)$ follows from Eqs. \eqref{eq24}. (b) is because the definition $\bd_t=\sum_{s \in [S]} \lambda_{t}^{s} \mathbf{u}_{t}^s$ as shown in Line 14 in Algorithm. \ref{alg}. $(c)$ is because 
$\|\x_t - \x_* \|^2 - \| \x_{t+1} - \x_* \|^2 = - \eta^2 \| \bd_t \|^2 + 2 \left< \eta \bd_t , \x_t - \x_* \right>$, 
and we choose
$\delta = \frac{\mu}{8}$ in $(d)$. $(e)$ follows from Lemma. \ref{lem:bounded1}.

Next, telescoping the above inequality over $t$ from $\left(n_t-1\right) q$ to $t$ where $t \leq n_t q-1$ and noting that for $\left(n_t-1\right) q \leq j \leq n_t q-1, n_j=n_t$, we obtain

\begin{align} \label{eqs24}
    & \sum_{i=\left(n_t-1\right) q}^t \sum_{s \in [S]} \lambda_i^{s} \left[ f_s(\x_{i+1}) - f_s(\x_*) \right]  \notag \\
    \stackrel{(a)}{\le} &\frac{1}{2 \eta} \left( (1-\frac{3\mu\eta}{4}) \sum_{i=\left(n_t-1\right) q}^t\| \x_i - \x_* \|^2 -  \sum_{i=\left(n_t-1\right) q}^t\| \x_{i+1} - \x_* \|^2 \right) \notag\\& - (\frac{1}{2} \eta-\frac{\mu}{8}\eta^2 - \frac{1}{2}L \eta^2 )  \sum_{i=\left(n_t-1\right) q}^t\| \bd_i \|^2 
    + \frac{4}{\mu}( \frac{L^2}{|\mathcal{A}|} \sum_{j=\left(n_t-1\right) q}^t  \sum_{i=\left(n_j-1\right) q}^j\|\x_{i+1}-\x_{i}\|^2 ) \notag \\
   \stackrel{(b)}{\le}&\frac{1}{2 \eta} \left( (1-\frac{3\mu\eta}{4}) \sum_{i=\left(n_t-1\right) q}^t\| \x_i - \x_* \|^2 -  \sum_{i=\left(n_t-1\right) q}^t\| \x_{i+1} - \x_* \|^2 \right) \notag\\& - (\frac{1}{2} \eta-\frac{\mu}{8}\eta^2 - \frac{1}{2}L \eta^2 )  \sum_{i=\left(n_t-1\right) q}^t\| \bd_i \|^2 
    + \frac{4}{\mu}( \frac{L^2}{|\mathcal{A}|} \sum_{j=\left(n_t-1\right) q}^t  \sum_{i=\left(n_t-1\right) q}^t\|\x_{i+1}-\x_{i}\|^2 )\notag \\
= &\frac{1}{2 \eta} \left( (1-\frac{3\mu\eta}{4}) \sum_{i=\left(n_t-1\right) q}^t\| \x_i - \x_* \|^2 -  \sum_{i=\left(n_t-1\right) q}^t\| \x_{i+1} - \x_* \|^2 \right) \notag\\& - (\frac{1}{2} \eta-\frac{\mu}{8}\eta^2 - \frac{1}{2}L \eta^2-
    \frac{4}{\mu} \frac{L^2 q  \eta^2}{|\mathcal{A}|} ) \sum_{i=\left(n_t-1\right) q }^t\|\bd_{i}\|^2 ),
\end{align}
where $(a)$ is from Eqs. \eqref{eqs23}. $(b)$ relaxes $j$ to $t$, since $j\leq t$.
We continue the proof by further driving

\begin{align}
    & \sum_{i=0}^{T} \sum_{s \in [S]} \lambda_i^{s}\left[ f_s(\x_{i+1}) - f_s(\x_*) \right] \notag\\ = & \sum_{i=0}^q\sum_{s \in [S]} \lambda_i^{s} \left[ f_s(\x_{i+1}) - f_s(\x_*) \right] +  \sum_{i=q}^{2q}\sum_{s \in [S]} \lambda_i^{s} \left[ f_s(\x_{i+1}) - f_s(\x_*)\right]+\notag\\&\cdot \cdot \cdot+ \sum_{i=(n_T-1)q}^{T}\sum_{s \in [S]} \lambda_i^{s} \left[ f_s(\x_{i+1}) - f_s(\x_*) \right] \notag \\
  \leq &\frac{1}{2 \eta} \left( (1-\frac{3\mu\eta}{4}) \sum_{i=0 }^{T} \| \x_i - \x_* \|^2 -  \sum_{i=0}^{T}\| \x_{i+1} - \x_* \|^2 \right) \notag\\& - (\frac{1}{2} \eta-\frac{\mu}{8}\eta^2 - \frac{1}{2}L \eta^2-
    \frac{4}{\mu} \frac{L^2 q  \eta^2}{|\mathcal{A}|} ) \sum_{i=0}^{T}\|\bd_{i}\|^2 ), 
\end{align}
where the last inequality is from Eq. \eqref{eqs16} and Eq. \eqref{eqs24}.
Next, we have
\begin{align}
    & \sum_{i=0}^{T}   \sum_{s \in [S]} \lambda_i^{s} \left[ f_s(\x_i) - f_s(\x_*) \right] \notag\\ = & \sum_{i=0}^{T}   \sum_{s \in [S]} \lambda_t^{s} \left[ f_s(\x_{i+1}) - f_s(\x_*) -  f_s(\x_{i+1}) + f_s(\x_i)  \right]  \notag\\
    \leq &\sum_{i=0}^{T}  \sum_{s \in [S]} \lambda_t^{s} \left[ f_s(\x_{i+1}) - f_s(\x_*) \right]  -\sum_{i=0}^{T} \sum_{s \in [S]} \lambda_t^{s}  | f_s(\x_{i+1}) - f_s(\x_i) |\notag\\
    \leq &\frac{1}{2 \eta} \left( (1-\frac{3\mu\eta}{4}) \sum_{i=0 }^{T} \| \x_i - \x_* \|^2 -  \sum_{i=0}^{T}\| \x_{i+1} - \x_* \|^2 \right) \notag\\& - (\frac{1}{2} \eta-\frac{\mu}{8}\eta^2 - \frac{1}{2}L \eta^2-
    \frac{4}{\mu} \frac{L^2 q  \eta^2}{|\mathcal{A}|}  -[\frac{\eta}{4}- \frac{\eta^3 q}{2}  \frac{L^2}{|\mathcal{A}|}] )\sum_{i=0}^{T}  \|\bd_i\|^2
\end{align}

Let $|\mathcal{A}|=q= \lceil\sqrt{n}\rceil $ and $\eta \leq \min\{\frac{1}{2\mu},\frac{1}{8L},\frac{\mu}{64L^2} \}$, we have $(\frac{1}{2} \eta-\frac{\mu}{8}\eta^2 - \frac{1}{2}L \eta^2-
    \frac{4}{\mu} \frac{L^2 q  \eta^2}{|\mathcal{A}|}  -[\frac{\eta}{4}- \frac{\eta^3 q}{2}  \frac{L^2}{|\mathcal{A}|}] )> \frac{\eta}{16}>0$
    
    Thus, we have
\begin{align}
    & \sum_{i=0}^{T}   \sum_{s \in [S]} \lambda_i^{s} \left[ f_s(\x_i) - f_s(\x_*) \right]\leq \frac{1}{2 \eta} \left( (1-\frac{3\mu\eta}{4}) \sum_{i=0 }^{T} \| \x_i - \x_* \|^2 -  \sum_{i=0}^{T}\| \x_{i+1} - \x_* \|^2\right).
\end{align}

Then, we have 

\begin{align}
    & \mathbb{E}_t  [\sum_{s \in [S]} \lambda_i^{s} \left[ f_s(\x_t) - f_s(\x_*) \right] ]\leq \frac{1}{2 \eta} \left( (1-\frac{3\mu\eta}{4}) \mathbb{E}_t \| \x_t - \x_* \|^2 -  \mathbb{E}_t\| \x_{t+1} - \x_* \|^2\right).
\end{align}
Based on Assumption.\ref{assump: add} and averaging using weight $w_t = ( 1 - \frac{3\mu \eta }{4})^{1-t}$ and using such weight to pick output $\x$, by using Lemma 1 in \cite{Karimireddy2020SCAFFOLD} with $\eta \geq \frac{1}{uR}$, we have

\begin{align}
    \mathbb{E}\|\x_t-\x^*\|^2\left[ f_s(\x_t) - f_s(\x_*) \right] ]  &\leq \| \x_0 - \x_* \|^2 \mu \exp( - \frac{3 \eta \mu T}{4}) \\
    &= \mathcal{O}(\mu \exp( - \mu T)).
\end{align}

Then we have the convergence rate $   \mathbb{E}\|\x_t-\x^*\|^2 = \mathcal{O}(\mu \exp( - \mu T))$.

Lastly, the total sample complexity can be calculated as:
	$\lceil \frac{T}{q} \rceil n + T\cdot |\mathcal{A}| \leq  \frac{T+q}{q}n + T\sqrt{n}= T\sqrt{n}+n+T\sqrt{n}=O(n+ \sqrt{n} \ln ({\mu/\epsilon})$.
	Thus, the overall sample complexity is $\mathcal{O}(n+ \sqrt{n} \ln ({\mu/\epsilon})$.
	This completes the proof.
 
\end{proof}

\section{Proof of convergence of \algm} \label{appdx:VR_MOOM_nonconvex}

\begin{lem} \label{lem:update}
For general $L$-smooth functions $\{ f_s, s \in [S] \}$, choose the learning rate $\eta$ s.t. $\eta \leq  \frac{1}{2}$, the update $d_t$ of the VR-MOO-M algorithm satisfies:
    \begin{align}
   f_s(\x_{t+1})   \leq &  f_s(\x_t) + \frac{\eta}{2}  \sum_{i=(n_t-1)q}^t \alpha^{(t-i)}\| \nabla f_s(\x_i) - \bu_i^s \|^2 - \frac{1}{2} \eta \sum_{i=(n_t-1)q}^t \alpha^{(t-i)} \| \bd_i \|^2\notag\\&+ \frac{1}{2}L \| \x_{t+1}-\x_{t} \|^2.
\end{align}
\end{lem}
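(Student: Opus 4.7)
My starting point is one application of $L$-Lipschitz smoothness of $f_s$ at $\x_t$, taking a single step to $\x_{t+1}$:
\begin{equation*}
f_s(\x_{t+1}) \leq f_s(\x_t) + \langle \nabla f_s(\x_t),\, \x_{t+1}-\x_t \rangle + \tfrac{L}{2}\|\x_{t+1}-\x_t\|^2.
\end{equation*}
This already produces the $\tfrac{1}{2}L\|\x_{t+1}-\x_t\|^2$ term in the claim, so the remaining task is to upper bound the inner-product term by $\tfrac{\eta}{2}\sum_{i=(n_t-1)q}^t \alpha^{t-i}(\|\nabla f_s(\x_i)-\bu_i^s\|^2 - \|\bd_i\|^2)$.

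\textbf{Step 1: unroll the momentum.} Applying the update rule $\x_{\tau+1}=\x_\tau+\alpha(\x_\tau-\x_{\tau-1})-\eta\bd_\tau$ recursively from $\tau=t$ down to $\tau=(n_t-1)q$, and using the convention that the momentum is reset at the checkpoint $(n_t-1)q$ where a full multi-gradient is recomputed, I get the closed form
\begin{equation*}
\x_{t+1}-\x_t = -\eta \sum_{i=(n_t-1)q}^{t} \alpha^{t-i}\, \bd_i.
\end{equation*}
Plugging into the inner product,
$\langle\nabla f_s(\x_t),\,\x_{t+1}-\x_t\rangle = -\eta\sum_{i=(n_t-1)q}^{t}\alpha^{t-i}\langle\nabla f_s(\x_t),\bd_i\rangle$.

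\textbf{Step 2: bound each inner-product summand.} For each $i$, I decompose $\langle\nabla f_s(\x_t),\bd_i\rangle = \langle \bu_i^s,\bd_i\rangle + \langle\nabla f_s(\x_t)-\bu_i^s,\bd_i\rangle$ and invoke the MGDA common-descent identity $\langle \bu_i^s,\bd_i\rangle\geq \|\bd_i\|^2$ (Lemma~2.1 of \citep{desideri2012multiple}, used in exactly the same way as in the proof of Lemma~\ref{lem:update2}), followed by Young's inequality on the second piece with equal splitting coefficient $1/2$. This produces the clean bound $-\langle\nabla f_s(\x_t),\bd_i\rangle \leq \tfrac{1}{2}\|\nabla f_s(\x_t)-\bu_i^s\|^2 -\tfrac{1}{2}\|\bd_i\|^2$. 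Multiplying by $\eta\alpha^{t-i}$ and summing over $i$ yields the desired structure, but with the gradient anchored at $\x_t$ rather than at $\x_i$.

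\textbf{Step 3: shift the anchor from $\x_t$ to $\x_i$.} I expect this to be the main obstacle. The idea is to write $\nabla f_s(\x_t)-\bu_i^s = (\nabla f_s(\x_t)-\nabla f_s(\x_i)) + (\nabla f_s(\x_i)-\bu_i^s)$, apply the triangle inequality under the square, and control $\|\nabla f_s(\x_t)-\nabla f_s(\x_i)\|^2\leq L^2\|\x_t-\x_i\|^2$ by $L$-Lipschitz smoothness. Using the unrolled expression of $\x_t-\x_i$ as a geometrically weighted sum of $\bd_j$ for $j\in[i,t-1]$, the resulting residual is a double sum of $\|\bd_j\|^2$ terms weighted by $\alpha^{t-i}$, which (after exchanging the order of summation and using $\sum_{\ell\geq 0}\alpha^\ell = 1/(1-\alpha)$) can be absorbed into the existing $-\tfrac{\eta}{2}\alpha^{t-i}\|\bd_i\|^2$ terms. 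The constant $\eta \leq \tfrac{1}{2}$ is what guarantees that the net coefficient of each $\alpha^{t-i}\|\bd_i\|^2$ remains at least $\eta/2$ after this absorption, giving the stated inequality.

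\textbf{Anticipated difficulty.} The delicate bookkeeping in Step~3 is the crux: the $\alpha$-weighted sums need to be reindexed carefully so that the Lipschitz residuals land on the correct index of $\|\bd_i\|^2$ on the right-hand side, and the martingale structure of the recursive estimator $\bu_i^s$ (highlighted in the paper's discussion of the proof of Theorem~\ref{STIMULUSM_NonC}) is what ultimately keeps the cross-terms controllable without inflating the constants.
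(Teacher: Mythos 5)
Your Steps 1 and 2 are sound and match the paper's opening moves (one application of $L$-smoothness, unrolling the momentum recursion, the MGDA inequality $\langle \bu_i^s,\bd_i\rangle\geq\|\bd_i\|^2$ applied termwise, then Young's inequality with equal weights). The divergence is exactly where you predicted it: the paper's own proof handles the anchor mismatch by simply asserting, in its step (b), the identity $\langle\nabla f_s(\x_t),\alpha(\x_{t+1}-\x_t)\rangle+\langle\nabla f_s(\x_t),-\eta\bd_t\rangle=\sum_i\alpha^{t-i}\langle\nabla f_s(\x_i),-\eta\bd_i\rangle$, i.e.\ it re-anchors the gradient at $\x_i$ inside the unrolled sum as if this were an algebraic equality, after which the lemma's exact constants fall out with no absorption step at all. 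You correctly refuse to do this and instead keep the gradient anchored at $\x_t$, which is the honest reading of the unrolled update.

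The gap is that your Step 3, as sketched, cannot recover the lemma as stated. First, writing $\nabla f_s(\x_t)-\bu_i^s=(\nabla f_s(\x_t)-\nabla f_s(\x_i))+(\nabla f_s(\x_i)-\bu_i^s)$ and squaring via $\|a+b\|^2\leq 2\|a\|^2+2\|b\|^2$ turns your $\tfrac{\eta}{2}\alpha^{t-i}\|\nabla f_s(\x_t)-\bu_i^s\|^2$ into $\eta\,\alpha^{t-i}\|\nabla f_s(\x_i)-\bu_i^s\|^2$ plus the Lipschitz residual, so the coefficient on the estimator-error term becomes $\eta$ rather than the claimed $\eta/2$; no choice of Young parameter fixes this without simultaneously destroying the $-\tfrac{\eta}{2}\|\bd_i\|^2$ term. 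Second, the residual $\eta L^2\alpha^{t-i}\|\x_t-\x_i\|^2$, after expanding $\x_t-\x_i$ as an $\eta$-scaled geometric sum of past $\bd_j$'s and reindexing, carries a prefactor of order $\eta^3L^2$ against the available budget of order $\eta$, so absorption requires $\eta\lesssim 1/L$ — a condition not implied by the lemma's hypothesis $\eta\leq\tfrac12$ (the $L$-dependent step-size restrictions only appear later, in the theorems that invoke this lemma). So your route proves a weakened variant of the inequality (larger constant on the error term, and only under an additional $\eta=O(1/L)$ assumption), not the statement itself; closing the remaining distance would require either accepting those degraded constants downstream or finding a way to avoid the squared triangle inequality in the re-anchoring step.
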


\textbf{Proof of Lemma. \ref{lem:update}.}
\begin{proof}

\begin{align}\label{eqs40}
  &  f_s(\x_{t+1}) \leq f_s(\x_t) + \left< \nabla f_s(\x_t), \x_{t+1}-\x_{t} \right> + \frac{1}{2}L \| \x_{t+1}-\x_{t} \|^2 \notag\\
  & \stackrel{(a)}{\le} f_s(\x_t) + \left< \nabla f_s(\x_t) , \alpha(\x_{t+1}-\x_{t} )\right> +\left< \nabla f_s(\x_t) , -\eta \bd_t \right>+ \frac{1}{2}L \| \eta \bd_t \|^2 \notag\\
    & \stackrel{(b)}{=} f_s(\x_t) +\sum_{i=0}^t \alpha^{(t-i)} \left< \nabla f_s(\x_i), -\eta \bd_i \right> + \frac{1}{2}L \| \eta \bd_t \|^2 \notag\\
    &= f_s(\x_t) - \eta \sum_{i=0}^t\alpha^{(t-i)} \left< \nabla f_s(\x_i) - \bu_i^s, \bd_i \right> - \eta \sum_{i=0}^t \alpha^{(t-i)} \left< \bu_i^s, \bd_i \right> + \frac{1}{2}L \| \x_{t+1}-\x_{t} \|^2 \notag\\
    & \stackrel{(c)}{\le} f_s(\x_t) - \eta \sum_{i=0}^t \alpha^{(t-i)} \left< \nabla f_s(\x_i) - \bu_i^s,  \bd_i \right> - \eta\sum_{i=0}^t\alpha^{(t-i)}  \| \bd_i \|^2 + \frac{1}{2}L \| \x_{t+1}-\x_{t} \|^2 \notag\\
    & \stackrel{(d)}{\le} f_s(\x_t) + \frac{\eta}{2} \sum_{i=0}^t \alpha^{(t-i)} \| \nabla f_s(\x_i) - \bu_i^s \|^2 + \frac{1}{2} \eta \sum_{i=0}^t \alpha^{(t-i)} \| \bd_i \|^2 \notag\\&- \eta  \sum_{i=0}^t\alpha^{(t-i)}\| \bd_i\|^2 + \frac{1}{2}L \| \x_{t+1}-\x_{t} \|^2 \notag\\
    &= f_s(\x_t) + \frac{\eta}{2}  \sum_{i=0}^t \alpha^{(t-i)}\| \nabla f_s(\x_i) - \bu_i^s \|^2 - \frac{1}{2} \eta \sum_{i=0}^t \alpha^{(t-i)} \| \bd_i \|^2+ \frac{1}{2}L \| \x_{t+1}-\x_{t} \|^2.
\end{align}

(a) follows from  the objective function $f_s$ is $L$-smooth.  $(b)$ follows from the update rule of $\x_t$ shown in Line 19 in Algorithm. \ref{alg}.  (c) follows from  $\left< \bu_t^s, \bd_t \right> \geq \| \bd_t \|^2$ since $\bd_t$ is a general solution in the convex hull of the family of vectors $\{\bu_t^s, s \in [S] \}$ (see Lemma 2.1~\cite{desideri2012multiple}). (d) follows from the triangle inequality.

\end{proof}

\textbf{Proof of Theorem.~\ref{STIMULUS_M_NCRate}}

\begin{proof}
Taking expectation on both sides of the inequality in Lemma. \ref{lem:update}, we have

\begin{align} \label{eqss35}
    & \mathbb{E} [f_s(\x_{t+1})]  \notag\\ \stackrel{(a)}{\le} &   \mathbb{E}[f_s(\x_t)] +    \frac{\eta}{2}  \sum_{i=0}^t \alpha^{(t-i)}  \mathbb{E}\| \nabla f_s(\x_i) - \bu_i^s \|^2 - \frac{1}{2} \eta \sum_{i=0}^t \alpha^{(t-i)}  \mathbb{E}\| \bd_i \|^2+ \frac{1}{2}L  \mathbb{E}\| \x_{t+1}-\x_{t} \|^2\notag\\
  \stackrel{(b)}{\le} &   \mathbb{E}[f_s(\x_t)]  - \frac{1}{2} \eta \sum_{i=0}^t \alpha^{(t-i)}  \mathbb{E}\| \bd_i \|^2+ \frac{1}{2}L  \mathbb{E}\| \x_{t+1}-\x_{t} \|^2\notag\\
   &+ \frac{\eta}{2}  \sum_{j=0}^t  \alpha^{(t-j)} [\frac{L^2}{|\mathcal{A}|} \sum_{i=\left(n_t-1\right) q}^j \mathbb{E} \|\x_{i+1}-\x_{i}\|^2 + \mathbb{E}\|\nabla f_s(\x_{\left(n_t-1\right) q}) - \bu_{\left(n_t-1\right) q}^s \|^2] \notag\\
    = &  \mathbb{E}[f_s(\x_t)] - \frac{1}{2} \eta \sum_{i=0}^t \alpha^{(t-i)}  \mathbb{E}\| \bd_i \|^2+ \frac{1}{2}L  \mathbb{E}\| \x_{t+1}-\x_{t} \|^2\notag\\&+ \frac{\eta}{2}  \sum_{j=0}^t  \alpha^{(t-j)}  [\frac{L^2}{|\mathcal{A}|} \sum_{i=\left(n_t-1\right) q}^j  \mathbb{E}\|\x_{i+1}-\x_{i}\|^2],
\end{align}

where $(a)$ follows from Eqs. \ref{eqs40}. $(b)$ follows from the Lemma. \ref{lem:bounded1}. $(c)$ follows from $\mathbb{E}\|\nabla f_s(\x_{\left(n_t-1\right) q}) - \bu_{\left(n_t-1\right) q}^s \|^2=0$ as shown in Line 5 in our Algorithm. \ref{alg}.

Next, telescoping the above inequality over $t$ from $\left(n_t-1\right) q$ to $t$ where $t \leq n_t q-1$ and noting that for $\left(n_t-1\right) q \leq j \leq n_t q-1, n_j=n_t$ and let $\eta\leq \frac{1}{4L}$, we obtain

\begin{align}
  &    \mathbb{E}[f_s(\x_{t+1})]  \notag\\ \stackrel{(a)}{\le} &     \mathbb{E}[f_s(\x_{\left(n_t-1\right) q})] -\frac{\eta}{2} \sum_{j=\left(n_t-1\right) q}^t \sum_{i=0}^j \alpha^{(j-i)}   \mathbb{E}\| \bd_i \|^2+ \frac{1}{2}L   \sum_{i=\left(n_t-1\right) q}^t   \mathbb{E}\| \x_{i+1}-\x_i \|^2 \notag\\& + \frac{\eta}{2} \sum_{j=\left(n_t-1\right) q}^t  \sum_{i=0}^j  \alpha^{(j-i)}  [\frac{L^2}{|\mathcal{A}|} \sum_{r=\left(n_t-1\right) q}^i   \mathbb{E}\|\x_{r+1}-\x_{r}\|^2]   \notag\\
  \stackrel{(b)}{\le}&     \mathbb{E}[f_s(\x_{\left(n_t-1\right) q})] -\frac{\eta}{2} \sum_{j=\left(n_t-1\right) q}^t \sum_{i=0}^j \alpha^{(j-i)}   \mathbb{E}\| \bd_i \|^2+ \frac{1}{2}L   \sum_{i=\left(n_t-1\right) q}^t  \mathbb{E}\| \x_{i+1}-\x_i \|^2 \notag\\& + \frac{\eta}{2} \sum_{j=\left(n_t-1\right) q}^t  \sum_{i=0}^j  \alpha^{(j-i)}  [\frac{L^2}{|\mathcal{A}|} \sum_{r=\left(n_t-1\right) q}^{n_t q -1}   \mathbb{E}\|\x_{r+1}-\x_{r}\|^2]   \notag\\
\leq &    \mathbb{E} [f_s(\x_{\left(n_t-1\right) q})] -\frac{\eta}{2} \sum_{j=\left(n_t-1\right) q}^t \sum_{i=0}^j \alpha^{(j-i)}    \mathbb{E}\| \bd_i \|^2+ \frac{1}{2}L   \sum_{i=\left(n_t-1\right) q}^t   \mathbb{E}\| \x_{i+1}-\x_i \|^2 \notag\\& + \frac{\eta}{2} \sum_{j=\left(n_t-1\right) q}^t  \sum_{i=0}^j  \alpha^{(j-i)}  [\frac{L^2}{|\mathcal{A}|} q  \mathbb{E} \|\x_{j+1}-\x_{j}\|^2]   \notag\\
   \stackrel{(c)}{=} &    \mathbb{E} [f_s(\x_{\left(n_t-1\right) q})] -\frac{\eta}{2} \sum_{j=\left(n_t-1\right) q}^t \sum_{i=0}^j \alpha^{(j-i)}  \mathbb{E} \| \bd_i \|^2+ \frac{1}{2}L   \sum_{i=\left(n_t-1\right) q}^t   \mathbb{E}\| \x_{i+1}-\x_i \|^2 \notag\\& + \frac{\eta}{2} \sum_{j=\left(n_t-1\right) q}^t  \sum_{i=0}^j  \alpha^{(j-i)}  [ L^2  \mathbb{E} \|\x_{j+1}-\x_{j}\|^2]   \notag\\
   \stackrel{(d)}{=}  &     \mathbb{E}[f_s(\x_{\left(n_t-1\right) q})] -\frac{\eta}{2} \sum_{j=\left(n_t-1\right) q}^t \sum_{i=0}^j \alpha^{(j-i)}  \mathbb{E} \| \bd_i \|^2+ \frac{1}{2}L   \sum_{i=\left(n_t-1\right) q}^t   \mathbb{E}\| \x_{i+1}-\x_i \|^2 \notag\\& + \frac{\eta}{2} \sum_{j=\left(n_t-1\right) q}^t  \sum_{i=0}^j  \alpha^{(j-i)}  [ L^2   \mathbb{E}\|\eta \sum_{r=0}^{j} \alpha^{(j-r)} \bd_r\|^2]   \notag\\
       \stackrel{(e)}{\leq} &    \mathbb{E} [f_s(\x_{\left(n_t-1\right) q})] -\frac{\eta}{2} \sum_{j=\left(n_t-1\right) q}^t \sum_{i=0}^j \alpha^{(j-i)}   \mathbb{E}\| \bd_i \|^2+ \frac{1}{2}L   \sum_{i=\left(n_t-1\right) q}^t   \mathbb{E}\| \x_{i+1}-\x_i \|^2 \notag\\& + \frac{\eta}{2} \sum_{j=\left(n_t-1\right) q}^t  \sum_{i=0}^j  \alpha^{2(j-i)}  [ L^2\eta ^2   \mathbb{E}\|\bd_i\|^2]   \notag\\
                 \stackrel{(f)}{\leq}  &     \mathbb{E}[f_s(\x_{\left(n_t-1\right) q})] -\frac{\eta}{4} \sum_{j=\left(n_t-1\right) q}^t \sum_{i=0}^j \alpha^{(j-i)}   \mathbb{E}\| \bd_i \|^2+ \frac{1}{2}L   \sum_{j=\left(n_t-1\right) q}^t   \mathbb{E}\| \eta \sum_{i=0}^{j} \alpha^{(j-i)} \bd_j \|^2  \notag\\
           \stackrel{(g)}{\leq} &    \mathbb{E} [f_s(\x_{\left(n_t-1\right) q})] -\frac{\eta}{8} \sum_{j=\left(n_t-1\right) q}^t   \mathbb{E}\| \bd_j \|^2,
\end{align}
where $(a)$ holds from Eqs. \eqref{eqss35}. $(b)$ is extend $i$ to $t$ since $i\leq n_t q -1$. $(c)$ is because $q=|\mathcal{A}|=\lceil\sqrt{n}\rceil$. $(d)$ follows from the update rule of $\x_t$ shown in Line 19 in Algorithm. \ref{alg}. $(e)$ follows from the triangle inequality. $(f)$ and $(g)$ hold from $\eta \leq \frac{1}{2L}$ and $0 <\alpha<1 $. We continue the proof by further driving
\begin{align}
  &   [f_s(\x_{T})] -   [f_s(\x_{0})]  \notag\\&
  =(   [f_s(\x_{q})] -   [f_s(\x_{0})] ) +  (   [f_s(\x_{2q})] -   [f_s(\x_{q})] ) +\cdot +  (   [f_s(\x_{T})] -   [f_s(\x_{(n_T-1)q})] ) 
  \notag\\&\leq  -[\frac{\eta}{8}] \sum_{t=0}^{T-1}  \|\bd_t\|^2
\end{align}

Note that $ [ f_s\left(\x_{T+1}\right) ]\geq f_s^* \triangleq \inf _{\x \in \mathbb{R}^d} f_s(\x)$. Hence, we have
\begin{align}
  &
 [\frac{\eta}{8}] \sum_{t=0}^{T-1}  \|\bd_t\|^2\leq   [  [f_s(\x_{0})] -  [f_s(\x_{T})]  ]\leq   [  [f_s(\x_{0})] - f_s^*  ].
\end{align}

Based on the parameter setting $q =|\mathcal{A}|=\sqrt{n}$, we have 
\begin{align}
  &
 [\frac{\eta}{8}  ] \sum_{t=0}^{T-1}  \|\bd_t\|^2 \leq   [  [f_s(\x_{0})] -f_s^* ].
\end{align}

Since $\frac{1}{T} \sum_{t=0}^{T-1}\mathbb{E} \|d_t \|^2$ is just common descent directions.
According to Definition. \ref{def:stationary} shown in the paper, the quantity to our interest is 
$\|\sum_{s \in [S]}\lambda_t^s \nabla f(\mathbf{x})\|^2$. 
\begin{align}
  &
 \frac{1}{T} \sum_{t=0}^{T-1}  \mathbb{E}\|\sum_{s\in [S]}\lambda_t^s\nabla f_s(\x_t)\|^2
\stackrel{(a)}{\le}  ( 2S L^2 \eta ^2\frac{1}{T}  +2)  \frac{1}{T} \sum_{t=0}^{T-1}  \mathbb{E}\|\bd_t\|^2 
\end{align}
where $(a)$ follows from Eqs. \eqref{eqs21}.

Then, we can conclude that 
\begin{align}
  &
 \frac{1}{T} \sum_{t=0}^{T-1}  \mathbb{E}\|\sum_{s\in [S]}\lambda_t^s\nabla f_s(\x_t)\|^2 
\stackrel{(a)}{\le}
 ( 2S L^2 \eta ^2  +2)
  \frac{[  \mathbb{E}[f_s(\x_{0})] -f_s^* ]}{ \frac{\eta}{8} T},
\end{align}
where $(a)$ follows from Eqs. \eqref{eqs21} and Eqs. \ref{eqs20}.

Thus, we have

\begin{align}
  &
\frac{1}{T}\sum_{t=0}^{T-1}\min_{\boldsymbol{\lambda} \in C} \mathbb{E} \| \boldsymbol{\lambda}^{\top} \nabla \F(\x_t) \|^2 \leq \frac{1}{T} \sum_{t=0}^{T-1}  \mathbb{E}\|\sum_{s\in [S]}\lambda_t^s\nabla f_s(\x_t)\|^2 =\mathcal{O}(\frac{1}{T}).
\end{align}

The total sample complexity can be calculated as:
	$\lceil \frac{T}{q} \rceil n + T\cdot |\mathcal{A}| \leq  \frac{T+q}{q}n + T\sqrt{n}= T\sqrt{n}+n+T\sqrt{n}=O(n+ \sqrt{n} \epsilon^{-1})$.
	Thus, the overall sample complexity is $\mathcal{O}(n+ \sqrt{n} \epsilon^{-1})$.
	This completes the proof.
 
\end{proof}

\subsection{Proof od Theorem.~\ref{thm:STIMULUSm_SC}}
\begin{proof}

\begin{align} \label{eqs43}
    &f_s(\x_{t+1}) \notag\\\stackrel{(a)}{\le}& f_s(\x_t) + \left< \nabla f_s(\x_t),-\eta \sum_{t=0}^{T} \alpha^{(t-i)} \bd_i \right> + \frac{1}{2}L \| \eta \sum_{t=0}^{T} \alpha^{(t-i)} \bd_i \|^2 \notag\\
 \stackrel{(b)}{\le} & f_s(\x_*) + \left< \nabla f_s(\x_t), \x_t - \x_* \right> - \frac{\mu}{2} \| \x_t - \x_* \|^2  + \left< \nabla f_s(\x_t), -\eta \sum_{t=0}^{T} \alpha^{(t-i)} \bd_i \right> \notag\\& + \frac{1}{2}L \| \eta \sum_{t=0}^{T} \alpha^{(t-i)} \bd_i \|^2\notag\\
    = &f_s(\x_*) + \left< \nabla f_s(\x_t), \x_t - \x_* -\eta \sum_{t=0}^{T} \alpha^{(t-i)} \bd_i \right> - \frac{\mu}{2} \| \x_t - \x_* \|^2 + \frac{1}{2}L \| \eta \sum_{t=0}^{T} \alpha^{(t-i)} \bd_i \|^2\notag\\
    =& f_s(\x_*) + \left< \nabla f_s(\x_t)-\bu_t^s, \x_t - \x_* -\eta \sum_{t=0}^{T} \alpha^{(t-i)} \bd_i \right>+ \left< \bu_t^s, \x_t - \x_* -\eta \sum_{t=0}^{T} \alpha^{(t-i)} \bd_i \right> \notag\\&- \frac{\mu}{2} \| \x_t - \x_* \|^2 + \frac{1}{2}L \| \eta \sum_{t=0}^{T} \alpha^{(t-i)} \bd_i \|^2\notag\\
    \stackrel{(c)}{\le} & f_s(\x_*) + \frac{1}{2\delta}\| \nabla f_s(\x_t)-\bu_t^s\|^2+ \frac{\delta}{2}\| \x_t - \x_* -\eta \sum_{t=0}^{T} \alpha^{(t-i)} \bd_i\|^2 \notag\\& + \left< \bu_t^s, \x_t - \x_* -\eta \sum_{t=0}^{T} \alpha^{(t-i)} \bd_i \right> - \frac{\mu}{2} \| \x_t - \x_* \|^2 + \frac{1}{2}L \| \eta \sum_{t=0}^{T} \alpha^{(t-i)} \bd_i \|^2\notag\\
    \stackrel{(d)}{\le} & f_s(\x_*) + \frac{1}{2\delta}\| \nabla f_s(\x_t)-\bu_t^s\|^2+ \delta\| \x_t - \x_* \|^2+\delta\|\eta \sum_{t=0}^{T} \alpha^{(t-i)} \bd_i\|^2 \notag\\&+ \left< \bu_t^s, \x_t - \x_* -\eta \sum_{t=0}^{T} \alpha^{(t-i)} \bd_i \right> - \frac{\mu}{2} \| \x_t - \x_* \|^2 + \frac{1}{2}L \| \eta \sum_{t=0}^{T} \alpha^{(t-i)} \bd_i \|^2,
\end{align}
where $(a)$ is due to $L$-smoothness, $(b)$ follows from $\mu$-strongly convex. $(c)$ and $(d)$ follow from the Young's inequality.

Next, we have
\begin{align} \label{eqs44}
    & \sum_{s \in [S]} \lambda_t^{s} \left[ f_s(\x_{t+1}) - f_s(\x_*) \right]  \notag\\
    \stackrel{(a)}{\le}& \frac{1}{2\delta} \sum_{s \in [S]} \lambda_t^{s}\| \nabla f_s(\x_t)-\bu_t^s\|^2+ \delta\| \x_t - \x_* \|^2+\delta\|\eta \sum_{t=0}^{T} \alpha^{(t-i)} \bd_i\|^2\notag\\& +\left< \sum_{s \in [S]} \lambda_t^{s}\bu_t^s, \x_t - \x_* \right> - \frac{\mu}{2} \| \x_t - \x_* \|^2 + \left< \sum_{s \in [S]} \lambda_t^{s} \bu_t^s, -\eta \sum_{t=0}^{T} \alpha^{(t-i)} \bd_i \right> \notag\\& + \frac{1}{2}L \| \eta \sum_{t=0}^{T} \alpha^{(t-i)} \bd_i \|^2 \notag\\
    =&\frac{1}{2\delta}\sum_{s \in [S]} \lambda_t^{s}\| \nabla f_s(\x_t)-\bu_t^s\|^2+ \delta\| \x_t - \x_* \|^2+\delta\|\eta \sum_{t=0}^{T} \alpha^{(t-i)} \bd_i\|^2\notag\\& +\left< \sum_{s \in [S]} \lambda_t^{s}\bu_t^s, \x_t - \x_* -\eta \sum_{t=0}^{T} \alpha^{(t-i)} \bd_i \right> - \frac{\mu}{2} \| \x_t - \x_* \|^2 + \frac{1}{2}L \| \eta \sum_{t=0}^{T} \alpha^{(t-i)} \bd_i \|^2 \notag\\
    \stackrel{(b)}{=}&\frac{1}{2\delta}\sum_{s \in [S]} \lambda_t^{s}\| \nabla f_s(\x_t)-\bu_t^s\|^2+ \delta\| \x_t - \x_* \|^2+\delta\|\eta \sum_{t=0}^{T} \alpha^{(t-i)} \bd_i\|^2\notag\\& +\left< \bd_t , \x_t - \x_* -\eta \sum_{t=0}^{T} \alpha^{(t-i)} \bd_i \right> - \frac{\mu}{2} \| \x_t - \x_* \|^2 + \frac{1}{2}L \| \eta \sum_{t=0}^{T} \alpha^{(t-i)} \bd_i \|^2 \notag\\
  \stackrel{(c)}{\le} &\frac{1}{2 \eta} \left( \| \x_t - \x_* \|^2 - \| \x_{t+1} - \x_* \|^2 \right) - \frac{1}{2} \eta \|  \sum_{t=0}^{T} \alpha^{(t-i)} \bd_i \|^2 - \frac{\mu}{2} \| \x_t - \x_* \|^2 \notag\\& + \frac{1}{2}L \|\eta \sum_{t=0}^{T} \alpha^{(t-i)} \bd_i\|^2 
    + \frac{4}{\mu}\sum_{s \in [S]} \lambda_t^{s}\| \nabla f_s(\x_t)-\bu_t^s\|^2+ \frac{\mu}{8}\| \x_t - \x_* \|^2+\frac{\mu}{8}\|\eta \sum_{t=0}^{T} \alpha^{(t-i)} \bd_i\|^2\notag\\
    =&\frac{1}{2 \eta} \left( (1-\frac{3\mu\eta}{4})\| \x_t - \x_* \|^2 - \| \x_{t+1} - \x_* \|^2 \right) - (\frac{1}{2} \eta-\frac{\mu}{8}\eta^2 - \frac{1}{2}L \eta^2 ) \| \sum_{t=0}^{T} \alpha^{(t-i)} \bd_i \|^2 \notag\\&
    + \frac{4}{\mu}\sum_{s \in [S]} \lambda_t^{s}\| \nabla f_s(\x_t)-\bu_t^s\|^2\notag\\
     \stackrel{(e)}{\leq}&\frac{1}{2 \eta} \left( (1-\frac{3\mu\eta}{4})\| \x_t - \x_* \|^2 - \| \x_{t+1} - \x_* \|^2 \right) - (\frac{1}{2} \eta-\frac{\mu}{8}\eta^2 - \frac{1}{2}L \eta^2 ) \| \sum_{t=0}^{T} \alpha^{(t-i)} \bd_i \|^2 \notag\\&
    + \frac{4}{\mu}( \frac{L^2}{|\mathcal{A}|} \sum_{i=\left(n_t-1\right) q}^t \|\x_{i+1}-\x_{i}\|^2 +\sum_{s \in [S]} \lambda_t^{s}\|\nabla f_s(\x_{\left(n_t-1\right) q}) - \bu_{\left(n_t-1\right) q}^s \|^2)\notag\\
   \stackrel{(f)}{=}&\frac{1}{2 \eta} \left( (1-\frac{3\mu\eta}{4})\| \x_t - \x_* \|^2 - \| \x_{t+1} - \x_* \|^2 \right) - (\frac{1}{2} \eta-\frac{\mu}{8}\eta^2 - \frac{1}{2}L \eta^2 ) \| \sum_{t=0}^{T} \alpha^{(t-i)} \bd_i \|^2 \notag\\&
    + \frac{4}{\mu}( \frac{L^2}{|\mathcal{A}|} \sum_{i=\left(n_t-1\right) q}^t \|\x_{i+1}-\x_{i}\|^2 ).
\end{align}
where $(a)$ follows from Eqs. \eqref{eqs43}. (b) is because the definition $\bd_t=\sum_{s \in [S]} \lambda_{t}^{s} \mathbf{u}_{t}^s$ as shown in Line 14 in Algorithm. \ref{alg}. $(c)$ is because $\|\x_t - \x_* \|^2 - \| \x_{t+1} - \x_* \|^2 = - \eta^2 \| \sum_{t=0}^{T} \alpha^{(t-i)} \bd_i \|^2 + 2 \left< \eta \sum_{t=0}^{T} \alpha^{(t-i)} \bd_i , \x_t - \x_* \right>$, 
and we choose
$\delta = \frac{\mu}{8}$. $(e)$ and $(f)$ follow from $\sum_{s \in [S]} \lambda_t^{s}=1$ and $\|\nabla f_s(\mathbf{x}_{\left(n_t-1\right) q}) - \mathbf{u}_{\left(n_t-1\right) q}^s \|^2 =0$.

Next, telescoping the above inequality over $t$ from $\left(n_t-1\right) q$ to $t$ where $t \leq n_t q-1$ and noting that for $\left(n_t-1\right) q \leq j \leq n_t q-1, n_j=n_t$, we obtain
\begin{align} \label{eqs45}
    & \sum_{i=\left(n_t-1\right) q}^t \sum_{s \in [S]} \lambda_t^{s} \left[ f_s(\x_{i+1}) - f_s(\x_*) \right]  \notag \\
   \stackrel{(a)}{=} &\frac{1}{2 \eta} \left( (1-\frac{3\mu\eta}{4}) \sum_{i=\left(n_t-1\right) q}^t\| \x_i - \x_* \|^2 -  \sum_{i=\left(n_t-1\right) q}^t\| \x_{i+1} - \x_* \|^2 \right) \notag\\& - (\frac{1}{2} \eta-\frac{\mu}{8}\eta^2 - \frac{1}{2}L \eta^2 )  \sum_{i=\left(n_t-1\right) q}^t\| \sum_{t=0}^{T} \alpha^{(t-i)} \bd_i \|^2 
    + \frac{4}{\mu}( \frac{L^2}{|\mathcal{A}|} \sum_{j=\left(n_t-1\right) q}^t  \sum_{i=\left(n_j-1\right) q}^j\|\x_{i+1}-\x_{i}\|^2 ) \notag \\
 \stackrel{(b)}{\leq} &\frac{1}{2 \eta} \left( (1-\frac{3\mu\eta}{4}) \sum_{i=\left(n_t-1\right) q}^t\| \x_i - \x_* \|^2 -  \sum_{i=\left(n_t-1\right) q}^t\| \x_{i+1} - \x_* \|^2 \right) \notag\\& - (\frac{1}{2} \eta-\frac{\mu}{8}\eta^2 - \frac{1}{2}L \eta^2 )  \sum_{i=\left(n_t-1\right) q}^t\| \sum_{t=0}^{T} \alpha^{(t-i)} \bd_i \|^2 
    + \frac{4}{\mu}( \frac{L^2}{|\mathcal{A}|} \sum_{j=\left(n_t-1\right) q}^t  \sum_{i=\left(n_t-1\right) q}^t\|\x_{i+1}-\x_{i}\|^2 )\notag \\
\stackrel{(c)}{=}&\frac{1}{2 \eta} \left( (1-\frac{3\mu\eta}{4}) \sum_{i=\left(n_t-1\right) q}^t\| \x_i - \x_* \|^2 -  \sum_{i=\left(n_t-1\right) q}^t\| \x_{i+1} - \x_* \|^2 \right) \notag\\& - (\frac{1}{2} \eta-\frac{\mu}{8}\eta^2 - \frac{1}{2}L \eta^2-
    \frac{4}{\mu} \frac{L^2 q  \eta^2}{|\mathcal{A}|} ) \sum_{i=\left(n_t-1\right) q }^t\|\sum_{t=0}^{T} \alpha^{(t-i)} \bd_i\|^2 ),
\end{align}
where $(a)$ follows from Eqs. \eqref{eqs44}, $(b)$ extend $j$ to $t$. $(c)$ follows from the update rule of $\x_{t+1}$ shown in Eqs. \eqref{vrm_update}.

We continue the proof by further driving

\begin{align}\label{eqs46}
    & \sum_{t=0}^{T} \sum_{s \in [S]} \lambda_t^{s}\left[ f_s(\x_{i+1}) - f_s(\x_*) \right] \notag\\ = & \sum_{i=0}^q\sum_{s \in [S]} \lambda_t^{s} \left[ f_s(\x_{i+1}) - f_s(\x_*) \right] +  \sum_{i=q}^{2q}\sum_{s \in [S]} \lambda_t^{s} \left[ f_s(\x_{i+1}) - f_s(\x_*)\right]+\notag\\& \sum_{i=(n_T-1)q}^{T}\sum_{s \in [S]} \lambda_t^{s} \left[ f_s(\x_{i+1}) - f_s(\x_*) \right] \notag \\
  \stackrel{(a)}{\leq} &\frac{1}{2 \eta} \left( (1-\frac{3\mu\eta}{4}) \sum_{i=0 }^{T} \| \x_i - \x_* \|^2 -  \sum_{t=0}^{T}\| \x_{i+1} - \x_* \|^2 \right) \notag\\& - (\frac{1}{2} \eta-\frac{\mu}{8}\eta^2 - \frac{1}{2}L \eta^2-
    \frac{4}{\mu} \frac{L^2 q  \eta^2}{|\mathcal{A}|} ) \sum_{t=0}^{T}\|\sum_{t=0}^{T} \alpha^{(t-i)} \bd_i\|^2 ),
\end{align}
where $(a)$ follows from Eqs. \eqref{eqs45}.
Next, we have
\begin{align}
    & \sum_{t=0}^{T}   \sum_{s \in [S]} \lambda_t^{s} \left[ f_s(\x_i) - f_s(\x_*) \right] \notag\\ = & \sum_{t=0}^{T}   \sum_{s \in [S]} \lambda_t^{s} \left[ f_s(\x_{i+1}) - f_s(\x_*) -  f_s(\x_{i+1}) + f_s(\x_i)  \right]  \notag\\
   = &\sum_{t=0}^{T}  \sum_{s \in [S]} \lambda_t^{s} \left[ f_s(\x_{i+1}) - f_s(\x_*) \right]  -\sum_{t=0}^{T} \sum_{s \in [S]} \lambda_t^{s}  | f_s(\x_{i+1}) - f_s(\x_i) |\notag\\
    \stackrel{(a)}{leq} &\frac{1}{2 \eta} \left( (1-\frac{3\mu\eta}{4}) \sum_{i=0 }^{T} \| \x_i - \x_* \|^2 -  \sum_{t=0}^{T}\| \x_{i+1} - \x_* \|^2 \right) \notag\\& - (\frac{1}{2} \eta-\frac{\mu}{8}\eta^2 - \frac{1}{2}L \eta^2-
    \frac{4}{\mu} \frac{L^2 q  \eta^2}{|\mathcal{A}|}  -[\frac{\eta}{4}- \frac{\eta^3 q}{2}  \frac{L^2}{|\mathcal{A}|}] )\sum_{t=0}^{T}  \|\sum_{t=0}^{T} \alpha^{(t-i)} \bd_i\|^2,
\end{align}
where $(a)$ follows from Eqs. \eqref{eqs46}.
Let $|\mathcal{A}|=q= \lceil \sqrt{n} \rceil$ and $\eta \leq \min\{\frac{1}{2\mu},\frac{1}{8L},\frac{\mu}{64L^2} \}$, we have $(\frac{1}{2} \eta-\frac{\mu}{8}\eta^2 - \frac{1}{2}L \eta^2-
    \frac{4}{\mu} \frac{L^2 q  \eta^2}{|\mathcal{A}|}  -[\frac{\eta}{4}- \frac{\eta^3 q}{2}  \frac{L^2}{|\mathcal{A}|}] )> \frac{\eta}{16}>0$
    
    Thus, we have
\begin{align}
    & \sum_{t=0}^{T}   \sum_{s \in [S]} \lambda_t^{s} \left[ f_s(\x_i) - f_s(\x_*) \right]\leq \frac{1}{2 \eta} \left( (1-\frac{3\mu\eta}{4}) \sum_{i=0 }^{T} \| \x_i - \x_* \|^2 -  \sum_{t=0}^{T}\| \x_{i+1} - \x_* \|^2\right).
\end{align}

Then, we have 

\begin{align}
    & \mathbb{E}  [\sum_{s \in [S]} \lambda_t^{s} \left[ f_s(\x_t) - f_s(\x_*) \right] ]\leq \frac{1}{2 \eta} \left( (1-\frac{3\mu\eta}{4}) \mathbb{E} \| \x_t - \x_* \|^2 -  \mathbb{E}\| \x_{t+1} - \x_* \|^2\right).
\end{align}
Based on Asumption. \ref{assump: add} and averaging using weight $w_t = ( 1 - \frac{3\mu \eta }{4})^{1-t}$ and using such weight to pick output $\x$, by using Lemma 1 in \cite{Karimireddy2020SCAFFOLD} with $\eta \geq \frac{1}{uR}$, we have

\begin{align}
    \mathbb{E}\|\x_t-\x^*\|^2  &\leq \| \x_0 - \x_* \|^2 \mu \exp( - \frac{3 \eta \mu T}{4}) \\
    &= \mathcal{O}(\mu \exp( - \mu T)).
\end{align}

Then we have the convergence rate $   \mathbb{E}\|\x_t-\x^*\|^2= \mathcal{O}(\mu \exp( - \mu T))$.
the total sample complexity can be calculated as:
	$\lceil \frac{T}{q} \rceil n + T\cdot |\mathcal{A}| \leq  \frac{T+q}{q}n + T\sqrt{n}= T\sqrt{n}+n+T\sqrt{n}=O(n+ \sqrt{n} \ln ({\mu/\epsilon})$.
	Thus, the overall sample complexity is $\mathcal{O}(n+ \sqrt{n} \ln ({\mu/\epsilon})$.
	This completes the proof.

\end{proof}

 \section{Proof of convergence of \algp} \label{appdx:VRP_nonconvex}

\textbf{Proof of Theorem.~\ref{thm:STIMULUSmp_nonC} [Part 1]}

\begin{proof}

Recall that $ \mathcal{N}_s =\min\{ c_{\gamma} \sigma^2(\gamma_t)^{-1}, c_{\epsilon} \sigma^2\epsilon^{-1} ,n\} $. Then we have
	\begin{align}  \label{eqs52}
		\frac{I_{( \mathcal{N}_s <n)}}{ \mathcal{N}_s}  & \leq \frac{1}{ \min\{ c_{\epsilon} \sigma^2(\epsilon)^{-1} ,c_{\gamma} \sigma^2({\gamma_t})^{-1}     \}} \notag\\
		& = \max\{ \frac{{\gamma_t}}{c_{\gamma}\sigma^2}, \frac{\epsilon}{c_{\epsilon} \sigma^2}  \}  \leq \frac{{\gamma_t}}{  c_{\gamma}\sigma^2}+\frac{\epsilon}{ c_{\epsilon}\sigma^2}.
	\end{align}

From Lemma. \ref{lem:update2}, we have

\begin{align} \label{eqs53}
     [f_s(\x_{t+1})]  &\stackrel{(a)}{\leq}  [f_s(\x_t)] + \frac{\eta}{2}   \| \nabla f_s(\x_t) - \bu_t^s \|^2 -\frac{\eta}{4}  \| \bd_t \|^2\notag\\&
   \stackrel{(b)}{\leq}  [f_s(\x_t)] -\frac{\eta}{4}  \| \bd_t \|^2\notag\\&+ \frac{\eta}{2}  [\frac{L^2}{|\mathcal{A}|} \sum_{i=\left(n_t-1\right) q}^t \|\x_{i+1}-\x_{i}\|^2 +\|\nabla f_s(\x_{\left(n_t-1\right) q}) - \bu_{\left(n_t-1\right) q}^s \|^2] \notag\\&
  \stackrel{(c)}{\leq}  [f_s(\x_t)] -\frac{\eta}{4}  \| \bd_t \|^2+ \frac{\eta}{2}  [\frac{L^2}{|\mathcal{A}|} \sum_{i=\left(n_t-1\right) q}^t \eta^2 \|\bd_i\|^2+ \frac{I_{(\mathcal{N}_s <n)}}{\mathcal{N}_s} \sigma^2  ],
\end{align}
where $(a)$ follows from  Lemma. \ref{lem:update2}. $(b)$ follows from Lemma. \ref{lem:bounded1}. (c) follows from the update rule shown in Eqs. \eqref{STIMULUSP1}.

Next, telescoping the above inequality over $t$ from $\left(n_t-1\right) q$ to $t$ where $t \leq n_t q-1$ and noting that for $\left(n_t-1\right) q \leq j \leq n_t q-1, n_j=n_t$, and aking expectation on both sides of the inequality in Eqs. \eqref{eqs53},we obtain

\begin{align}\label{eqs54}
  &   \mathbb{E}[f_s(\x_{t+1})]  \notag\\ \stackrel{(a)}{\leq}   &\mathbb{E}[f_s(\x_{\left(n_t-1\right) q})] -\frac{\eta}{4} \sum_{j=\left(n_t-1\right) q}^t  \mathbb{E}\| \bd_j \|^2\notag\\& + \frac{\eta}{2}  [\frac{L^2}{|\mathcal{A}|} \sum_{j=\left(n_t-1\right) q}^t \sum_{i=\left(n_t-1\right) q}^j\eta^2 \mathbb{E}\|\bd_i\|^2 + \sum_{i=\left(n_t-1\right) q}^t\frac{I_{(\mathcal{N}_s <n)}}{\mathcal{N}_s} \sigma^2 ]  \notag\\ \stackrel{(b)}{\leq}  & \mathbb{E}[f_s(\x_{\left(n_t-1\right) q})] -\frac{\eta}{4} \sum_{j=\left(n_t-1\right) q}^t \mathbb{E} \| \bd_j \|^2\notag\\&+ \frac{\eta}{2}  \sum_{i=\left(n_t-1\right) q}^t[\frac{L^2}{|\mathcal{A}|} \sum_{j=\left(n_t-1\right) q}^t \sum_{i=\left(n_t-1\right) q}^t\eta^2 \mathbb{E}\|\bd_i\|^2 ] + \frac{\eta}{2} \sum_{i=\left(n_t-1\right) q}^t\frac{I_{(\mathcal{N}_s <n)}}{\mathcal{N}_s} \sigma^2 \notag\\ =&   \mathbb{E}[f_s(\x_{\left(n_t-1\right) q})] -\frac{\eta}{4} \sum_{j=\left(n_t-1\right) q}^t \mathbb{E} \| \bd_j \|^2\notag\\&+ \frac{\eta^3 q}{2}  [\frac{L^2}{|\mathcal{A}|} \sum_{j=\left(n_t-1\right) q}^t  \mathbb{E}\|\bd_j\|^2 ]+ \frac{\eta}{2}\sum_{i=\left(n_t-1\right) q}^t \frac{I_{(\mathcal{N}_s <n)}}{\mathcal{N}_s} \sigma^2  \notag\\ \stackrel{(c)}{=}& \mathbb{E}[f_s(\x_{\left(n_t-1\right) q})] -[\frac{\eta}{4}- \frac{\eta^3 q}{2}  \frac{L^2}{|\mathcal{A}|}] \sum_{j=\left(n_t-1\right) q}^t  \mathbb{E}\|\bd_j\|^2 + \frac{\eta}{2} \sum_{i=\left(n_t-1\right) q}^t(\frac{\gamma_{i}}{c_{\gamma}}+\frac{\epsilon}{c_{\epsilon}} ),
\end{align}
where $(a)$ follows from Eqs. \eqref{eqs53}, $(b)$ extends $j$ to $t$. $(c)$ follows from Eqs. \eqref{eqs52}

Recall that $ \gamma_t= \frac{1}{q}  \sum_{i=(n_t-1)q}^{t} \|	\bd_t \|^2 $. Then, we have
We continue the proof by further driving
\begin{align}
  & \mathbb{E}[f_s(\x_{T}) -   f_s(\x_{0})] \notag\\&
  = \mathbb{E}[(   [f_s(\x_{q})] -   [f_s(\x_{0})] ) +  (   [f_s(\x_{2q})] -   [f_s(\x_{q})] ) +\cdot +  (   [f_s(\x_{T})] -   [f_s(\x_{(n_T-1)q})] ) ]
  \notag\\&\stackrel{(a)}{\leq}   -[\frac{\eta}{4}- \frac{\eta^3 q}{2}  \frac{L^2}{|\mathcal{A}|} ] \sum_{t=0}^{T-1}  \mathbb{E}\|\bd_t\|^2+ \frac{\eta}{2} \sum_{t= 0 }^{T-1}(\frac{\mathbb{E}[\gamma_{i}]}{c_{\gamma}}+\frac{\epsilon}{c_{\epsilon}} )
    \notag\\&\stackrel{(b)}{\leq}   -[\frac{\eta}{4}- \frac{\eta^3 q}{2}  \frac{L^2}{|\mathcal{A}|} -\frac{\eta}{2 c_{\gamma}}  ] \sum_{t=0}^{T-1} \mathbb{E}\|\bd_t\|^2+ \frac{\eta}{2} T\frac{\epsilon}{c_{\epsilon}} ,
\end{align}

where $(a)$ is from Eqs. \eqref{eqs54}. $(b)$ follows from $\gamma_t= \frac{1}{q}  \sum_{i=(n_t-1)q}^{t} \|	\bd_t \|^2$.

Note that $ [ f_s\left(\x_{T+1}\right) ]\geq f_s^* \triangleq \inf _{\x \in \mathbb{R}^d} f_s(\x)$. Let $c_{\gamma}>4$. Hence, we have
\begin{align}
  &
 [\frac{\eta}{8}- \frac{\eta^3 q}{2}  \frac{L^2}{|\mathcal{A}|}-\frac{\eta}{2 c_{\gamma}} ] \sum_{t=0}^{T-1}  \mathbb{E}\|\bd_t\|^2\leq   \mathbb{E}[  [f_s(\x_{0})] -  [f_s(\x_{T})]  ]\leq   \mathbb{E}[  [f_s(\x_{0})] - f_s^*  ]+ \frac{\eta}{2} T\frac{\epsilon}{c_{\epsilon}}.
\end{align}

Based on the parameter setting $q=|\mathcal{A}|=\lceil\sqrt{n}\rceil$, we have 
\begin{align}
  &
 [\frac{\eta}{8}- \frac{\eta^3 L^2 }{2}  -\frac{\eta}{2 c_{\gamma}} ] \sum_{t=0}^{T-1} \mathbb{E} \|\bd_t\|^2 \leq   \mathbb{E}[  [f_s(\x_{0})] -f_s^* ]+ \frac{\eta}{2} T\frac{\epsilon}{c_{\epsilon}}.
\end{align}
Thus, we have
\begin{align}
  &
 \frac{1}{T} \sum_{t=0}^{T-1}  \mathbb{E}\|\bd_t\|^2 \leq   \frac{\mathbb{E}[  [f_s(\x_{0})] -f_s^* ]}{ [\frac{\eta}{8}- \frac{\eta^3 L^2 }{2}  -\frac{\eta}{2 c_{\gamma}} ] T}+ \frac{\eta}{2} \frac{\epsilon}{c_{\epsilon}}.
\end{align}

Let $\eta\leq \frac{1}{4L},c_{\gamma}\geq 8, c_{\epsilon}\geq \eta $, we have

Since $\frac{1}{T} \sum_{t=0}^{T-1}\mathbb{E} \|d_t \|^2$ is just common descent directions.
According to Definition. \ref{def:stationary} shown in the paper, the quantity to our interest is 
$\|\sum_{s \in [S]}\lambda_t^s \nabla f(\mathbf{x})\|^2$. 
\begin{align}
  &
 \frac{1}{T} \sum_{t=0}^{T-1}  \mathbb{E}\|\sum_{s\in [S]}\lambda_t^s\nabla f_s(\x_t)\|^2
\stackrel{(a)}{\le}  ( 2S L^2 \eta ^2 +2)  \frac{1}{T} \sum_{t=0}^{T-1}  \mathbb{E}\|\bd_t\|^2 
\end{align}
where $(a)$ follows from Eqs. \eqref{eqs21}.

Then, we can conclude that 
\begin{align}
  &
 \frac{1}{T} \sum_{t=0}^{T-1}  \mathbb{E}\|\sum_{s\in [S]}\lambda_t^s\nabla f_s(\x_t)\|^2 
\stackrel{(a)}{\le}
 ( 2S L^2 \eta ^2 +2)
(\frac{\mathbb{E}[  [f_s(\x_{0})] -f_s^* ]}{ [\frac{\eta}{8}- \frac{\eta^3 L^2 }{2}  -\frac{\eta}{2 c_{\gamma}} ] T}+ \frac{\eta}{2} \frac{\epsilon}{c_{\epsilon}}),
\end{align}
where $(a)$ follows from Eqs. \eqref{eqs21} and Eqs. \ref{eqs20}.

Thus, we have

\begin{align}
  &
\frac{1}{T}\sum_{t=0}^{T-1}\min_{\boldsymbol{\lambda} \in C} \mathbb{E} \| \boldsymbol{\lambda}^{\top} \nabla \F(\x_t) \|^2 \leq \frac{1}{T} \sum_{t=0}^{T-1}  \mathbb{E}\|\sum_{s\in [S]}\lambda_t^s\nabla f_s(\x_t)\|^2 =\mathcal{O}(\frac{1}{T}).
\end{align}

The total sample complexity can be calculated as:
	$\lceil \frac{T}{q} \rceil n + T\cdot |\mathcal{A}| \leq  \frac{T+q}{q}n + T\sqrt{n}= T\sqrt{n}+n+T\sqrt{n}=O(n+ \sqrt{n} \epsilon^{-1})$.
	Thus, the overall sample complexity is $\mathcal{O}(n+ \sqrt{n} \epsilon^{-1})$.
	This completes the proof.
 
\end{proof}

\subsection{Proof of Theorem.~\ref{thm:STIMULUSP_SC} [Part 1]}
\begin{proof}

\begin{align} \label{eqs62}
    &f_s(\x_{t+1}) \notag\\\stackrel{(a)}{\le}& f_s(\x_t) + \left< \nabla f_s(\x_t), -\eta \bd_t \right> + \frac{1}{2}L \| \eta \bd_t \|^2 \notag\\
    \stackrel{(b)}{\le}& f_s(\x_*) + \left< \nabla f_s(\x_t), \x_t - \x_* \right> - \frac{\mu}{2} \| \x_t - \x_* \|^2  + \left< \nabla f_s(\x_t), -\eta \bd_t \right> + \frac{1}{2}L \| \eta \bd_t \|^2\notag\\
   = &f_s(\x_*) + \left< \nabla f_s(\x_t), \x_t - \x_* -\eta \bd_t \right> - \frac{\mu}{2} \| \x_t - \x_* \|^2 + \frac{1}{2}L \| \eta \bd_t \|^2\notag\\
    =& f_s(\x_*) + \left< \nabla f_s(\x_t)-\bu_t^s, \x_t - \x_* -\eta \bd_t \right>+ \left< \bu_t^s, \x_t - \x_* -\eta \bd_t \right> \notag\\&- \frac{\mu}{2} \| \x_t - \x_* \|^2 + \frac{1}{2}L \| \eta \bd_t \|^2\notag\\
   \stackrel{(c)}{\le}& f_s(\x_*) + \frac{1}{2\delta}\| \nabla f_s(\x_t)-\bu_t^s\|^2+ \frac{\delta}{2}\| \x_t - \x_* -\eta \bd_t\|^2 + \left< \bu_t^s, \x_t - \x_* -\eta \bd_t \right> \notag\\&- \frac{\mu}{2} \| \x_t - \x_* \|^2 + \frac{1}{2}L \| \eta \bd_t \|^2\notag\\
    \stackrel{(d)}{\le}& f_s(\x_*) + \frac{1}{2\delta}\| \nabla f_s(\x_t)-\bu_t^s\|^2+ \delta\| \x_t - \x_* \|^2+\delta\|\eta \bd_t\|^2 \notag\\&+ \left< \bu_t^s, \x_t - \x_* -\eta \bd_t \right> - \frac{\mu}{2} \| \x_t - \x_* \|^2 + \frac{1}{2}L \| \eta \bd_t \|^2,
\end{align}
where $(a)$ follows from $L$-smoothness, $(b)$ follows from $\mu$-strongly convexity. $(c)$ follows from Young's inequality, and $(d)$ follows from triangle inequality.

Then, we have
\begin{align} \label{eqs70}
    & \sum_{s \in [S]} \lambda_t^{s} \left[ f_s(\x_{t+1}) - f_s(\x_*) \right]  \\
    \stackrel{(a)}{\le}& \frac{1}{2\delta}\| \nabla f_s(\x_t)-\bu_t^s\|^2+ \delta\| \x_t - \x_* \|^2+\delta\|\eta \bd_t\|^2\notag\\& +\left< \sum_{s \in [S]} \lambda_t^{s}\bu_t^s, \x_t - \x_* \right> - \frac{\mu}{2} \| \x_t - \x_* \|^2 + \left< \sum_{s \in [S]} \lambda_t^{s} \bu_t^s, -\eta \bd_t \right> + \frac{1}{2}L \| \eta \bd_t \|^2 \\
    =&\frac{1}{2\delta}\| \nabla f_s(\x_t)-\bu_t^s\|^2+ \delta\| \x_t - \x_* \|^2+\delta\|\eta \bd_t\|^2\notag\\& +\left< \sum_{s \in [S]} \lambda_t^{s}\bu_t^s, \x_t - \x_* -\eta \bd_t \right> - \frac{\mu}{2} \| \x_t - \x_* \|^2 + \frac{1}{2}L \| \eta \bd_t \|^2 \\
     \stackrel{(b)}{\le}&\frac{1}{2\delta}\| \nabla f_s(\x_t)-\bu_t^s\|^2+ \delta\| \x_t - \x_* \|^2+\delta\|\eta \bd_t\|^2\notag\\& +\left< \bd_t , \x_t - \x_* -\eta \bd_t \right> - \frac{\mu}{2} \| \x_t - \x_* \|^2 + \frac{1}{2}L \| \eta \bd_t \|^2 \\
     =& \left< \bd_t , \x_t - \x_* \right> - \eta \| \bd_t \|^2 - \frac{\mu}{2} \| \x_t - \x_* \|^2 + \frac{1}{2}L \eta^2 \| \bd_t \|^2 \notag\\&+ \frac{1}{2\delta}\| \nabla f_s(\x_t)-\bu_t^s\|^2+ \delta\| \x_t - \x_* \|^2+\delta\|\eta \bd_t\|^2\notag\\
     \stackrel{(c)}{=}&\frac{1}{2 \eta} \left( \| \x_t - \x_* \|^2 - \| \x_{t+1} - \x_* \|^2 \right) - \frac{1}{2} \eta \| \bd_t \|^2 - \frac{\mu}{2} \| \x_t - \x_* \|^2 + \frac{1}{2}L \eta^2 \|\bd_t \|^2 \notag\\
    &+ \frac{4}{\mu}\| \nabla f_s(\x_t)-\bu_t^s\|^2+ \frac{\mu}{8}\| \x_t - \x_* \|^2+\frac{\mu}{8}\|\eta \bd_t\|^2\\
    =&\frac{1}{2 \eta} \left( (1-\frac{3\mu\eta}{4})\| \x_t - \x_* \|^2 - \| \x_{t+1} - \x_* \|^2 \right) - (\frac{1}{2} \eta-\frac{\mu}{8}\eta^2 - \frac{1}{2}L \eta^2 ) \| \bd_t \|^2 \notag\\&
    + \frac{4}{\mu}\| \nabla f_s(\x_t)-\bu_t^s\|^2\\
     \stackrel{(d)}{\le}&\frac{1}{2 \eta} \left( (1-\frac{3\mu\eta}{4})\| \x_t - \x_* \|^2 - \| \x_{t+1} - \x_* \|^2 \right) - (\frac{1}{2} \eta-\frac{\mu}{8}\eta^2 - \frac{1}{2}L \eta^2 ) \| \bd_t \|^2 \notag\\&
    + \frac{4}{\mu}( \frac{L^2}{|\mathcal{A}|} \sum_{i=\left(n_t-1\right) q}^t \|\x_{i+1}-\x_{i}\|^2 +\|\nabla f_s(\x_{\left(n_t-1\right) q}) - \bu_{\left(n_t-1\right) q}^s \|^2)\\
     \stackrel{(f)}{\le}&\frac{1}{2 \eta} \left( (1-\frac{3\mu\eta}{4})\| \x_t - \x_* \|^2 - \| \x_{t+1} - \x_* \|^2 \right) - (\frac{1}{2} \eta-\frac{\mu}{8}\eta^2 - \frac{1}{2}L \eta^2 ) \| \bd_t \|^2 \notag\\&
    + \frac{4}{\mu}( \frac{L^2}{|\mathcal{A}|} \sum_{i=\left(n_t-1\right) q}^t \|\x_{i+1}-\x_{i}\|^2 ) +\frac{\mu}{4}\frac{I_{(\mathcal{N}_s <n)}}{\mathcal{N}_s} \sigma^2  .
\end{align}
where $(a)$ follows from Eqs.\eqref{eqs62}. (b) follows from the definition $\bd_t=\sum_{s \in [S]} \lambda_{t}^{s} \mathbf{u}_{t}^s$ as shown in Line 14 in Algorithm. \ref{alg}. $(c)$ is because $\|\x_t - \x_* \|^2 - \| \x_{t+1} - \x_* \|^2 = - \eta^2 \| \bd_t \|^2 + 2 \left< \eta \bd_t , \x_t - \x_* \right>$. $(d)$ is from Lemma. \ref{lem:bounded1} and we choose
$\delta = \frac{\mu}{8}$. $(e)$ is from Eqs. \eqref{eqs52}.

Next, telescoping the above inequality over $t$ from $\left(n_t-1\right) q$ to $t$ where $t \leq n_t q-1$ and noting that for $\left(n_t-1\right) q \leq j \leq n_t q-1, n_j=n_t$, we obtain

\begin{align} \label{eqs71}
    & \sum_{i=\left(n_t-1\right) q}^t \sum_{s \in [S]} \lambda_t^{s} \left[ f_s(\x_{i+1}) - f_s(\x_*) \right]  \notag \\
   \stackrel{(a)}{\le}&\frac{1}{2 \eta} \left( (1-\frac{3\mu\eta}{4}) \sum_{i=\left(n_t-1\right) q}^t\| \x_i - \x_* \|^2 -  \sum_{i=\left(n_t-1\right) q}^t\| \x_{i+1} - \x_* \|^2 \right) \notag\\& - (\frac{1}{2} \eta-\frac{\mu}{8}\eta^2 - \frac{1}{2}L \eta^2 )  \sum_{i=\left(n_t-1\right) q}^t\| \bd_i \|^2 
    + \frac{4}{\mu  }( \frac{L^2}{|\mathcal{A}|} \sum_{j=\left(n_t-1\right) q}^t  \sum_{i=\left(n_j-1\right) q}^j\|\x_{i+1}-\x_{i}\|^2 )\notag\\
    &+ \frac{\mu S}{4}\sum_{i=\left(n_t-1\right) q}^t\frac{I_{(\mathcal{N}_s <n)}}{\mathcal{N}_s} \sigma^2 \notag \\
    \stackrel{(b)}{\le}&\frac{1}{2 \eta} \left( (1-\frac{3\mu\eta}{4}) \sum_{i=\left(n_t-1\right) q}^t\| \x_i - \x_* \|^2 -  \sum_{i=\left(n_t-1\right) q}^t\| \x_{i+1} - \x_* \|^2 \right) \notag\\& - (\frac{1}{2} \eta-\frac{\mu}{8}\eta^2 - \frac{1}{2}L \eta^2 )  \sum_{i=\left(n_t-1\right) q}^t\| \bd_i \|^2 
    + \frac{4}{\mu}( \frac{L^2}{|\mathcal{A}|} \sum_{j=\left(n_t-1\right) q}^t  \sum_{i=\left(n_t-1\right) q}^t\|\x_{i+1}-\x_{i}\|^2 )\notag\\
    &+\frac{\mu}{4} \sum_{i=\left(n_t-1\right) q}^t\frac{I_{(\mathcal{N}_s <n)}}{\mathcal{N}_s} \sigma^2 \notag \\
   \stackrel{(c)}{\le}&\frac{1}{2 \eta} \left( (1-\frac{3\mu\eta}{4}) \sum_{i=\left(n_t-1\right) q}^t\| \x_i - \x_* \|^2 -  \sum_{i=\left(n_t-1\right) q}^t\| \x_{i+1} - \x_* \|^2 \right) \notag\\& - (\frac{1}{2} \eta-\frac{\mu}{8}\eta^2 - \frac{1}{2}L \eta^2-
    \frac{4}{\mu} \frac{L^2 q  \eta^2}{|\mathcal{A}|} ) \sum_{i=\left(n_t-1\right) q }^t\|\bd_{i}\|^2 )\notag\\
    &+ \frac{\mu}{4} \sum_{i=\left(n_t-1\right) q}^t(\frac{[\gamma_i]}{c_{\gamma}}+\frac{\epsilon}{c_{\epsilon}} ),
\end{align}
where $(a)$ follows from Eqs. \eqref{eqs70} and the fact that $\lambda_t^s \leq 1 \forall s\in [S]$. $(b)$ extends $j$ to $t$. $(c)$ is because $t-(n_t -1)q\geq q$.
We continue the proof by further driving

\begin{align}\label{eqs72}
    & \sum_{t=0}^{T} \sum_{s \in [S]} \lambda_t^{s}\left[ f_s(\x_{i+1}) - f_s(\x_*) \right] \notag\\ = & \sum_{i=0}^q\sum_{s \in [S]} \lambda_t^{s} \left[ f_s(\x_{i+1}) - f_s(\x_*) \right] +  \sum_{i=q}^{2q}\sum_{s \in [S]} \lambda_t^{s} \left[ f_s(\x_{i+1}) - f_s(\x_*)\right]+\notag\\&\cdot+ \sum_{i=(n_T-1)q}^{T}\sum_{s \in [S]} \lambda_t^{s} \left[ f_s(\x_{i+1}) - f_s(\x_*) \right] \notag \\
\stackrel{(a)}{\le} &\frac{1}{2 \eta} \left( (1-\frac{3\mu\eta}{4}) \sum_{i=0 }^{T} \| \x_i - \x_* \|^2 -  \sum_{t=0}^{T}\| \x_{i+1} - \x_* \|^2 \right) \notag\\& - (\frac{1}{2} \eta-\frac{\mu}{8}\eta^2 - \frac{1}{2}L \eta^2-
    \frac{4}{\mu} \frac{L^2 q  \eta^2}{|\mathcal{A}|} +\frac{\mu}{4 c_{\gamma}} )\sum_{t=0}^{T}\|\bd_{i}\|^2 + \frac{\mu}{4} T\frac{\epsilon}{c_{\epsilon}},
\end{align}
where $(a)$ follows from Eqs. \eqref{eqs71} and $\gamma_t= \frac{1}{q}  \sum_{i=(n_t-1)q}^{t} \|	\bd_t \|^2$.

Next, we have
\begin{align}
    & \sum_{t=0}^{T}   \sum_{s \in [S]} \lambda_t^{s} \left[ f_s(\x_i) - f_s(\x_*) \right] \notag\\ = & \sum_{t=0}^{T}   \sum_{s \in [S]} \lambda_t^{s} \left[ f_s(\x_{i+1}) - f_s(\x_*) -  f_s(\x_{i+1}) + f_s(\x_i)  \right]  \notag\\
   = &\sum_{t=0}^{T}  \sum_{s \in [S]} \lambda_t^{s} \left[ f_s(\x_{i+1}) - f_s(\x_*) \right]  +\sum_{t=0}^{T} \sum_{s \in [S]} \lambda_t^{s}  | f_s(\x_{i+1}) - f_s(\x_i) |\notag\\
 \stackrel{(a)}{\le} &\frac{1}{2 \eta} \left( (1-\frac{3\mu\eta}{4}) \sum_{i=0 }^{T} \| \x_i - \x_* \|^2 -  \sum_{t=0}^{T}\| \x_{i+1} - \x_* \|^2 \right) \notag\\& - (\frac{1}{2} \eta-\frac{\mu}{8}\eta^2 - \frac{1}{2}L \eta^2-
    \frac{4}{\mu} \frac{L^2 q  \eta^2}{|\mathcal{A}|}  -[\frac{\eta}{4}- \frac{\eta^3 q}{2}  \frac{L^2}{|\mathcal{A}|}] -\frac{\mu}{4 c_{\gamma}} )\sum_{t=0}^{T}  \|\bd_i\|^2+\frac{\mu}{4} T\frac{\epsilon}{c_{\epsilon}},
\end{align}
where $(a)$ follows from Eqs. \eqref{eqs72}.

Let $|\mathcal{A}|=q= \lceil\sqrt{n}\rceil $ and $\eta \leq \min\{\frac{1}{2\mu},\frac{1}{8L},\frac{\mu}{64L^2} \},c_{\gamma}\geq \frac{8\mu}{\eta}$, we have $(\frac{1}{2} \eta-\frac{\mu}{8}\eta^2 - \frac{1}{2}L \eta^2-
    \frac{4}{\mu} \frac{L^2 q  \eta^2}{|\mathcal{A}|}  -[\frac{\eta}{4}- \frac{\eta^3 q}{2}  \frac{L^2}{|\mathcal{A}|}]-\frac{\mu}{4 c_{\gamma}} )> \frac{\eta}{32}>0$
    
    Thus, we have
\begin{align}
    & \sum_{t=0}^{T}   \sum_{s \in [S]} \lambda_t^{s} \left[ f_s(\x_i) - f_s(\x_*) \right]\leq \frac{1}{2 \eta} \left( (1-\frac{3\mu\eta}{4}) \sum_{i=0 }^{T} \| \x_i - \x_* \|^2 -  \sum_{t=0}^{T}\| \x_{i+1} - \x_* \|^2\right).
\end{align}

Then, we have 

\begin{align}
    & \mathbb{E}  [\sum_{s \in [S]} \lambda_t^{s} \left[ f_s(\x_t) - f_s(\x_*) \right] ]\leq \frac{1}{2 \eta} \left( (1-\frac{3\mu\eta}{4}) \mathbb{E} \| \x_t - \x_* \|^2 -  \mathbb{E}\| \x_{t+1} - \x_* \|^2\right)+\frac{\mu}{4} T\frac{\epsilon}{c_{\epsilon}}.
\end{align}
Averaging using weight $w_t = ( 1 - \frac{3\mu \eta }{4})^{1-t}$ and using such weight to pick output $\x$.
By using Lemma 1 in \cite{Karimireddy2020SCAFFOLD} with $\eta \geq \frac{1}{uR}, c_{\epsilon}>\frac{\mu}{2}$ and Assumption. \ref{assump: add}, we have

\begin{align}
   \mathbb{E}\|\x_t-\x^*\|^2 &\leq \| \x_0 - \x_* \|^2 \mu \exp( - \frac{3 \eta \mu T}{4}) +\frac{\mu}{4} T\frac{\epsilon}{c_{\epsilon}}\\
    &= \mathcal{O}(\mu \exp( - \mu T)).
\end{align}

Then we have the convergence rate $   \mathbb{E}\|\x_t-\x^*\|^2 = \mathcal{O}(\mu \exp( - \mu T))$.

The total sample complexity can be calculated as:
	$\lceil \frac{T}{q} \rceil n + T\cdot |\mathcal{A}| \leq  \frac{T+q}{q}n + T\sqrt{n}= T\sqrt{n}+n+T\sqrt{n}=O(n+ \sqrt{n} \ln ({\mu/\epsilon})$.
	Thus, the overall sample complexity is $\mathcal{O}(n+ \sqrt{n} \ln ({\mu/\epsilon})$.
	This completes the proof.
\end{proof}

\section{Proof of convergence of \algmp} \label{appdx:VRMP_nonconvex}

\textbf{Proof of Theorem.~\ref{thm:STIMULUSmp_nonC} [Part 2]}

\begin{proof}
From Lemma. \ref{lem:update}, we have

\begin{align}\label{eqs78}
    & [f_s(\x_{t+1})]  \notag\\ \stackrel{(a)}{\le}&  [f_s(\x_t)] +    \frac{\eta}{2}  \sum_{i=0}^t \alpha^{(t-i)} \| \nabla f_s(\x_i) - \bu_i^s \|^2 - \frac{1}{2} \eta \sum_{i=0}^t \alpha^{(t-i)} \| \bd_i \|^2+ \frac{1}{2}L \| \x_{t+1}-\x_{t} \|^2\notag\\
\stackrel{(b)}{\le} &  [f_s(\x_t)]  - \frac{1}{2} \eta \sum_{i=0}^t \alpha^{(t-i)} \| \bd_i \|^2+ \frac{1}{2}L \| \x_{t+1}-\x_{t} \|^2\notag\\
   &+ \frac{\eta}{2}  \sum_{j=0}^t  \alpha^{(t-j)} [\frac{L^2}{|\mathcal{A}|} \sum_{i=\left(n_t-1\right) q}^j \|\x_{i+1}-\x_{i}\|^2 +\|\nabla f_s(\x_{\left(n_t-1\right) q}) - \bu_{\left(n_t-1\right) q}^s \|^2] \notag\\
    \stackrel{(c)}{\le} & [f_s(\x_t)] - \frac{1}{2} \eta \sum_{i=0}^t \alpha^{(t-i)} \| \bd_i \|^2+ \frac{1}{2}L \| \x_{t+1}-\x_{t} \|^2+ \frac{\eta}{2}  \sum_{j=0}^t  \alpha^{(t-j)}  [\frac{L^2}{|\mathcal{A}|} \sum_{i=\left(n_t-1\right) q}^j \|\x_{i+1}-\x_{i}\|^2] \notag\\&+ \frac{\eta}{2} \sum_{i=0}^t\alpha^{(t-i)} (\frac{\gamma_{i}}{c_{\gamma}}+\frac{\epsilon}{c_{\epsilon}} ),
\end{align}
where $(a)$ follows from Lemma \ref{lem:update}. $(b)$ follows from Lemma. \ref{lem:bounded1}. $(c)$ follows from Eqs. \eqref{eqs52}.

Next, telescoping the above inequality over $t$ from $\left(n_t-1\right) q$ to $t$ where $t \leq n_t q-1$ and noting that for $\left(n_t-1\right) q \leq j \leq n_t q-1, n_j=n_t$ and let $\eta\leq \frac{1}{4L}$, we obtain

\begin{align}
  &   [f_s(\x_{t+1})]  \notag\\   \stackrel{(a)}{\le} &   [f_s(\x_{\left(n_t-1\right) q})] -\frac{\eta}{2} \sum_{j=\left(n_t-1\right) q}^t \sum_{i=0}^j \alpha^{(j-i)} \| \bd_i \|^2+ \frac{1}{2}L   \sum_{i=\left(n_t-1\right) q}^t \| \x_{i+1}-\x_i \|^2  \notag\\& + \frac{\eta}{2} \sum_{j=\left(n_t-1\right) q}^t  \sum_{i=0}^j  \alpha^{(j-i)}  [\frac{L^2}{|\mathcal{A}|} \sum_{r=\left(n_t-1\right) q}^i \|\x_{r+1}-\x_{r}\|^2]  \notag\\&+ \frac{\eta}{2} \sum_{j=\left(n_t-1\right) q}^t\sum_{i=0}^j \alpha^{(j-i)}(\frac{[\gamma_{i}]}{c_{\gamma}}+\frac{\epsilon}{c_{\epsilon}} ) \notag\\
\stackrel{(b)}{\le} &   [f_s(\x_{\left(n_t-1\right) q})] -\frac{\eta}{2} \sum_{j=\left(n_t-1\right) q}^t \sum_{i=0}^j \alpha^{(j-i)}  \| \bd_i \|^2+ \frac{1}{2}L   \sum_{i=\left(n_t-1\right) q}^t \| \x_{i+1}-\x_i \|^2 \notag\\& + \frac{\eta}{2} \sum_{j=\left(n_t-1\right) q}^t  \sum_{i=0}^j  \alpha^{(j-i)}  [\frac{L^2}{|\mathcal{A}|} q \|\x_{j+1}-\x_{j}\|^2]  \notag\\&+ \frac{\eta}{2} \sum_{j=\left(n_t-1\right) q}^t\sum_{i=0}^j \alpha^{(j-i)}(\frac{[\gamma_{i}]}{c_{\gamma}}+\frac{\epsilon}{c_{\epsilon}} ) \notag\\
\stackrel{(c)}{=}&   [f_s(\x_{\left(n_t-1\right) q})] -\frac{\eta}{2} \sum_{j=\left(n_t-1\right) q}^t \sum_{i=0}^j \alpha^{(j-i)} \| \bd_i \|^2+ \frac{1}{2}L   \sum_{i=\left(n_t-1\right) q}^t \| \x_{i+1}-\x_i \|^2 \notag\\& + \frac{\eta}{2} \sum_{j=\left(n_t-1\right) q}^t  \sum_{i=0}^j  \alpha^{(j-i)}  [ L^2 \|\x_{j+1}-\x_{j}\|^2] \notag\\&+ \frac{\eta}{2} \sum_{j=\left(n_t-1\right) q}^t\sum_{i=0}^j \alpha^{(j-i)}(\frac{[\gamma_{i}]}{c_{\gamma}}+\frac{\epsilon}{c_{\epsilon}} ) \notag\\
 \stackrel{(d)}{\le} &   [f_s(\x_{\left(n_t-1\right) q})] -\frac{\eta}{2} \sum_{j=\left(n_t-1\right) q}^t \sum_{i=0}^j \alpha^{(j-i)} \| \bd_i \|^2+ \frac{1}{2}L   \sum_{i=\left(n_t-1\right) q}^t \| \x_{i+1}-\x_i \|^2 \notag\\& + \frac{\eta}{2} \sum_{j=\left(n_t-1\right) q}^t  \sum_{i=0}^j  \alpha^{(j-i)}  [ L^2 \|\eta \sum_{r=0}^{j} \alpha^{(j-r)} \bd_r\|^2]   \notag\\&+ \frac{\eta}{2} \sum_{j=\left(n_t-1\right) q}^t\sum_{i=0}^j \alpha^{(j-i)}(\frac{[\gamma_{i}]}{c_{\gamma}}+\frac{\epsilon}{c_{\epsilon}} ) \notag\\
       = &   [f_s(\x_{\left(n_t-1\right) q})] -\frac{\eta}{2} \sum_{j=\left(n_t-1\right) q}^t \sum_{i=0}^j \alpha^{(j-i)} \| \bd_i \|^2+ \frac{1}{2}L   \sum_{i=\left(n_t-1\right) q}^t \| \x_{i+1}-\x_i \|^2 \notag\\& + \frac{\eta}{2} \sum_{j=\left(n_t-1\right) q}^t  \sum_{i=0}^j  \alpha^{3(j-i)}  [ L^2\eta ^2 \|\bd_i\|^2]  \notag\\&+  \frac{\eta}{2} \sum_{j=\left(n_t-1\right) q}^t\sum_{i=0}^j \alpha^{(j-i)}(\frac{[\gamma_{i}]}{c_{\gamma}}+\frac{\epsilon}{c_{\epsilon}} ) \notag\\
            \stackrel{(e)}{\le}&   [f_s(\x_{\left(n_t-1\right) q})] -\frac{\eta}{2} \sum_{j=\left(n_t-1\right) q}^t \sum_{i=0}^j \alpha^{(j-i)} \| \bd_i \|^2+ \frac{1}{2}L   \sum_{i=\left(n_t-1\right) q}^t \| \x_{i+1}-\x_i \|^2 \notag\\& + \frac{\eta}{2} \sum_{j=\left(n_t-1\right) q}^t  \sum_{i=0}^j  \alpha^{(j-i)}  [ L^2\eta ^2 \|\bd_i\|^2]  \notag\\&+  \frac{\eta}{2} \sum_{j=\left(n_t-1\right) q}^t\sum_{i=0}^j \alpha^{(j-i)}(\frac{[\gamma_{i}]}{c_{\gamma}}+\frac{\epsilon}{c_{\epsilon}} ) \notag\\
            \stackrel{(f)}{\le}&   [f_s(\x_{\left(n_t-1\right) q})] -\frac{\eta}{4} \sum_{j=\left(n_t-1\right) q}^t \sum_{i=0}^j\alpha^{(j-i)}\| \bd_i \|^2+ \frac{1}{2}L   \sum_{i=\left(n_t-1\right) q}^t \| \x_{i+1}-\x_i \|^2  \notag\\&+  \frac{\eta}{2} \sum_{j=\left(n_t-1\right) q}^t\sum_{i=0}^j \alpha^{(j-i)}(\frac{[\gamma_{i}]}{c_{\gamma}}+\frac{\epsilon}{c_{\epsilon}} ) \notag\\
                 \stackrel{(g)}{\le}&   [f_s(\x_{\left(n_t-1\right) q})] -\frac{\eta}{4} \sum_{j=\left(n_t-1\right) q}^t \sum_{i=0}^j \alpha^{(j-i)} \| \bd_i \|^2+ \frac{1}{2}L   \sum_{j=\left(n_t-1\right) q}^t \| \eta \sum_{i=0}^{j} \alpha^{(j-i)} \bd_j \|^2 \notag\\&+  \frac{\eta}{2} \sum_{j=\left(n_t-1\right) q}^t\sum_{i=0}^j \alpha^{(j-i)}(\frac{[\gamma_{i}]}{c_{\gamma}}+\frac{\epsilon}{c_{\epsilon}} )  \notag\\
               \stackrel{(h)}{\le} &   [f_s(\x_{\left(n_t-1\right) q})] -\frac{\eta}{8} \sum_{j=\left(n_t-1\right) q}^t \sum_{i=0}^j\alpha^{(j-i)} \| \bd_i \|^2 +  \frac{\eta}{2} \sum_{j=\left(n_t-1\right) q}^t\sum_{i=0}^j \alpha^{(j-i)}(\frac{[\gamma_{i}]}{c_{\gamma}}+\frac{\epsilon}{c_{\epsilon}} ) ,
\end{align}
where $(a)$ follows from Eqs. \eqref{eqs78}. $(b)$ follows from $i\leq n_t q$. $(c)$ follows from $q=|\mathcal{A}|=\lceil\sqrt{n}\rceil$. $(d)$ and $(g)$ follow from the update rule of $\x_t$ shown in Line 19 in Algorithm. \ref{alg}.
$(e)$ follows from $0<\alpha<1$, then we have $\alpha^2{(j-i)}<\alpha^{(j-i)} $. $(f)$ and $(h)$ follow from $\eta\leq \frac{1}{4L}$
Recall that $ \gamma_t= \frac{1}{q}  \sum_{i=(n_t-1)q}^{t} \|	\bd_t \|^2 $. Then, we have
\begin{align}
  &   \mathbb{E}[f_s(\x_{T})] -   [f_s(\x_{0})]  \notag\\&
  =\mathbb{E}(   [f_s(\x_{q})] -   [f_s(\x_{0})] ) +  (   [f_s(\x_{2q})] -   [f_s(\x_{q})] ) +\cdot +  (   [f_s(\x_{T})] -   [f_s(\x_{(n_T-1)q})] ) 
  \notag\\& \stackrel{(a)}{\le} -[\frac{\eta}{8}] \sum_{t=0}^{T-1}  \sum_{i=0}^j\alpha^{(j-i)} \mathbb{E} \|\bd_t\|^2 +\frac{\eta}{2 c_{\gamma}}   \sum_{t=0}^{T-1} \sum_{i=0}^j\alpha^{(j-i)} \mathbb{E} \|\bd_t\|^2+ \frac{\eta}{2} T q \frac{\epsilon}{c_{\epsilon}} 
    \notag\\& 
 \stackrel{(b)}{\le} -[\frac{\eta}{16}] \sum_{t=0}^{T-1}  \sum_{i=0}^j\alpha^{(j-i)} \mathbb{E} \|\bd_t\|^2 + \frac{\eta}{2} T q \frac{\epsilon}{c_{\epsilon}}  
    \notag\\& \stackrel{(c)}{\le}-[\frac{\eta}{16}] \sum_{t=0}^{T-1}   \mathbb{E} \|\bd_t\|^2 + \frac{\eta}{2} T q \frac{\epsilon}{c_{\epsilon}} ,
\end{align}
where $(a)$ follows from $c_{\gamma}\geq 8$, $(c)$ follows from $0<\alpha<1$.

Note that $ [ f_s\left(\x_{T+1}\right) ]\geq f_s^* \triangleq \inf _{\x \in \mathbb{R}^d} f_s(\x)$. Hence, we have
\begin{align}
  &
 [\frac{\eta}{16}] \sum_{t=0}^{T-1}  \|\bd_t\|^2\leq   [  [f_s(\x_{0})] -  [f_s(\x_{T})]  ]\leq   [  [f_s(\x_{0})] - f_s^*  ].
\end{align}

Based on the parameter setting $q^2 =|\mathcal{A}|=\sqrt{n}$, we have 
\begin{align}
  &
 [\frac{\eta}{16}  ] \sum_{t=0}^{T-1}  \|\bd_t\|^2 \leq   [  [f_s(\x_{0})] -f_s^* ].
\end{align}
Thus, we have
\begin{align}
  &
 \frac{1}{T} \sum_{t=0}^{T-1}  \|\bd_t\|^2 \leq   \frac{[  [f_s(\x_{0})] -f_s^* ]}{ [\frac{\eta}{16} ] T}.
\end{align}

Since $\frac{1}{T} \sum_{t=0}^{T-1}\mathbb{E} \|d_t \|^2$ is just common descent directions.
According to Definition. \ref{def:stationary} shown in the paper, the quantity to our interest is 
$\|\sum_{s \in [S]}\lambda_t^s \nabla f(\mathbf{x})\|^2$. 
\begin{align}
  &
 \frac{1}{T} \sum_{t=0}^{T-1}  \mathbb{E}\|\sum_{s\in [S]}\lambda_t^s\nabla f_s(\x_t)\|^2
\stackrel{(a)}{\le}  ( 2S L^2 \eta ^2 +2)  \frac{1}{T} \sum_{t=0}^{T-1}  \mathbb{E}\|\bd_t\|^2 
\end{align}
where $(a)$ follows from Eqs. \eqref{eqs21}.

Then, we can conclude that 

\begin{align}
  &
\frac{1}{T}\sum_{t=0}^{T-1}\min_{\boldsymbol{\lambda} \in C} \mathbb{E} \| \boldsymbol{\lambda}^{\top} \nabla \F(\x_t) \|^2 \leq \frac{1}{T} \sum_{t=0}^{T-1}  \mathbb{E}\|\sum_{s\in [S]}\lambda_t^s\nabla f_s(\x_t)\|^2 =\mathcal{O}(\frac{1}{T}).
\end{align}

The total sample complexity can be calculated as:
	$\lceil \frac{T}{q} \rceil n + T\cdot |\mathcal{A}| \leq  \frac{T+q}{q}n + T\sqrt{n}= T\sqrt{n}+n+T\sqrt{n}=O(n+ \sqrt{n} \epsilon^{-1})$.
	Thus, the overall sample complexity is $\mathcal{O}(n+ \sqrt{n} \epsilon^{-1})$.
	This completes the proof.
 
\end{proof}

\subsection{Proof of Theorem.~\ref{thm:STIMULUSP_SC} [Part 2]}
\begin{proof}

\begin{align} \label{eqs86}
    &f_s(\x_{t+1}) \notag\\  \stackrel{(a)}{\le}& f_s(\x_t) + \left< \nabla f_s(\x_t),-\eta \sum_{t=0}^{T} \alpha^{(t-i)} \bd_i \right> + \frac{1}{2}L \| \eta \sum_{t=0}^{T} \alpha^{(t-i)} \bd_i \|^2 \notag\\
      \stackrel{(b)}{\le}& f_s(\x_*) + \left< \nabla f_s(\x_t), \x_t - \x_* \right> - \frac{\mu}{2} \| \x_t - \x_* \|^2  + \left< \nabla f_s(\x_t), -\eta \sum_{t=0}^{T} \alpha^{(t-i)} \bd_i \right>\notag\\& + \frac{1}{2}L \| \eta \sum_{t=0}^{T} \alpha^{(t-i)} \bd_i \|^2\notag\\
    = &f_s(\x_*) + \left< \nabla f_s(\x_t), \x_t - \x_* -\eta \sum_{t=0}^{T} \alpha^{(t-i)} \bd_i \right> - \frac{\mu}{2} \| \x_t - \x_* \|^2 + \frac{1}{2}L \| \eta \sum_{t=0}^{T} \alpha^{(t-i)} \bd_i \|^2\notag\\
    =& f_s(\x_*) + \left< \nabla f_s(\x_t)-\bu_t^s, \x_t - \x_* -\eta \sum_{t=0}^{T} \alpha^{(t-i)} \bd_i \right>+ \left< \bu_t^s, \x_t - \x_* -\eta \sum_{t=0}^{T} \alpha^{(t-i)} \bd_i \right> \notag\\&- \frac{\mu}{2} \| \x_t - \x_* \|^2 + \frac{1}{2}L \| \eta \sum_{t=0}^{T} \alpha^{(t-i)} \bd_i \|^2\notag\\
    \stackrel{(c)}{\le}& f_s(\x_*) + \frac{1}{2\delta}\| \nabla f_s(\x_t)-\bu_t^s\|^2+ \frac{\delta}{2}\| \x_t - \x_* -\eta \sum_{t=0}^{T} \alpha^{(t-i)} \bd_i\|^2 \notag\\&+ \left< \bu_t^s, \x_t - \x_* -\eta \sum_{t=0}^{T} \alpha^{(t-i)} \bd_i \right> \notag\\&- \frac{\mu}{2} \| \x_t - \x_* \|^2 + \frac{1}{2}L \| \eta \sum_{t=0}^{T} \alpha^{(t-i)} \bd_i \|^2\notag\\
     \stackrel{(d)}{\le} & f_s(\x_*) + \frac{1}{2\delta}\| \nabla f_s(\x_t)-\bu_t^s\|^2+ \delta\| \x_t - \x_* \|^2+\delta\|\eta \sum_{t=0}^{T} \alpha^{(t-i)} \bd_i\|^2 \notag\\&+ \left< \bu_t^s, \x_t - \x_* -\eta \sum_{t=0}^{T} \alpha^{(t-i)} \bd_i \right> - \frac{\mu}{2} \| \x_t - \x_* \|^2 + \frac{1}{2}L \| \eta \sum_{t=0}^{T} \alpha^{(t-i)} \bd_i \|^2,
\end{align}
where $(a)$ follows from $L$-smoothness assumption, $(b)$ follows from $\mu$-strongly convex. $(c)$ and $(d)$ follow from the triangle inequality.

\begin{align} \label{eqs88}
    & \sum_{s \in [S]} \lambda_t^{s} \left[ f_s(\x_{t+1}) - f_s(\x_*) \right]  \\
       \stackrel{(a)}{\le} & \frac{1}{2\delta} \sum_{s \in [S]} \lambda_t^{s} \| \nabla f_s(\x_t)-\bu_t^s\|^2+ \delta\| \x_t - \x_* \|^2+\delta\|\eta \sum_{t=0}^{T} \alpha^{(t-i)} \bd_i\|^2\notag\\& +\left< \sum_{s \in [S]} \lambda_t^{s}\bu_t^s, \x_t - \x_* \right> - \frac{\mu}{2} \| \x_t - \x_* \|^2 + \left< \sum_{s \in [S]} \lambda_t^{s} \bu_t^s, -\eta \sum_{t=0}^{T} \alpha^{(t-i)} \bd_i \right> \notag\\&+ \frac{1}{2}L \| \eta \sum_{t=0}^{T} \alpha^{(t-i)} \bd_i \|^2  \notag\\
    =&\frac{1}{2\delta} \sum_{s \in [S]} \lambda_t^{s} \| \nabla f_s(\x_t)-\bu_t^s\|^2+ \delta\| \x_t - \x_* \|^2+\delta\|\eta \sum_{t=0}^{T} \alpha^{(t-i)} \bd_i\|^2\notag\\& +\left< \sum_{s \in [S]} \lambda_t^{s}\bu_t^s, \x_t - \x_* -\eta \sum_{t=0}^{T} \alpha^{(t-i)} \bd_i \right> - \frac{\mu}{2} \| \x_t - \x_* \|^2 \notag\\&+ \frac{1}{2}L \| \eta \sum_{t=0}^{T} \alpha^{(t-i)} \bd_i \|^2  \notag\\
    =&\frac{1}{2\delta} \sum_{s \in [S]} \lambda_t^{s} \| \nabla f_s(\x_t)-\bu_t^s\|^2+ \delta\| \x_t - \x_* \|^2+\delta\|\eta \sum_{t=0}^{T} \alpha^{(t-i)} \bd_i\|^2\notag\\& +\left< \bd_t , \x_t - \x_* -\eta \sum_{t=0}^{T} \alpha^{(t-i)} \bd_i \right> - \frac{\mu}{2} \| \x_t - \x_* \|^2 + \frac{1}{2}L \| \eta \sum_{t=0}^{T} \alpha^{(t-i)} \bd_i \|^2  \notag\\
  \stackrel{(b)}{\le} &\frac{1}{2 \eta} \left( \| \x_t - \x_* \|^2 - \| \x_{t+1} - \x_* \|^2 \right) - \frac{1}{2} \eta \|  \sum_{t=0}^{T} \alpha^{(t-i)} \bd_i \|^2 - \frac{\mu}{2} \| \x_t - \x_* \|^2 \notag\\&+ \frac{1}{2}L \|\eta \sum_{t=0}^{T} \alpha^{(t-i)} \bd_i\|^2 \notag\\
    &+ \frac{4}{\mu}\sum_{s \in [S]} \lambda_t^{s}\| \nabla f_s(\x_t)-\bu_t^s\|^2+ \frac{\mu}{8}\| \x_t - \x_* \|^2+\frac{\mu}{8}\|\eta \sum_{t=0}^{T} \alpha^{(t-i)} \bd_i\|^2 \notag\\
   = &\frac{1}{2 \eta} \left( (1-\frac{3\mu\eta}{4})\| \x_t - \x_* \|^2 - \| \x_{t+1} - \x_* \|^2 \right) - (\frac{1}{2} \eta-\frac{\mu}{8}\eta^2 - \frac{1}{2}L \eta^2 ) \| \sum_{t=0}^{T} \alpha^{(t-i)} \bd_i \|^2 \notag\\&
    + \frac{4}{\mu}\sum_{s \in [S]} \lambda_t^{s}\| \nabla f_s(\x_t)-\bu_t^s\|^2 \notag\\
    \stackrel{(c)}{\leq}&\frac{1}{2 \eta} \left( (1-\frac{3\mu\eta}{4})\| \x_t - \x_* \|^2 - \| \x_{t+1} - \x_* \|^2 \right) - (\frac{1}{2} \eta-\frac{\mu}{8}\eta^2 - \frac{1}{2}L \eta^2 ) \| \sum_{t=0}^{T} \alpha^{(t-i)} \bd_i \|^2 \notag\\&
    + \frac{4}{\mu}( \frac{L^2}{|\mathcal{A}|} \sum_{i=\left(n_t-1\right) q}^t \|\x_{i+1}-\x_{i}\|^2 +\sum_{s \in [S]} \lambda_t^{s}\|\nabla f_s(\x_{\left(n_t-1\right) q}) - \bu_{\left(n_t-1\right) q}^s \|^2) \notag\\
      \stackrel{(d)}{\leq} &\frac{1}{2 \eta} \left( (1-\frac{3\mu\eta}{4})\| \x_t - \x_* \|^2 - \| \x_{t+1} - \x_* \|^2 \right) - (\frac{1}{2} \eta-\frac{\mu}{8}\eta^2 - \frac{1}{2}L \eta^2 ) \| \sum_{t=0}^{T} \alpha^{(t-i)} \bd_i \|^2 \notag\\&
    + \frac{4}{\mu}( \frac{L^2}{|\mathcal{A}|} \sum_{i=\left(n_t-1\right) q}^t \|\x_{i+1}-\x_{i}\|^2 )+\frac{\mu S }{4}\frac{I_{(\mathcal{N}_s <n)}}{\mathcal{N}_s} \sigma^2 .
\end{align}
where $(a)$ follows from Eqs. \eqref{eqs86}, (b) follows from $\|\x_t - \x_* \|^2 - \| \x_{t+1} - \x_* \|^2 = - \eta^2 \| \bd_t \|^2 + 2 \left< \eta \bd_t , \x_t - \x_* \right>$ and we choose
$\delta = \frac{\mu}{8}$. $(c)$ is from Lemma. \ref{lem:bounded1}. $(d)$ is from Eqs. \eqref{eqs52}. $(d)$ follows from $0<\lambda_t^s<1, \forall s\in[S]$

Next, telescoping the above inequality over $t$ from $\left(n_t-1\right) q$ to $t$ where $t \leq n_t q-1$ and noting that for $\left(n_t-1\right) q \leq j \leq n_t q-1, n_j=n_t$, we obtain

\begin{align}
    & \sum_{i=\left(n_t-1\right) q}^t \sum_{s \in [S]} \lambda_t^{s} \left[ f_s(\x_{i+1}) - f_s(\x_*) \right]  \notag \\
 \stackrel{(a)}{\leq}&\frac{1}{2 \eta} \left( (1-\frac{3\mu\eta}{4}) \sum_{i=\left(n_t-1\right) q}^t\| \x_i - \x_* \|^2 -  \sum_{i=\left(n_t-1\right) q}^t\| \x_{i+1} - \x_* \|^2 \right) \notag\\& - (\frac{1}{2} \eta-\frac{\mu}{8}\eta^2 - \frac{1}{2}L \eta^2 )  \sum_{i=\left(n_t-1\right) q}^t\| \sum_{i=0 }^t \alpha^{(t-i)} \bd_i \|^2 
   \notag\\& + \frac{4}{\mu}( \frac{L^2}{|\mathcal{A}|} \sum_{j=\left(n_t-1\right) q}^t  \sum_{i=\left(n_j-1\right) q}^j\|\x_{i+1}-\x_{i}\|^2 ) +\frac{\mu}{4 c_{\gamma}} \sum_{i=\left(n_t-1\right) q }^t\|\alpha^{(t-i)}\bd_{i}\|^2 \notag \\&+\frac{\mu}{4 c_{\gamma}} \sum_{t=\left(n_t-1\right) q }^t\|\alpha^{(t-i)}\bd_{i}\|^2 + +\frac{\mu}{4} \sum_{i=\left(n_t-1\right) q}^t\frac{\epsilon}{c_{\epsilon}}\notag \\
 \stackrel{(b)}{\leq} &\frac{1}{2 \eta} \left( (1-\frac{3\mu\eta}{4}) \sum_{i=\left(n_t-1\right) q}^t\| \x_i - \x_* \|^2 -  \sum_{i=\left(n_t-1\right) q}^t\| \x_{i+1} - \x_* \|^2 \right) \notag\\& - (\frac{1}{2} \eta-\frac{\mu}{8}\eta^2 - \frac{1}{2}L \eta^2 )  \sum_{i=\left(n_t-1\right) q}^t\| \sum_{t=0}^{T} \alpha^{(t-i)} \bd_i \|^2 
   \notag\\& + \frac{4}{\mu}( \frac{L^2}{|\mathcal{A}|} \sum_{j=\left(n_t-1\right) q}^t  \sum_{i=\left(n_t-1\right) q}^t\|\x_{i+1}-\x_{i}\|^2 )\notag\\&+\frac{\mu}{4 c_{\gamma}} \sum_{t=\left(n_t-1\right) q }^t\|\alpha^{(t-i)}\bd_{i}\|^2 + +\frac{\mu}{4} \sum_{i=\left(n_t-1\right) q}^t\frac{\epsilon}{c_{\epsilon}}\notag \\
 \stackrel{(c)}{\leq}&\frac{1}{2 \eta} \left( (1-\frac{3\mu\eta}{4}) \sum_{i=\left(n_t-1\right) q}^t\| \x_i - \x_* \|^2 -  \sum_{i=\left(n_t-1\right) q}^t\| \x_{i+1} - \x_* \|^2 \right) + \frac{\mu}{4} \sum_{i=\left(n_t-1\right) q}^t\frac{\epsilon}{c_{\epsilon}}\notag\\& - (\frac{1}{2} \eta-\frac{\mu}{8}\eta^2 - \frac{1}{2}L \eta^2-
    \frac{4}{\mu} \frac{L^2 q  \eta^2}{|\mathcal{A}|} ) \sum_{i=\left(n_t-1\right) q }^t\|\sum_{t=0}^{T} \alpha^{(t-i)} \bd_i\|^2 )\notag\\&+\frac{\mu}{4 c_{\gamma}} \sum_{t=\left(n_t-1\right) q }^t\|\alpha^{(t-i)}\bd_{i}\|^2+ \frac{\mu}{4} \sum_{i=\left(n_t-1\right) q}^t\frac{\epsilon}{c_{\epsilon}},
\end{align}
where $(a)$ follows from Eqs. \eqref{eqs88}, $(b)$ extends $j$ to $t$. $(c)$ follows from $t\leq n_t q -1$.

We continue the proof by further driving
\begin{align} \label{eqs90}
    & \sum_{t=0}^{T} \sum_{s \in [S]} \lambda_t^{s}\left[ f_s(\x_{i+1}) - f_s(\x_*) \right] \notag\\ = & \sum_{i=0}^q\sum_{s \in [S]} \lambda_t^{s} \left[ f_s(\x_{i+1}) - f_s(\x_*) \right] +  \sum_{i=q}^{2q}\sum_{s \in [S]} \lambda_t^{s} \left[ f_s(\x_{i+1}) - f_s(\x_*)\right]+\cdot+ \sum_{i=(n_T-1)q}^{T}\sum_{s \in [S]} \lambda_t^{s} \left[ f_s(\x_{i+1}) - f_s(\x_*) \right] \notag \\
  \leq &\frac{1}{2 \eta} \left( (1-\frac{3\mu\eta}{4}) \sum_{i=0 }^{T} \| \x_i - \x_* \|^2 -  \sum_{t=0}^{T}\| \x_{i+1} - \x_* \|^2 \right) - (\frac{1}{2} \eta-\frac{\mu}{8}\eta^2 - \frac{1}{2}L \eta^2-
    \frac{4}{\mu} \frac{L^2 q  \eta^2}{|\mathcal{A}|} ) \sum_{t=0}^{T}\|\sum_{t=0}^{T} \alpha^{(t-i)} \bd_i\|^2 )\notag\\&+\frac{\mu}{4 c_{\gamma}} \sum_{t=0}^{T}\|\alpha^{(t-i)} \bd_{i}\|^2 + \frac{\mu}{4} T\frac{\epsilon}{c_{\epsilon}}.
\end{align}

Next, we have
\begin{align}
    & \sum_{t=0}^{T}   \sum_{s \in [S]} \lambda_t^{s} \left[ f_s(\x_i) - f_s(\x_*) \right] \notag\\ = & \sum_{t=0}^{T}   \sum_{s \in [S]} \lambda_t^{s} \left[ f_s(\x_{i+1}) - f_s(\x_*) -  f_s(\x_{i+1}) + f_s(\x_i)  \right]  \notag\\
    \leq &\sum_{t=0}^{T}  \sum_{s \in [S]} \lambda_t^{s} \left[ f_s(\x_{i+1}) - f_s(\x_*) \right]  +\sum_{t=0}^{T} \sum_{s \in [S]} \lambda_t^{s}  | f_s(\x_{i+1}) - f_s(\x_i) |\notag\\
     \stackrel{(a)}{\leq} &\frac{1}{2 \eta} \left( (1-\frac{3\mu\eta}{4}) \sum_{i=0 }^{T} \| \x_i - \x_* \|^2 -  \sum_{t=0}^{T}\| \x_{i+1} - \x_* \|^2 \right) \notag\\& - (\frac{1}{2} \eta-\frac{\mu}{8}\eta^2 - \frac{1}{2}L \eta^2-
    \frac{4}{\mu} \frac{L^2 q  \eta^2}{|\mathcal{A}|}  -[\frac{\eta}{4}- \frac{\eta^3 q}{2}  \frac{L^2}{|\mathcal{A}|}] -\frac{\mu}{4 c_{\gamma}})\sum_{t=0}^{T}  \|\alpha^{(t-i)} \bd_i\|^2+ \frac{\mu}{4} T\frac{\epsilon}{c_{\epsilon}},
\end{align}
where $(a)$ follows from Eqs. \eqref{eqs90}.
Let $|\mathcal{A}|=q= \lceil\sqrt{n}\rceil $ and $\eta \leq \min\{\frac{1}{2\mu},\frac{1}{8L},\frac{\mu}{64L^2} \},c_{\gamma}\geq \frac{8\mu}{\eta}, c_{\epsilon}\geq \frac{\mu}{2}$, we have $(\frac{1}{2} \eta-\frac{\mu}{8}\eta^2 - \frac{1}{2}L \eta^2-
    \frac{4}{\mu} \frac{L^2 q  \eta^2}{|\mathcal{A}|}  -[\frac{\eta}{4}- \frac{\eta^3 q}{2}  \frac{L^2}{|\mathcal{A}|}] -\frac{\mu}{4 c_{\gamma}})> \frac{\eta}{32}>0$
    
    Thus, we have
\begin{align}
    & \sum_{t=0}^{T}   \sum_{s \in [S]} \lambda_t^{s} \left[ f_s(\x_i) - f_s(\x_*) \right]\notag\\&\leq \frac{1}{2 \eta} \left( (1-\frac{3\mu\eta}{4}) \sum_{i=0 }^{T} \| \x_i - \x_* \|^2 -  \sum_{t=0}^{T}\| \x_{i+1} - \x_* \|^2\right) +\frac{\epsilon}{2}.
\end{align}

Then, we have 

\begin{align}
    & \mathbb{E}  [\sum_{s \in [S]} \lambda_t^{s} \left[ f_s(\x_t) - f_s(\x_*) \right] ]\notag\\&\leq \frac{1}{2 \eta} \left( (1-\frac{3\mu\eta}{4}) \mathbb{E} \| \x_t - \x_* \|^2 -  \mathbb{E}\| \x_{t+1} - \x_* \|^2\right)+\frac{\epsilon}{2}.
\end{align}
Based on Assumption. \ref{assump: add} and averaging using weight $w_t = ( 1 - \frac{3\mu \eta }{4})^{1-t}$ and using such weight to pick output $\x$.
By using Lemma 1 in \cite{Karimireddy2020SCAFFOLD} with $\eta \geq \frac{1}{uR}$, we have

\begin{align}
    \mathbb{E}\|\x_t-\x^*\|^2 &\leq \| \x_0 - \x_* \|^2 \mu \exp( - \frac{3 \eta \mu T}{4}) \\
    &= \mathcal{O}(\mu \exp( - \mu T)).
\end{align}

Then we have the convergence rate $   \mathbb{E}\|\x_t-\x^*\|^2  = \mathcal{O}(\mu \exp( - \mu T))$.

The total sample complexity can be calculated as:
	$\lceil \frac{T}{q} \rceil n + T\cdot |\mathcal{A}| \leq  \frac{T+q}{q}n + T\sqrt{n}= T\sqrt{n}+n+T\sqrt{n}=O(n+ \sqrt{n} \ln ({\mu/\epsilon})$.
	Thus, the overall sample complexity is $\mathcal{O}(n+ \sqrt{n} \ln ({\mu/\epsilon})$.
	This completes the proof. 

\end{proof}

\section{Additional experiment results} \label{sec:addexp}

\textbf{1) Strongly-Convex Optimization:}

We conducted experiments to assess the performance of our algorithms on a strongly-convex optimization problem, where $\mathbf{F}(\x)=[f_1(\x) = \x^2, f_2(\x) = {e}^{-\x}]$. For this experiment, we selected hyperparameters $\eta=0.005$ and $\alpha=0.3$, while introducing stochasticity into the gradient by adding Gaussian noise with a range of (-1, 1).
As shown in Fig. \ref{fig_compare_sc}, it is evident that all of the algorithms successfully achieved convergence. Notably, the momentum-based algorithms, namely MOCO, \algmns, and \algmpns, exhibited faster convergence compared to MGD, MSGD, \algns, and \algpns. We would also like to note that there isn't a significant difference between the stochastic algorithms (SMGD, MGD) and other algorithms. This is not necessarily because the stochastic algorithms are inferior, but perhaps because the strongly-convex function in question is too simplistic.

\begin{figure}[h]
  \centering
  \includegraphics[width=0.3\textwidth]{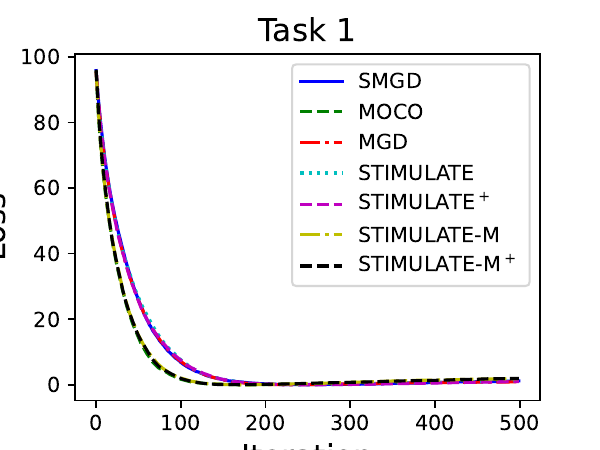}
  \includegraphics[width=0.3\textwidth]{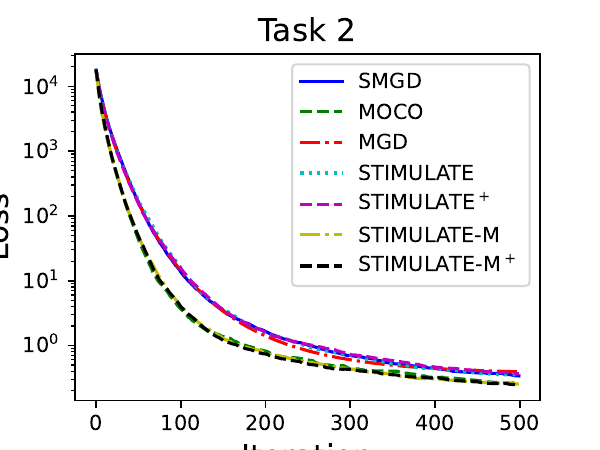}
\caption{Convergence comparison on strongly-convex optimization problem.}

\label{fig_compare_sc}
\end{figure}

\textbf{2) Eight-Objective Experiments on River Flow Dataset:}

\begin{wrapfigure}{r}{0.27\textwidth}
  \includegraphics[width=0.2\textwidth]{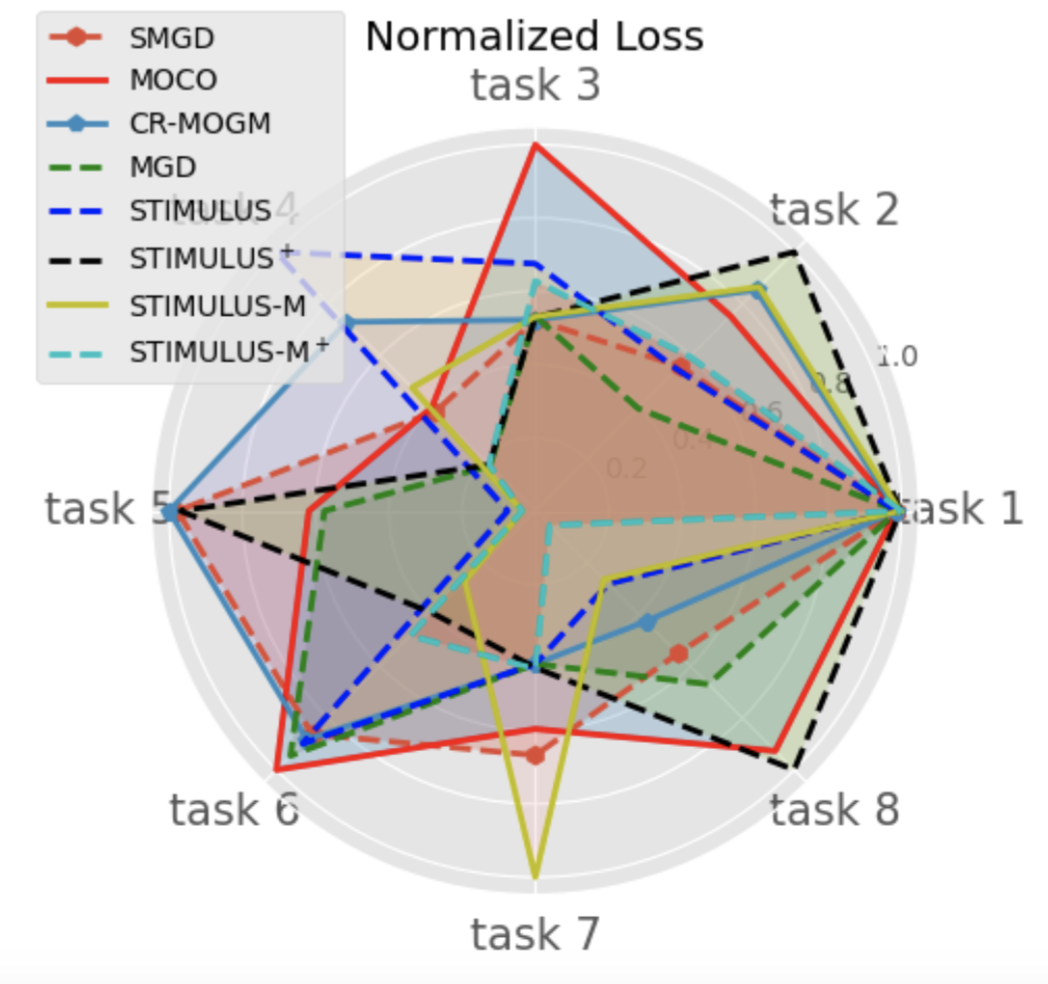}
\caption{Training loss convergence comparison (8-objective).}
\label{fig_compare_8tasks}
\end{wrapfigure}

We further test our algorithms on an 8-task problem with the river flow dataset~\citep{nie2017image}, which is for flow prediction at eight locations in the Mississippi river network. 
%
In this experiment, we set $\eta=0.001, \alpha=0.1$, the batch size for MOCO, CR-MOGM and SMGD is $8$, the full batch size for MGD is $128$, and the inner loop batch size $|\mathcal{N}_s|$ for \algns, \algmns, \algpns, \algmpns is eight. 
To better visualize different tasks, we plot the normalized loss in a radar chart as shown in Fig.~\ref{fig_compare_8tasks}, where we can see that our \alg algorithms achieve a much smaller footprint, which is desirable. 
Further, we compare the sample complexity results of all algorithms in Table~\ref{table_river}, which reveals a significant reduction in sample utilization by \algpns/\algmp compared to MGD, while achieving a much better loss compared to SGMD and MOCO (cf. Fig.~\ref{fig_compare_8tasks}).

\begin{table*}[h]
\centering
\caption{Results of normalized loss with the river flow dataset and learning tasks.}
{
\begin{tabular}{lcccccccccc}
\toprule
 & \multirow{2}{*}{\textbf{\!\!\!\!\!\!\!\!\!\!\!\!\!\!\!\!$\#$ of samples}} & \multicolumn{5}{c}{\textbf{Tasks}} \\
\cmidrule(lr){3-10} 
 && \multicolumn{1}{c}{0} & \multicolumn{1}{c}{1} & \multicolumn{1}{c}{2} & \multicolumn{1}{c}{3} & \multicolumn{1}{c}{4} & \multicolumn{1}{c}{5} & \multicolumn{1}{c}{6} & \multicolumn{1}{c}{7} \\
\midrule
 SMGD &8000 & 0.985&0.558&0.521&0.384& 1&
       0.862& 0.667& 0.550 \\
 MOCO &8000 & 0.985&0.753& 1&0.399& 0.632&1& 0.595& 0.926  \\
 MGDA & 128000 &0.989& 0.396&0.532& 0.174& 0.589&
       0.945&0.417&0.669&\\
\algns&  27200  &0.985& 0.546&0.675&1& 0.077&
       0.898&0.417&0.281 \\
\algpns & 20947 & 0.996&1&0.528&0.178&0.990&
       0.395&0.427& 1\\
\algmns&  27200 & 0.996&0.864&0.530&0.475&0.036&
       0.271& 1&0.264\\
\algmpns &21085 & 1&0.596& 0.627& 0.1781&0.0376&
       0.482& 0.430& 0.055  \\
\bottomrule
\label{table_river}
\end{tabular}}
\end{table*}

\textbf{3) Ablation study on momentum in STIMULUS-M:}

\begin{table}[h]
    \centering
    \caption{Loss value vs. Iteration on tasks L and R of STIMULUS-M.}
    \begin{tabular}{c|cccc|cccc}
        \toprule
        \multirow{2}{*}{Momentum Term $\alpha$} & \multicolumn{4}{c|}{Task L} & \multicolumn{4}{c}{Task R} \\
        \cmidrule(lr){2-5} \cmidrule(lr){6-9}
        & 100 & 200 & 300 & 500 & 100 & 200 & 300 & 500 \\
        \midrule
        0.1 & 0.0228 & 0.0207 & 0.0203 & 0.0153 & 0.0229 & 0.0223 & 0.0205 & 0.0215 \\
        0.3 & 0.0228 & 0.0182 & 0.0179 & 0.0120 & 0.0223 & 0.0191 & 0.0168 & 0.0143 \\
        0.5 & 0.0227 & 0.0174 & 0.0146 & 0.0078 & 0.0215 & 0.0180 & 0.0124 & 0.0091 \\
        0.8 & 0.0225 & 0.0158 & 0.0127 & 0.0065 & 0.0210 & 0.0152 & 0.0113 & 0.0078 \\
        \bottomrule
    \end{tabular}
    \label{tablem}
\end{table}

We performed additional experiments to analyze the impact of varying the momentum term in our proposed STIMULUS-M algorithm, as shown in Table \ref{tablem}, on the classification task of the MultiMNIST dataset. The experimental settings are consistent with those in Section 5.1 of the main paper. These results indicate that a larger momentum term leads to faster convergence.

\end{document}